\definecolor{darkmidnightblue}{rgb}{0.0, 0.2, 0.4}
\definecolor{darkpowderblue}{rgb}{0.0, 0.2, 0.6}
\definecolor{dukeblue}{rgb}{0.0, 0.0, 0.61}
\definecolor{noonblue}{HTML}{e5eef7}
\definecolor{chromered}{HTML}{f14233}
\definecolor{midnightblue}{HTML}{0059b3}
\definecolor{darkgreen}{HTML}{0e6029}
\newcommand{\Var}{\operatorname{Var}}
\newcommand{\cov}{\operatorname{Cov}}
\renewcommand{\leq}{\leqslant}
\renewcommand{\geq}{\geqslant}
\renewcommand{\le}{\leqslant}
\newcommand{\argmin}{\mathop{\mathrm{arg}\,\mathrm{min}}}
\newcommand{\wt}{\widetilde}
\renewcommand{\ln}{\log}
\def\drift{\boldsymbol{b}}
\newcommand{\cond}{\,|\,}
\newcommand{\indep}{\perp\!\!\!\perp} 
\def\hat{\widehat}
\renewcommand{\top}{\mathsf{T}}
\newtheorem{proposition}{Proposition}
\newtheorem{theorem}{Theorem}
\newtheorem*{theorem*}{Theorem}
\newtheorem{lemma}{Lemma}
\theoremstyle{definition}
\newtheorem{definition}{Definition}
\newtheorem{assumption}{Assumption}
\newtheorem{remark}{Remark}
\newtheorem*{consequence*}{Consequence}
\crefname{assumption}{assumption}{assumptions}
\Crefname{assumption}{Assumption}{Assumptions}
\newcommand{\brr}[1]{\left( {#1} \right)}
\newcommand{\brs}[1]{\left[ {#1} \right]}
\newcommand*\numcircled[1]{\raisebox{2pt}{\tikz[baseline=(char.base)]{
            \node[shape=circle,draw,inner sep=1.2pt] (char) {\tiny #1};}}}
\def\bX{\boldsymbol{X}}
\def\bY{\boldsymbol{Y}}
\def\bZ{\boldsymbol{Z}}
\def\bW{\boldsymbol{W}}
\def\bU{\boldsymbol{U}}
\def\bV{\boldsymbol{V}}
\def\bB{\boldsymbol{B}}
\def\bx{\boldsymbol{x}}
\def\by{\boldsymbol{y}}
\def\bz{\boldsymbol{z}}
\def\bw{\boldsymbol{w}}
\def\bmu{\boldsymbol{\mu}}
\def\bfeta{\boldsymbol{\eta}}
\def\bzeta{\boldsymbol{\zeta}}
\def\btheta{\boldsymbol{\theta}}
\def\bphi{\boldsymbol{\phi}}
\def\bxi{\boldsymbol{\xi}}
\def\bSigma{\boldsymbol{\Sigma}}
\def\bDelta{\boldsymbol{\Delta}}
\def\score{\boldsymbol{s}}
\def\diamX{\mathfrak D_{\!\boldsymbol{X}}}
\def\wass{{\sf W}}
\def\rmd{{\rm d}}
\def\sfd{{\sf d}}
\def\forwM{\mathscr M_{\!{}_\rightarrow}}
\def\backM{\mathscr M_{{}_\leftarrow}}
\def\hatbackM{\bar{\mathscr M}_{{}_\leftarrow}}
\def\bfA{\mathbf A}
\def\bfB{\mathbf B}
\def\bfI{\mathbf I}
\def\bfH{\mathbf H}
\def\bfM{\mathbf M}
\def\bfU{\mathbf U}
\title{Assessing the quality of denoising diffusion models
in Wasserstein distance: noisy score and optimal bounds }
\author{%
  Vahan Arsenyan\footnotemark[1]\thanks{Equal Contribution} 
  \quad Elen Vardanyan\footnotemark[1] \quad 
  Arnak S.~Dalalyan
  \\ 
  CREST, ENSAE, IP Paris\\
  5 av. H. Le Chatelier, 
  91120 Palaiseau, France\\
}
\begin{document}

\vspace*{-15pt}
\maketitle

\vspace*{-20pt}
\begin{abstract}\vspace*{-10pt}
Generative modeling aims to produce new random examples 
from an unknown target distribution, given access to a 
finite collection of examples. Among the leading approaches, 
denoising diffusion probabilistic models (DDPMs) construct 
such examples by mapping a Brownian motion via a diffusion 
process driven by an estimated score function. In this work, 
we first provide empirical evidence that DDPMs are robust 
to constant-variance noise in the score evaluations. We then 
establish finite-sample guarantees in Wasserstein-2 distance 
that exhibit two key features: (i) they characterize and 
quantify the robustness of DDPMs to noisy score estimates, 
and (ii) they achieve faster convergence rates than previously 
known results. Furthermore, we observe that the obtained rates 
match those known in the Gaussian case, implying their 
optimality. 
\end{abstract}

\vspace*{-50pt}

\doparttoc 
\faketableofcontents 

\setlength\cftbeforesecskip{5pt}
\setcounter{parttocdepth}{2}
\part{} 
\parttoc 
\setcounter{parttocdepth}{2}

\newpage
\section{Introduction}

We study the problem of generative modeling, which aims 
to construct a mechanism capable of producing synthetic 
samples that mimic a target distribution $P^*$, given 
access to independent observations from $P^*$. This 
fundamental task lies at the core of numerous applications, 
including image, text, music, and molecule generation. Among 
the recent advances in this domain, Denoising Diffusion 
Probabilistic Models (DDPMs)---introduced in 
\cite{HoJA20}---have emerged as a 
remarkably effective class of generative models; see, 
\textit{e.g.}, \cite{Overview_diff1,Overview_diff2,tutorial_diff} 
for comprehensive overviews. In this work, we contribute to 
the growing theoretical understanding of DDPMs by analyzing 
several of their key properties and performance guarantees.

The central idea underlying DDPMs is to construct a 
transport map that transforms a simple source of randomness 
into a sample from the target distribution~$P^*$. More 
precisely, for any distribution~$P^*$, there exists a 
map---defined via a stochastic differential equation 
(SDE)---that takes as input a standard Gaussian 
vector~$\bxi_0$ and a standard Brownian motion~$\bW$, 
and outputs a random vector whose distribution coincides 
with~$P^*$. Importantly, only the drift term of the SDE 
depends on~$P^*$, and this dependence occurs through 
the score function, that is, the gradient of the log-density 
of a Gaussian-smoothed version of~$P^*$. This formulation 
reduces the generative modeling task to that of score 
estimation: one can estimate the score function from data and 
substitute this estimate into the SDE to approximately sample 
from~$P^*$.

For many commonly used datasets, such as CIFAR-10 and 
CelebA-HQ considered in \Cref{sec:numerical}, accurate 
estimators of the score function are available.
Consequently, generating a synthetic sample reduces to 
drawing a standard Gaussian vector together with the 
increments of a Brownian motion, and simulating the SDE 
defined by the pretrained score. This procedure requires 
multiple evaluations of the score estimator at different 
inputs. The first question we address in this paper is: 
what happens if each evaluation returns a value corrupted 
by additive centered noise? Such a scenario may arise, 
for instance, when the pretrained model is hosted on a 
remote server and communication introduces random 
perturbations, or when the score values are compressed 
using stochastic rounding to reduce transmission costs. 
Anticipating our main findings, we emphasize that---perhaps 
counterintuitively---we observe that adding even a constant 
level of noise to each score evaluation has only a limited 
effect on the quality of the generated samples; see 
\Cref{fig:noisy_gen_images} for an illustration.


The second question we investigate concerns the 
accuracy of DDPMs when performance is measured 
in terms of the Wasserstein distance. A natural 
criterion in this setting is the number of score 
function queries $K$ required to achieve a prescribed 
level of accuracy $\varepsilon$. For the Gaussian 
target distribution, elementary computations show 
that  $K = \mathcal{O}(\sqrt{D} / \varepsilon)$, 
where $D$ denotes the ambient dimension. Surprisingly, 
however, it remains unclear whether DDPMs maintain 
this level of accuracy for broader classes of 
distributions beyond the Gaussian case. 
\vspace*{-9pt}

\paragraph{Contributions.}
\label{contributions}
The main contributions of this work can be 
summarized as follows:
\vspace*{-7pt}
\begin{itemize}[leftmargin=20pt]\itemsep=-2pt
    \item We provide empirical evidence, based on 
    experiments with the CIFAR-10 and CelebA-HQ 
    datasets, that DDPMs are remarkably robust to 
    noise in the evaluation of the score function.
    \item We derive non-asymptotic upper bounds 
    on the Wasserstein-2 distance between the 
    target distribution and the distribution 
    induced by the DDPM with noisy score evaluations, 
    thus offering a theoretical explanation for the 
    observed robustness.
    \item Our bounds match—up to a multiplicative 
    constant—the rate $\sqrt{D}/\varepsilon$ of the 
    case of a Gaussian target. Moreover, our results 
    extend to a significantly broader class of 
    distributions, including compactly supported 
    semi-log-concave measures supported on 
    low-dimensional subspaces.
\end{itemize}
\vspace*{-10pt}

\begin{figure}[t]
    \centering
    \includegraphics[height = 0.24\linewidth]{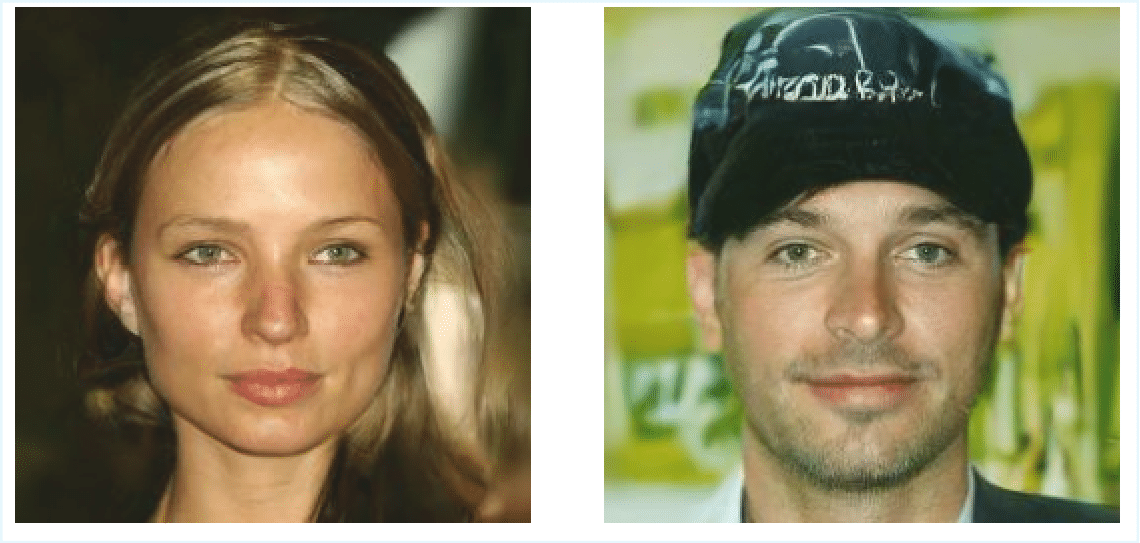}
    \includegraphics[height = 0.24\linewidth]{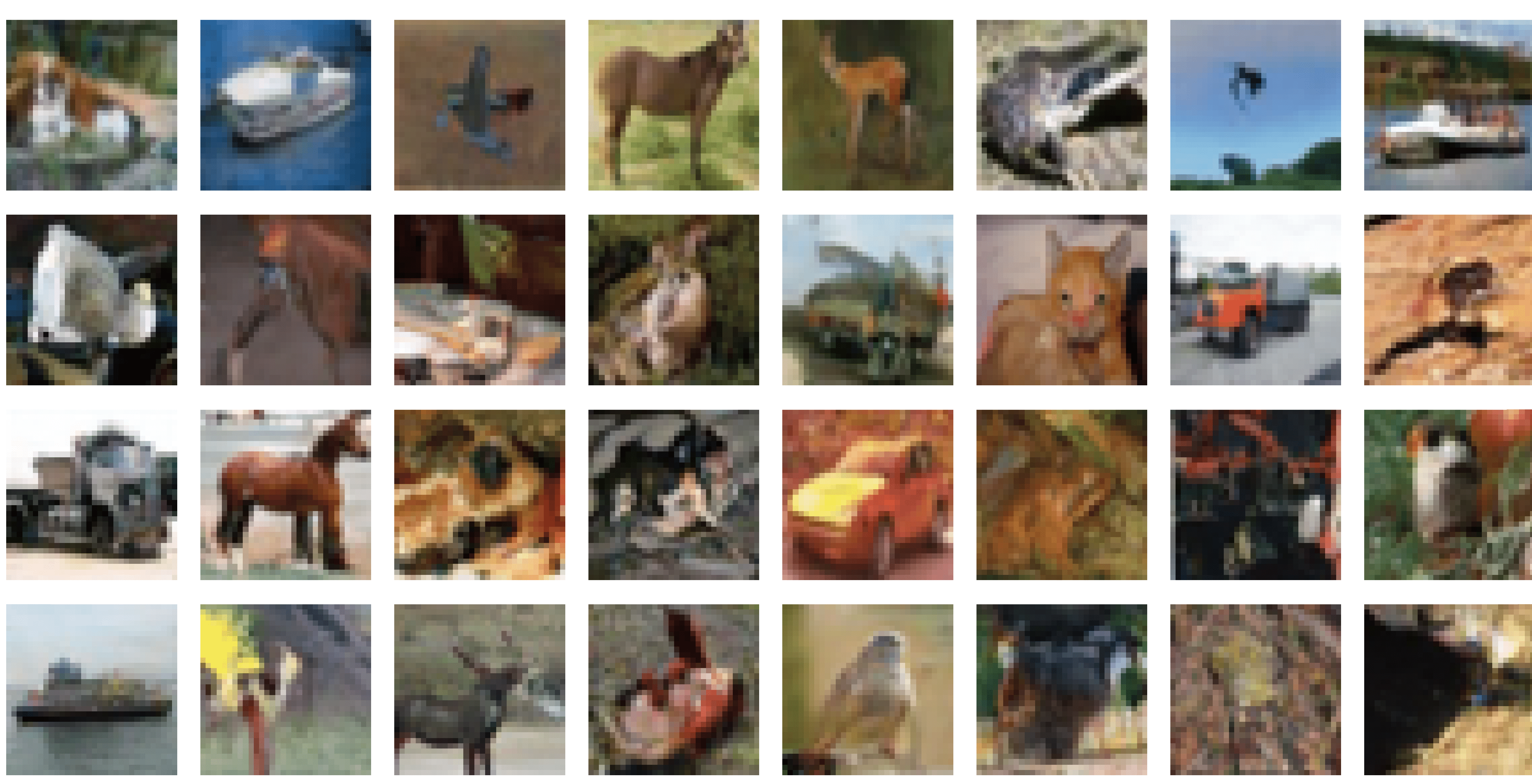}
    \caption{Generated images obtained by DDPM with a constant-level 
    noise added to the estimated score. Left: CelebA-HQ. 
    Right: CIFAR10. The result is visually as good as the noiseless one.}
    \label{fig:noisy_gen_images}
\end{figure}

\paragraph{Related work}
\cite{kwon2022scorebased} highlighted the connection 
between DDPMs and the Wasserstein distance. The first 
quantitative bounds---polynomial in the dimension and 
valid for a broad class of $P^*$---were established 
in \cite{ChenC0LSZ23}, covering several metrics. Unlike 
their result in total variation (TV) distance, their 
bound in Wasserstein distance has the poor scaling 
$D^5/\varepsilon^{12}$. Subsequent work significantly 
improved this rate: \cite{ChenL023} achieved $D^4
/\varepsilon^2$ under minimal assumptions, while 
\cite{Sabanis23, JMLR:v26:24-0902, Lu25, strasman2025, 
Ocello25} reduced it further to $D/\varepsilon^2$, 
assuming stronger conditions on $P^*$. Our paper 
closes the loop by proving that the optimal rate 
$\sqrt{D}/\varepsilon$ is achieved by the standard 
DDPM procedure. A related result by \cite{GaoZhu} 
establishes similar bounds for the probability flow 
ODE, but under more restrictive assumptions, such as 
strong log-concavity of $P^*$.

Over the past three years, substantial progress has also been made in establishing guarantees for DDPMs in total variation and Kullback–Leibler divergence under weak assumptions on $P^*$ \cite{ConfortiDS25, liang2025broadening, li2025odt, BentonBDD24, HEWC024}, including acceleration techniques such as parallel sampling, randomized midpoint, and Runge–Kutta methods \cite{ChenRYR24, Gupta, Wu24}. In parallel, a growing body of work investigates the statistical optimality of score-based models \cite{Suzuki, WibisonoWY24,holk2025}, as well as their ability to adapt to low-dimensional structure in high-dimensional ambient space \cite{debortoli2022, TangY24, li2024adapting, Wei2024, Azangulov24, Potaptchik}.
Analogous results for flow matching have been established
in \cite{kunkel2025}.

\vspace*{-7pt}
\paragraph{Notation} For $D\in\mathbb N$, $\bfI_D$ is the 
$D\times D$ identity matrix. We use notation $\bfA\prec \bfB$, 
$\bfA\preccurlyeq\bfB$, $\bfA\succ \bfB$, $\bfA\succcurlyeq 
\bfB$ to design that the matrix $\bfA-\bfB$ is, respectively, 
negative definite, negative semi-definite, positive definite 
and positive semi-definite. We denote by $\mathcal N_D (\bmu, 
\bSigma)$ the $D$-dimensional Gaussian distribution with mean 
$\bmu$ and covariance matrix $\bSigma$. We set $\gamma^D = 
\mathcal N_D(0, \bfI_D)$, and use the same notation $\gamma^D$ 
for the probability density function of $\mathcal N_D(0, 
\bfI_D)$. The norm of a vector is always understood as the 
Euclidean norm, whereas the norm of a matrix is the operator 
norm (the largest singular value). For two random vectors 
$\bX$ and $\bY$, $\bX\indep\bY$ means that $\bX$ and $\bY$ 
are independent. The Wasserstein-$q$ distance between two 
distributions $P$ and $Q$ is defined by
\begin{align}
    \wass_q^q(P,Q) = \inf_{\varrho\in\Gamma(P,Q)} \mathbf
    E_{(\bX,\bY)\sim\varrho}[\|\bX - \bY\|^q],
\end{align}
where $q\geqslant 1$ and $\Gamma(P,Q)$ is the set of all 
joint distributions with marginals $P$ and $Q$. For any 
function $g:[0,T]\times \mathbb R^D\to \mathbb R$, we will 
write $\nabla g$ and $\nabla^2 g$ for the gradient and the 
Hessian of $g$ with respect to its second variable. If 
$g:[0,T]\times \mathbb R^D \to\mathbb R^D$, we write 
$\mathrm{D}g$ for the differential of $g$ with respect to
its second variable. For each random vector $\bX$, we write
$\|\bX\|_{\mathbb L_2} = (\mathbf E[\|\bX\|_2^2])^{1/2}$. 

\section{Problem statement and conditions}
\label{sec:problem_statement}

The goal of this section is to set the framework of
denoising diffusion probabilistic models with randomized
score estimators and to state the conditions imposed
on the unknown target distribution. 

\subsection{The setting of randomized score estimators}
\label{ssec:2.1}
The setting considered in this paper slightly differs from 
those previously studied in the literature. For a fixed,
but unknown, distribution $P^*$ on $\mathbb{R}^D$, and 
for any $t>0$, we define $P^*_t$ to be the distribution 
of $\alpha_t\bX + \beta_t \bxi$, where $(\bX,\bxi)\sim P^*
\otimes \gamma^D$, $\alpha_t = e^{-t}$, and
$\beta_t = \sqrt{1-\alpha_t^2}$. The set $(P^*_t)_{t\geqslant 
0}$ can be seen as a curve in the space of probability measures 
interpolating between $P^*$ and $\gamma^D$, since  $P^*_0 = P^*$
and $P^*_\infty = \gamma^D$. For $t>0$, $P^*_t$ is 
absolutely continuous with respect to the Lebesgue measure
$\lambda^D$ on $\mathbb{R}^D$ with an infinitely differentiable
density. Therefore, we can define the score function $\score$ by
\begin{align}\label{eq:pi_t}
    \pi(t,\bx) = \frac{\rmd P^*_t}{\rmd\lambda^D}\, (\bx),
    \qquad \score(t,\bx) = \nabla \log \pi(t,\bx).
\end{align}
Since $P^*_t$ is unknown, we cannot access $\score(t,\bx)$. 
Instead, we have access to randomized and noisy evaluations 
of this function: 
for each query $(t,\bx)\in[0,\infty)\times \mathbb{R}^D$, we 
can observe a random vector $\widetilde\score(t,\bx)$ such that 
$\|\tilde \score(t,\bx) - \score(t,\bx)\|_{\mathbb{L}_2}$ is 
small. Our goal is to combine independent Gaussian random 
vectors and queries to the approximate score $\widetilde\score$ 
to build a random vector $\bZ$ in $\mathbb{R}^D$
having a distribution $P_Z$ close to $P^*$. To this end, we 
focus on the DDPM algorithm presented in \Cref{algo:1}.

\begin{center}
    \begin{minipage}{0.8\textwidth}
    \begin{tcolorbox}[
    colback=cyan!05,
    colframe=cyan!80!black,
    arc=4mm,
    boxrule=0.8pt,
    left=2mm, right=2mm, top=-1.8mm, bottom=-1mm
    ]
    \vspace{-10pt}
        \begin{algorithm}[H]
        \caption{Generation of $\bZ$ by the denoising diffusion 
        probabilistic model} \label{algo:1}
        \begin{algorithmic}[1]
        \REQUIRE Sequence $(t_1, \dots, t_{K+1})$ for some integer 
        $K\geqslant 1$
        \ENSURE Vector $\bZ = \bZ_{K+1}$
        \STATE Set $t_0 = 0$, $T = t_{K+1}$, and $\bZ_0 \sim\gamma^D$
        \FOR{$k = 0$ \TO $K$}
            \STATE Set $h_k = t_{k+1} - t_k$
            \STATE Generate $\bxi_{k+1} \sim \gamma^D$, 
            independent of all previous randomness
            \STATE Query $\tilde{\score}$ at $(t_k,\bZ_k)$
            \STATE Set $\bZ_{k+1} = (1 + h_k) \bZ_k + 2 h_k 
            \tilde{\score}(T-t_k, \bZ_k) + \sqrt{2h_k}\,
            \bxi_{k+1}$
        \ENDFOR
        \STATE \textbf{Output} $\bZ_{K+1}$
        \end{algorithmic}
        \end{algorithm}
    \end{tcolorbox}
    \end{minipage}
\end{center}

We postpone the discussion of the origin of this algorithm 
to \Cref{sec:3}. The main difference between our setting 
and prior work lies in the randomness of $\widetilde\score$, 
which goes beyond the randomness arising from the sample 
used to estimate the score. Let us provide concrete examples 
to illustrate our setting.

\textbf{Example 1 (Noisy score estimator).} Assume that 
the true score $\score(t,\bx)$ has been estimated by 
$\hat \score(t,\bx)$, for instance by fitting a deep 
neural network using a score matching algorithm. 
Due to issues such as communication constraints
or privacy concerns, we do not observe $\hat
\score(t,\bx)$ directly, but rather a noisy version 
$\widetilde\score(t,\bx) = \widehat\score(t,\bx) + \bzeta$, 
where $\bzeta$ is a random vector, typically with zero mean 
and bounded variance.

\textbf{Example 2 (Compressed score estimator).} 
Assume again that an estimator $\hat \score$ is 
available, but only one of its coordinates can be queried 
at a time. At each iteration, we randomly choose  
$i\in \{1,\ldots,D\}$ uniformly and set
$\widetilde\score(t,\bx) = D\times\big(\widehat\score (t, 
\bx)^\top\boldsymbol{e}_i\big) \boldsymbol{e}_i$, where 
$\boldsymbol{e}_i$ is the $i$-th canonical basis vector.

\textbf{Example 3 (Randomized network weights).} 
The conventional approach fits the weights $\btheta$
of a neural net $\bphi(t,\bx;\btheta)$ to the unknown
score $\score(t,\bx)$ by minimizing the (estimated)
prediction error:
\begin{align}
    \min_{\btheta\in\mathbb{R}^p} R(P^*,\btheta),\quad 
    \text{where}\quad R(P^*,\btheta):= \int_0^T 
    \int_{\mathbb{R}^D} \|\bphi(t,\bx,\btheta) - 
    \score(t,\bx)\|^2\,\pi(t,\bx)\,\rmd \bx\,\rmd t.
\end{align}
One can instead minimize an estimator of the integrated 
error under a Gaussian prior by solving
\begin{align}
    \hat\bmu\in \argmin_{\bmu\in\mathbb{R}^p} \int_{\mathbb{R}^p} 
    R(P^*,\bmu + \sigma \bz)\,\gamma^p(\bz)\,\rmd \bz,
\end{align}
where $\sigma>0$ is a hyperparameter. This may lead
to a more robust score estimator. In this setting, 
the randomized estimator of the score at each query point
$(t,\bx)$ is $\bphi(t,\bx,\hat\bmu + \sigma \bzeta)$, with 
$\bzeta\sim \gamma^p$ generated independently by the user.

\subsection{Conditions on the target distribution}
\label{ssec:2.2}

The guarantees on the precision of the DDPM that we will 
state in the next section depend on the properties of
the target $P^*$. We will express these properties 
in terms of a function $\varphi$. 

\begin{assumption} \label{ass:1}
For a function $\varphi:\mathbb R_{>0}\to\mathbb R_{>0}$, 
we say that $P^*$ or $\bX$ satisfies \Cref{ass:1} 
with function $\varphi$ if, for $(\bX,\bxi) \sim 
P^* \otimes \gamma^D$,
it holds that $\Var\big(\bX \cond \bX + \sigma \bxi 
= \by \big)\preccurlyeq \varphi(\sigma)\,\bfI_D$ 
for all $\sigma > 0$.
\end{assumption}

Many distributions satisfy this assumption (see 
\Cref{app:A} for the proofs):
\vspace{-7pt}
\begin{itemize}[leftmargin = 20pt]\itemsep=-1pt
    \item[(a)] If $\bX$ has bounded support 
    $\mathcal{K}$ with 
    $\mathrm{diam}(\mathcal{K}) = 2\diamX$, 
    \Cref{ass:1} holds with  $\varphi(\sigma)\equiv 
    \diamX^2$;
    
    \item[(b)] Any $m$-strongly log-concave 
    distribution $P^*$ satisfies \Cref{ass:1} 
    with $\varphi(\sigma) = \frac{\sigma^2}{1 
    + m\sigma^2}$;
    
    \item[(c)] If $\bX$ is semi-log-concave with 
    constant\footnote{We recall that $\bX$ is 
    semi-log-concave \cite{Clarke1983} with constant $M\in\mathbb{R}$ 
    if $\bX$ has a density $\pi_X$ wrt 
    the Lebesgue measure and $-\log \pi_X(\bx) 
    + \tfrac{M}2 \|\bx\|^2$ is convex; see \cite{Vacher25} for an 
    application in sampling.} $M\geqslant 0$ 
    and has bounded support of diameter $2\diamX$, 
    then $\bX$ satisfies \Cref{ass:1} with $\varphi
    (\sigma) = \diamX^2 \wedge \frac{\sigma^2}{(1 - 
    M\sigma^2)_+}$;
    
    \item[(d)] If $\bX$ satisfies \Cref{ass:1} with 
    some function $\varphi$, $\bfU$ is a $D\times D$ 
    orthonormal matrix and $\boldsymbol{b}\in\mathbb R^D$, 
    then $\bfU\bX + \boldsymbol{b}$ satisfies 
    \Cref{ass:1} with the same $\varphi$;
    
    \item[(e)] If $\bX$ is obtained by concatenating 
    two independent vectors $\bX_1$ and $\bX_2$ satisfying 
    \Cref{ass:1} with the same function $\varphi$, 
    then $\bX$ satisfies \Cref{ass:1} with $\varphi$.

    \item[(f)] If $(\bW,\bzeta)\sim P_0\otimes \gamma^D$ 
    such that $\bW$ satisfies \Cref{ass:1} with the function
    $\varphi_0$, then, $\bX=\bW + 
    \tau\bzeta$ satisfies \Cref{ass:1} with the function
    $\varphi_\tau(\sigma) = \frac{\tau^2\sigma^2}{\tau^2+\sigma^2} 
    + \frac{\sigma^4\varphi_0(\sqrt{\tau^2+\sigma^2})}{(\tau^2 + \sigma^2)^2}.$

    \item[(g)] If $\bW$ is 
    supported by a bounded set of diameter $2\mathfrak D$ 
    and $\bzeta\indep \bW$ is $m$-strongly 
    log-concave with an $M$-Lipschitz score function,
    then $\bX = \bW + \bzeta$ satisfies \Cref{ass:1}
    with $\varphi(\sigma) = \frac{\sigma^2}{1+m\sigma^2} 
    + \frac{(M\mathfrak D \sigma^2)^2}{(1+M\sigma^2)^2}$.
\end{itemize}

\vspace{-5pt}
The main purpose of \Cref{ass:1} is to ensure that the 
drift coefficient of the backward diffusion process is 
strongly convex when the noise level is large and 
semi-log-concave for all noise levels. Moreover, 
the drift coefficient is always gradient-Lipschitz, 
with a Lipschitz constant depending on the noise level. 
These properties are summarized in the following 
result\footnote{The formula relating the Hessian of the
log-density to the conditional variance, stated 
in \Cref{prop:1} is often referred to as the second-order
Tweedie formula.}.

\begin{proposition}\label{prop:1}
    Let $\bX$ and $\bxi$ be random vectors in $\mathbb{R}^D$ 
    drawn from $P^*\otimes \gamma^D$. 
    For any $\alpha,\beta>0$, the density $\pi_Y$ of $\bY=
    \alpha \bX + \beta \bxi$ is twice continuously 
    differentiable and satisfies
    \begin{align}
        \nabla^2\log \pi_Y(\by) = \frac{\alpha^2}{\beta^4}
        \Var(\bX\cond\bY=\by) -\frac{1}{\beta^2}\bfI_D
        \succcurlyeq - \frac{1}{\beta^2}\bfI_D,
        \qquad \text{for all } \by\in \mathbb{R}^D.
    \end{align}
    Thus,  \Cref{ass:1} is equivalent to $\nabla^2\log \pi_Y 
    (\by)\preccurlyeq \frac{(\alpha^2\varphi(\beta/\alpha) 
    - \beta^2)}{\beta^4} \,\bfI_D$, for all $\by\in
    \mathbb{R}^D$, $\alpha,\beta>0$.
\end{proposition}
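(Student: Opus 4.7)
The plan is to express $\pi_Y$ as a Gaussian-kernel convolution of the law $P^*$, differentiate $\log\pi_Y$ twice, and reinterpret the resulting ratios as the conditional mean and covariance of $\bX$ given $\bY=\by$; this is the classical second-order Tweedie identity. Since $\bY=\alpha\bX+\beta\bxi$ with $\bxi\sim\gamma^D$ independent of $\bX$, one has
\begin{align*}
\pi_Y(\by) = \frac{1}{(2\pi\beta^2)^{D/2}} \int_{\mathbb R^D} \exp\!\left(-\frac{\|\by - \alpha\bx\|^2}{2\beta^2}\right) P^*(\rmd\bx),\qquad \by\in\mathbb R^D,
\end{align*}
and the super-exponential decay of the Gaussian kernel in $\bx$ (for every fixed $\by$) makes this infinitely differentiable in $\by$ and legitimizes differentiation under the integral sign to any order.

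By Bayes' rule, the conditional distribution of $\bX$ given $\bY=\by$ equals the measure $P^*$ reweighted by the Gaussian kernel $\exp(-\|\by-\alpha\bx\|^2/(2\beta^2))$ with normalizing constant $Z(\by)=(2\pi\beta^2)^{D/2}\pi_Y(\by)$. Computing $\nabla\log\pi_Y=\nabla\pi_Y/\pi_Y$ and recognizing the resulting ratio as a conditional expectation yields the first-order Tweedie identity
\begin{align*}
\nabla\log\pi_Y(\by) = \frac{\alpha}{\beta^2}\,\mathbf E[\bX\cond\bY=\by] - \frac{1}{\beta^2}\by.
\end{align*}
Differentiating once more and applying the quotient rule to $\mathbf E[\bX\cond\bY=\by]$ (the $y_j$-derivative produces a factor $\alpha(y_j-\alpha x_j)/\beta^2$ inside the conditional expectation, which after subtracting the normalization contribution collapses into the conditional second central moment) gives $\nabla_{\!\by}\,\mathbf E[\bX\cond\bY=\by]=(\alpha/\beta^2)\,\Var(\bX\cond\bY=\by)$, whence
\begin{align*}
\nabla^2\log\pi_Y(\by) = \frac{\alpha^2}{\beta^4}\,\Var(\bX\cond\bY=\by) - \frac{1}{\beta^2}\bfI_D;
\end{align*}
the lower bound $\succcurlyeq-\beta^{-2}\bfI_D$ then follows from positive semi-definiteness of the conditional covariance.

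For the equivalence with \Cref{ass:1}, I rewrite $\bY=\alpha(\bX+\sigma\bxi)$ with $\sigma=\beta/\alpha$, so that $\Var(\bX\cond\bY=\by)=\Var(\bX\cond\bX+\sigma\bxi=\by/\alpha)\preccurlyeq\varphi(\beta/\alpha)\,\bfI_D$ under \Cref{ass:1}, and the claimed upper bound on $\nabla^2\log\pi_Y$ follows by substitution into the main identity. The only nontrivial technical point is justifying differentiation under the integral, but this is essentially automatic: for every fixed $\by$ the weight $\exp(-\|\by-\alpha\bx\|^2/(2\beta^2))$ dominates every polynomial in $\|\bx\|$, so $\mathbf E[\|\bX\|^k\cond\bY=\by]<\infty$ for every $k\geqslant 0$, with no moment assumption on $P^*$ required. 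In short, the statement is a bookkeeping exercise once Bayes' rule and dominated convergence are in place; the substantive content is simply the identification of the two integral ratios as conditional moments.
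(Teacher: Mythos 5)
Your proof is correct. It is worth noting that the paper does not actually prove Proposition~\ref{prop:1} anywhere: it cites the first-order Tweedie identity \eqref{eq:grad_as_exp} from Efron's reference and then attributes the Hessian-level identity to the ``second-order Tweedie formula'' via a footnote, treating it as standard. Your proposal fills this in with a self-contained derivation---differentiate the Gaussian-convolution representation once to get the first-order identity, once more (quotient rule on the conditional mean, which produces the conditional second central moment) to get the stated Hessian identity, then read off the PSD lower bound from $\Var(\bX\cond\bY=\by)\succcurlyeq 0$. Your computation $\mathrm D_{\by}\,\mathbf E[\bX\cond\bY=\by]=\tfrac{\alpha}{\beta^2}\Var(\bX\cond\bY=\by)$ checks out, as does the scaling argument for the equivalence with \Cref{ass:1}: writing $\bY=\alpha(\bX+\sigma\bxi)$ with $\sigma=\beta/\alpha$ shows $\Var(\bX\cond\bY=\by)=\Var(\bX\cond\bX+\sigma\bxi=\by/\alpha)$, and the identity then makes the two matrix inequalities trivially equivalent.

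Two small remarks. First, you only spell out the forward implication of the ``equivalence'' explicitly; the converse is immediate from the same identity (any bound on the Hessian transfers back to a bound on the conditional covariance), but saying so in one line would make the argument fully airtight. Second, your justification for differentiating under the integral sign is a touch informal---dominated convergence wants a locally uniform (in $\by$) integrable envelope, not just fixed-$\by$ decay. This is easy to supply (bound $\exp(-\|\by-\alpha\bx\|^2/(2\beta^2))$ and its derivatives uniformly over a small ball in $\by$ by a slightly wider Gaussian envelope in $\bx$), and since $P^*$ is a probability measure no moment assumption is needed, exactly as you observe. Neither point is a genuine gap; the proof is sound.
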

The last inequality above implies that if $\varphi(\beta/\alpha)
\leqslant (\beta/\alpha)^2$ , the distribution of $\bY = 
\alpha\bX + \beta \bxi$ is log-concave, and it is strongly 
log-concave if the inequality is strict. 

\subsection{Conditions on the estimated score}
\label{ssec:score_ass}

As mentioned in \Cref{ssec:2.1}, we consider randomized estimators 
\(\widetilde\score\) of the true score function \(\score\).  
The mean squared error of such an estimator can be decomposed into a bias and a variance term:
\begin{align}
    \mathbf{E}\big[\left\|\widetilde\score(t,\bx) - \score(t,\bx)\right\|^2\big]
    = \left\|\mathbf{E}\left[\widetilde\score(t,\bx)\right] - \score(t,\bx)\right\|^2
    + \mathbf{E}\big[\left\|\widetilde\score(t,\bx) - \mathbf{E}\left[\widetilde\score(t,\bx)\right]\right\|^2\big].
\end{align}
In what follows, we analyze separately the impact of the bias 
and the variance on the overall error.  As we will see, the 
variance term has a much weaker influence on the final accuracy 
than the bias term.  To reflect this difference, we introduce 
the following assumption.

\begin{assumption}\label{ass:2}
    There are constants $\varepsilon^b_{\sf score}$ and  
    $\varepsilon_{\sf score}^v$ 
    such that for all $t\in \{t_k:k\leqslant K\}$ of \Cref{algo:1}, 
    \begin{align}
        \sup_{\bx\in\mathbb R^D} \left\| \mathbf{E}\left[ 
        \widetilde\score (t,\bx)\right] - \score(t,\bx)\right\| 
        \leqslant D^{1/2}\varepsilon^b_{\sf score},\quad
        \sup_{\bx\in\mathbb R^D} \left\|\widetilde\score(t,\bx) 
        - \mathbf{E}\left[\widetilde\score (t,\bx)\right]
        \right\|_{\mathbb L^2}\leqslant D^{1/2} \varepsilon^v_{\sf score}.
    \end{align}
\end{assumption}
It is worth emphasizing that our version of this 
assumption imposes uniformity over all $\bx \in
\mathbb{R}^D$ and $t \in {t_k : k \leqslant K}$. 
This is a stronger condition than the one used 
in previous work \cite{ChenL023}, where a similar 
assumption is stated in terms of an $\mathbb L_2$-norm
with respect to $P^*_t$, rather than 
a supremum, and involves a weighted average over $t$. While 
it may be possible to relax the requirement involving the 
maximum over the time grid, the uniformity with respect to 
$\bx$ appears to be more difficult to replace by the $\mathbb L_2$-norm with respect to $P^*_t$. It is 
important to note, however, that what is actually required 
for our proof is an $\mathbb{L}_2$ bound with respect to 
the distribution of the DDPM output at time $t$. 


\section{Score-Based Generative Modeling: preliminary 
considerations}\label{sec:3}


\subsection{Forward Process}

The starting point of a DDPM is the forward process given as a solution to a stochastic 
differential equation (SDE). The simplest and the most widespread choice is the 
Ornstein–Uhlenbeck process
\begin{align}
\label{eq:ou-forward}
    \rmd \bX_t = -\bX_t \,\rmd t + \sqrt{2}\, \rmd \bB_t, 
		\qquad t \geq 0,\quad \bX_0 \sim P^*,\qquad 
		(\bB_t)_{t\geqslant 0} \indep \bX_0,
\end{align}
where $(\bB_t)_{t\geqslant 0}$  is a standard Brownian 
motion in $\mathbb{R}^D$. The Ornstein–Uhlenbeck process is a 
time-homogeneous Markov process which is also a Gaussian 
process, with stationary distribution equal to the 
standard Gaussian distribution $\gamma^D$ on $\mathbb{R}^D$. 
The forward process has the interpretation of transforming 
samples from the data generating distribution $P^*$ into the 
latent distribution. From the classical theory of 
Markov diffusions, it is known that $P^*_t := \text{law}
(\bX_t)$ converges to $\gamma_D$ exponentially
fast in various divergences and metrics such as the 
2-Wasserstein metric $\wass_2$: $\wass_2 (P^*_t; \gamma^D) 
\leq e^{-t} \wass_2(P_0; \gamma^D)$, see for instance 
\cite{villani2008optimal}.


\subsection{Reverse Process: continuous-time version}\label{sec:reverse}

If we reverse the forward process in time, we obtain 
a process that transforms the latent distribution into the target 
distribution $P^*$, which is the aim of generative modeling. 
Fix some large time horizon $T>0$ and set $\bY_t:= \bX_{T-t}$, 
then $\text{law}(\bY_0) = \text{law}(\bX_T)$ is close to the 
Gaussian distribution $\gamma^D$. Notably, the
dynamics of the reverse process can also be described by a 
stochastic differential equation, as stated in the next result.

\begin{theorem}[\cite{anderson1982}]   
If $\bX$ is a solution to \eqref{eq:ou-forward} and $\bY_t = 
\bX_{T-t}$, then there exists a Brownian Motion 
$\wt{\bB} \indep \bY_0$ such that
\begin{align}\label{eq:backward}
    \rmd \bY_t = (\bY_t +2\nabla \log \pi(T-t, \bY_t)) 
    \,\rmd t + \sqrt{2}\, \rmd \wt{\bB}_t,\quad 
    0\leq t\leq T,
\end{align}
where $\pi(t, \bx) \propto \int_{\mathbb R^D} \gamma^D 
\big((\bx-\alpha_t\by)/\beta_t\big)\,P^*(\rmd \by)$, 
$\alpha_t = e^{-t}$ and $\beta_t^2 = 1-e^{-2t}$.
\end{theorem}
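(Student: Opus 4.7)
The plan is to identify the drift of the time-reversed SDE by matching the Fokker-Planck equations of the forward process and the hypothesized reverse process, and then to invoke the Haussmann-Pardoux formalization of Anderson's time-reversal theorem to produce the Brownian motion $\wt{\bB}$.

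First I would write down the Kolmogorov forward equation of \eqref{eq:ou-forward}: since the forward drift is $-\bx$ and the forward diffusion coefficient is $\sqrt{2}\,\bfI_D$, the density $\pi(t,\bx)$ of $\bX_t$ satisfies
\begin{align}
    \partial_t \pi(t,\bx) \;=\; \nabla\cdot\big(\bx\,\pi(t,\bx)\big) \,+\, \Delta \pi(t,\bx).
\end{align}
The density $\tilde\pi(t,\bx) := \pi(T-t,\bx)$ of $\bY_t=\bX_{T-t}$ then satisfies $\partial_t \tilde\pi = -\nabla\cdot(\bx\,\tilde\pi) - \Delta \tilde\pi$. Postulating that $\bY$ solves an SDE with unknown drift $\drift(t,\bx)$ and diffusion $\sqrt{2}\,\bfI_D$, its Fokker-Planck equation reads $\partial_t \tilde\pi = -\nabla\cdot(\drift\,\tilde\pi) + \Delta \tilde\pi$. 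Matching the two and using the identity $2\Delta \tilde\pi = 2\nabla\cdot(\tilde\pi\,\nabla\log\tilde\pi)$ yields $\nabla\cdot\big[(\drift - \bx - 2\nabla\log\tilde\pi)\,\tilde\pi\big] = 0$, which forces $\drift(t,\bx) = \bx + 2\nabla\log\pi(T-t,\bx)$, the drift announced in \eqref{eq:backward}.

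Second, to upgrade this density-level identification to a genuine SDE, I would appeal to the Haussmann-Pardoux theorem. Its regularity hypotheses hold here because, for any $t>0$, $\pi(t,\cdot)$ is the convolution of $P^*$ with a Gaussian kernel of bandwidth $\beta_t>0$, hence is infinitely differentiable with at most Gaussian tails, and $\nabla\log\pi(t,\bx)$ grows at most linearly in $\bx$. Setting
\begin{align}
    \wt{\bB}_t \;:=\; \tfrac{1}{\sqrt{2}}\Big(\bY_t - \bY_0 - \int_0^t \big(\bY_s + 2\nabla\log\pi(T-s,\bY_s)\big)\,\rmd s\Big),
\end{align}
the Haussmann-Pardoux argument shows that $\wt{\bB}$ is a Brownian motion in the natural filtration $(\mathcal G_t)$ of $(\bY_s)_{s\leqslant t}$. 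Independence from $\bY_0$ then follows from the standard argument that $\bY_0$ is $\mathcal G_0$-measurable while the increments $\wt{\bB}_t = \wt{\bB}_t - \wt{\bB}_0$ are independent of $\mathcal G_0$.

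The main obstacle is the potential blow-up of $\nabla\log\pi(s,\bx)$ as $s\downarrow 0$ (equivalently, as $t\uparrow T$), where $P^*_s$ approaches the possibly singular target $P^*$. The standard remedy is to establish the reverse SDE first on $[0,T-\delta]$ for arbitrary $\delta>0$, exploiting the uniform smoothness of $\pi(s,\cdot)$ for $s\geqslant \delta$, and then let $\delta\downarrow 0$ after verifying the requisite integrability of the drift. In our framework this is painless because $T<\infty$ is fixed and the structural assumptions on $P^*$ from \Cref{ssec:2.2} (in particular, \Cref{ass:1} via \Cref{prop:1}) prevent pathological behaviour of the score on the interior of $[0,T]$.
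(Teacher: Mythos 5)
The paper does not prove this statement: it is imported verbatim from \cite{anderson1982} (and its rigorous formalization by Haussmann–Pardoux), so there is no ``paper proof'' to compare against. Your sketch is a reasonable outline of the standard argument, and the final paragraph correctly identifies the main technical annoyance (behaviour of the score as $s\downarrow 0$) and the standard $\delta$-truncation remedy.

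One point deserves tightening. The Fokker--Planck matching step does not ``force'' $\drift(t,\bx)=\bx+2\nabla\log\pi(T-t,\bx)$: the equation $\nabla\cdot\big[(\drift-\bx-2\nabla\log\tilde\pi)\tilde\pi\big]=0$ only determines $\drift$ modulo adding $\bm v/\tilde\pi$ for any divergence-free field $\bm v$. Indeed there is a whole one-parameter family of time reversals (probability-flow ODE, Anderson's SDE, and interpolations) all consistent with the same marginals; the choice of diffusion coefficient $\sqrt{2}\,\bfI_D$ in the reverse dynamics is an ansatz, not a consequence. What actually pins down Anderson's drift is the martingale/filtration argument of Haussmann--Pardoux, which you invoke in your second step; so the FP computation should be presented as motivating the candidate drift rather than as deriving it. With that rephrasing the sketch is a fair account of why the cited theorem holds, though a full proof would still need to verify the Haussmann--Pardoux integrability hypotheses for the score of $P^*_t$ near $t=0$ under the standing assumptions of the paper (cf.\ \Cref{ass:1} and \Cref{prop:1}, which give lower and upper Hessian bounds at each fixed $t>0$ but do not by themselves control $\int_0^T\mathbf E\|\nabla\log\pi(s,\bX_s)\|\,\rmd s$ without the moment hypothesis $\mathbf E\|\bX\|^2<\infty$).
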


Note that $\pi(t, x)$ in this theorem coincides with the one
defined in \eqref{eq:pi_t} and $\nabla \log \pi(T-t, \bY_t)$ 
is the score function $\score$  evaluated at scale $T-t$ and 
state $\bY_t$. 

The forward process transforms a data point 
$\bX_0$ drawn from $P^*$ into a point which is very
close to being drawn from the latent distribution. 
The reverse process aims to transform a
point $\bY_0$ drawn from the latent distribution
into a point drawn from $P^*$. To this end, 
we replace the unknown score function by its estimate 
$\wt{\score}$ based on a training sample $\bX_1,
\ldots,\bX_n\sim P^*$. The resulting process is 
defined as the solution to the SDE
\begin{align}
    \rmd \wt\bY_t = (\wt\bY_t +2\wt{\score}
		(T-t, \wt\bY_t))\,\rmd t + \sqrt{2}\, \rmd 
    \wt{\bB}_t,\qquad \wt\bY_0\sim \gamma^D,
    \qquad t\in[0,T].\label{backward2}
\end{align}
Both $\wt{\bY}$ and $\bY$ are processes on the space 
$\mathbb{C}([0, T], \mathbb{R}^D)$, differing in their 
initial conditions and drift terms. We wish to assess the 
distance between the 
distributions of their states at time $T$.

\subsection{Reverse Process: discrete-time version}
\label{ssec:reverse}

To be able to sample the final state---or any intermediate
one---of the reverse process efficiently, we  have to
discretize SDE \eqref{backward2}. To this end, we introduce
a sequence $(h_k)_{k\in\mathbb N}$ of positive numbers and
set\footnote{By convention, $t_0=0$.} $t_k = h_0 + \ldots + 
h_{k-1}$. We then define
\begin{align}\label{discr}
    \bZ_{k+1} = (1+{h_k})\bZ_k + 2h_k\wt\score(T-t_k,\bZ_k)
    + \sqrt{2h_k}\,\bxi_{k+1};\qquad \bZ_0\sim \gamma^D,
\end{align}
where $(\bxi_k)_{k\in\mathbb N}$ is a sequence of independent
standard Gaussian random variables. The rationale behind this
definition is that $\bZ_k$ has approximately the same law as 
$\wt\bY_{t_k}$, for every $k$. 

\begin{definition}
    The denoising diffusion probabilistic model is the distribution
    $P^{\sf DDPM} $ of the random vector $\bZ_{K+1}$ defined by 
    \eqref{discr}. It requires the choice of $K\in\mathbb N$, 
    the sequence $(t_1,\ldots,t_{K+1})$ 
    and the score estimators $\big(\wt\score (T-t_k,\cdot)
    \big)_{k=0,\ldots,K}$. 
\end{definition}
In this paper, we are interested in quantifying the accuracy
of the denoising diffusion generative model when the
error is measured in terms of the Wasserstein distance, that
is to upper bound $\wass_2(P^*,P^{\sf DDPM})$. In the
rest of this section, we motivate the choice of the 
Wasserstein distance and discuss the challenges related
to it in the framework of denoising diffusions.

\subsection{Relevance of the Wasserstein distance}
\label{ssec:wass}

Recent work on assessing the error of denoising diffusion 
models mainly focuses on accuracy measured by
the total variation distance and the Kullback-Leibler
divergence.  However, we believe that for
statistical purposes, measuring the quality of a generative
model in the Wasserstein distance is highly appealing.

To justify this point of view, let us recall that
the closeness of two distributions in total variation
distance or KL-divergence does not guarantee the
closeness of their means or their covariance matrices.
In sharp contrast, the Wasserstein-2 distance
offers such a guarantee since for any pair of distributions
$P$ and $Q$ defined on the same space, it holds that
\begin{align}
    \|\mathbf E_P[\bX] - \mathbf E_Q[\bX]\|
    \leqslant \wass_2(P,Q);\quad
    |(\mathbf E_P[\bX^\top\mathbf A\bX])^{1/2} -
    (\mathbf E_Q[\bX^\top\mathbf A\bX])^{1/2}|
    \leqslant \wass_2(P,Q),
\end{align}
for any matrix $\mathbf A$ satisfying $0\preccurlyeq
\mathbf A \preccurlyeq \mathbf I$. The fact that
the TV-distance and the KL-divergence are not suitable
for controlling the moments of distributions can be
demonstrated by the following example. Let $P$
be the exponential distribution with parameter 1 and,
for every $n\in \mathbb N $, set $P_n= (1-\delta_n)P +
\delta_n Q_n$, where $\delta_n=1/\sqrt{n}$ and $Q_n$
is the uniform distribution on $[n,n+2]$. One can
easily check that $P_n$ is very close to $P$ both
in the TV-distance and in the KL-divergence:
\begin{align}
    \sfd_{\sf TV}(P_n;P)\leqslant \delta_n =
    n^{-1/2};\qquad \sfd_{\sf KL} (P|\!|P_n)
    &= -\log(1-\delta_n)\leqslant
    2n^{-1/2},\quad n\geqslant 2.
\end{align}
Therefore, one could expect that $P_n$ is an excellent
generative model for the target $P$. However, the
generated examples will have a mean and variance
that explode as $n\to\infty$, and will be
infinitely far away from the mean and the variance
of the target, since $\mathbf E_{P_n}[X] = 1+n\delta_n
\geqslant n^{1/2}$ and $\mathbf E_{P_n}[X^2] \geqslant
2(1-\delta_n) + \delta_n n^2\geqslant n^{3/2}$.

\subsection{Challenges inherent to Wasserstein 
distance}

When the distance $\wass_q$ is employed 
to assess the quality of a DDPM, a mathematical challenge 
arises in quantifying the error due to the absence of 
the data-processing inequality for $\wass_q$-distance. 
Let us clarify this point. Consider a forward 
mechanism $\forwM$ that transforms the target  $P^*$ into a distribution $P_1^*$ which is 
close to an easy-to-sample-from latent distribution $Q_0$: 
$P_1^* := \forwM(P^*) \approx Q_0$. Furthermore, 
assume we have knowledge of the ``inverse'' forward 
mechanism, termed backward mechanics, which maps 
$P_1^*$ back to $P^*$: $\backM(P_1^*) = P^*$. The 
forward-backward methods of generative modeling 
then define the generative model as $Q_1 = \hatbackM(Q_0)$, 
where $\hatbackM$ represents a suitably regularized 
estimator of $\backM$. In DDPM, $\backM$ and $\hatbackM$ are specified 
by Markov kernels.

In this context, denoting $\sfd_F$ as the $F$-divergence 
for some $F$, the following relationship holds:
\begin{align}
    \sfd_F(Q_1|\!|P^*) &= \sfd_F\big(\hatbackM(Q_0) 
    \big|\!\big| P^*\big) \approx \sfd_F
    \big(\backM(Q_0)\big|\!\big|P^*\big)\\ 
    &= \sfd_F\big(\backM(Q_0)\big|\!\big|
    \backM(P_1^*)\big) \stackrel{\rm DPI}{\leqslant} 
    \sfd_F(Q_0|\!|P_1^*),
\end{align}
where the final equality derives from the 
data-processing inequality. Thus, the error of the 
generative distribution is dominated by how well the 
forward mechanism approximates the latent distribution, 
provided that the error of $\backM$ approximation is 
suitably small. These arguments were central in prior 
work\footnote{See \cite{ChenC0LSZ23,BentonBDD24,
Wei2024,ConfortiDS25} 
and the references therein} establishing  bounds on the 
error of denoising diffusion models measured in total 
variation distance and KL-divergence, which are 
$F$-divergences with $F(x)= \tfrac12|x-1|$ and 
$F(x) = x\log x$ respectively. However, this approach 
breaks down for the Wasserstein distance $\wass_q$, 
for which no suitable equivalent of the data processing 
inequality exists. 

In the case of denoising diffusion models, the 
qualitative difference between the Wasserstein distance 
and $F$-divergences (such as TV-distances and KL-divergence) 
can be formally demonstrated even when the backward kernel 
is known. This is illustrated in the following lemma.

\begin{lemma}\label{lem:noDPI}
    For any  $T>0$, let $Q_1^{T,\score}$ 
    be the distribution of the backward process 
    \eqref{backward2} at time $T$ with $\wt\score$ replaced by
    the true score $\score$. Let $\mathcal N$ be the set
    of all the Gaussian distributions. It then holds that
    \begin{align}
        \sup_{P^*\in \mathcal N} \frac{\sfd_{\mathsf{TV}
        }^2(Q_1^{T,\score}; P^*)}{\sfd_{\mathsf{TV}
        }^2(P^*;\gamma^D)} \bigvee \frac{\sfd_{\mathsf{KL}
        }(Q_1^{T,\score}|\!| P^*)}{\sfd_{\mathsf{KL}
        }(P^*|\!|\gamma^D)} 
        \leqslant e^{-2T};\qquad
        \sup_{P^*\in \mathcal N} \frac{\wass_2(Q_1^{T,\score};
        P^*)}{\wass_2(P^*;\gamma^D)} = 1.
    \end{align}
\end{lemma}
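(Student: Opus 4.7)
The proof has two essentially independent parts: the contraction for TV and KL, and the identity for $\wass_2$. In both parts, the central object is the Markov kernel $M_T$ induced by running the backward SDE \eqref{backward2} with the \emph{true} score for time $T$; Anderson's theorem gives the exact time-reversal identity $M_T(P^*_T) = P^*$, whereas by construction $Q_1^{T,\score} = M_T(\gamma^D)$. So both distributions arise as images of the same kernel applied to two different initial measures, a structure tailor-made for DPI-type arguments.

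For the TV and KL parts, I plan a two-step DPI-then-contract argument. The data processing inequality applied to $M_T$ gives $\sfd_{\sf TV}(Q_1^{T,\score},P^*) \leq \sfd_{\sf TV}(\gamma^D,P^*_T)$ together with the analogous KL bound involving $\gamma^D$ and $P^*_T$. I then invoke the exponential decay of the forward OU semigroup toward its stationary distribution $\gamma^D$. For KL, this is the standard Bakry--\'Emery contraction from the log-Sobolev inequality satisfied by $\gamma^D$, yielding $\sfd_{\sf KL}(P^*_T\|\gamma^D) \leq e^{-2T}\sfd_{\sf KL}(P^*\|\gamma^D)$. For TV, the analogous estimate requires a direct Gaussian computation: I would diagonalise $\bSigma$, reduce to a tensor of one-dimensional Gaussians, and verify the 1D bound $\sfd_{\sf TV}^2(\mathcal N(e^{-T}\mu,\sigma_T^2),\gamma^1) \leq e^{-2T}\sfd_{\sf TV}^2(\mathcal N(\mu,\sigma^2),\gamma^1)$ with $\sigma_T^2 = e^{-2T}\sigma^2 + 1 - e^{-2T}$, using that the mean gap $e^{-T}\mu$ and the variance gap $\sigma_T^2 - 1 = e^{-2T}(\sigma^2 - 1)$ both decay at rates compatible with the claimed $e^{-2T}$ factor.

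For the Wasserstein identity, the upper bound is obtained by the same DPI recipe but with the two contraction rates exactly cancelling. The backward SDE with Gaussian target is linear, so a synchronous coupling of two copies initialised from $\gamma^D$ and $P^*_T$ driven by the same Brownian motion yields $\wass_2(Q_1^{T,\score},P^*) \leq \|\Phi(T,0)\|\,\wass_2(\gamma^D,P^*_T)$, where $\Phi$ is the state transition matrix of the linear drift. A short integration in the eigenbasis of $\bSigma$ shows that the eigenvalues of $\Phi(T,0)$ are $e^{-T}\sigma_i^2/(e^{-2T}\sigma_i^2 + 1 - e^{-2T})$, each lying in $[e^{-T},e^T)$, so $\|\Phi(T,0)\| \leq e^T$. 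Combining with the forward Wasserstein contraction $\wass_2(P^*_T,\gamma^D) \leq e^{-T}\wass_2(P^*,\gamma^D)$ (itself an immediate synchronous-coupling estimate for the OU drift $-\bX$) gives $\wass_2(Q_1^{T,\score},P^*) \leq e^T \cdot e^{-T}\wass_2(P^*,\gamma^D) = \wass_2(P^*,\gamma^D)$. For the matching lower bound I take the family $P^* = \mathcal N(0,a\bfI_D)$ and let $a \to \infty$: the score $\score(\tau,\by) = -(e^{-2\tau}a + 1 - e^{-2\tau})^{-1}\by$ is of order $1/a$ and thus vanishing, the backward SDE degenerates to the expanding OU $\rmd\wt\bY_s \approx \wt\bY_s\,\rmd s + \sqrt{2}\,\rmd\wt\bB_s$ whose terminal covariance $(2e^{2T} - 1)\bfI_D$ is independent of $a$, and both $\wass_2^2(Q_1^{T,\score},P^*)$ and $\wass_2^2(P^*,\gamma^D)$ grow linearly in $a$ with the same leading coefficient $D$, producing a ratio that tends to $1$.

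The main technical difficulty I expect is the TV contraction, because TV lacks the convenient semigroup calculus that makes the KL case immediate, and a naive Pinsker step loses constants. I would likely pass through the Hellinger distance, for which Gaussian expressions are closed-form, and then combine $H^2 \leq \sfd_{\sf TV} \leq \sqrt{2}H$ with the explicit form of the mean and variance gaps to recover the rate. The Wasserstein part is essentially book-keeping once the state transition $\Phi(T,0)$ is identified, with the delicate point being the asymptotic matching of constants in the $a \to \infty$ limit that forces $\sup = 1$ rather than a strictly smaller value.
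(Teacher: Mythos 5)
Your overall architecture matches the paper's: data-processing inequality for the kernel $M_T$ that reverses the OU dynamics (so $M_T(P^*_T)=P^*$ and $M_T(\gamma^D)=Q_1^{T,\score}$), followed by exponential contraction of the forward OU semigroup to handle TV and KL; and, for the Wasserstein equality, a matching upper bound via contraction plus a lower bound obtained by letting the variance of a centered Gaussian $P^*$ blow up. The paper uses exactly the family $P^*=\mathcal N(0,(1+\sigma^2)\bfI_D)$, i.e.\ your $a=1+\sigma^2$, and explicitly integrates the linear backward SDE to find $\sigma^2_{\wt\bY_T}=(\sigma^2+1)\bigl[1-\tfrac{\sigma^2(\sigma^2+1)}{(\sigma^2+e^{2T})^2}\bigr]\to 2e^{2T}-1$, then takes $\sigma\to\infty$; your ``degenerates to the expanding OU with terminal covariance $(2e^{2T}-1)\bfI_D$'' is the same conclusion reached heuristically, and would need the explicit integrating-factor computation to be airtight. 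Your synchronous-coupling/state-transition argument for $\sup\le 1$ is a fleshed-out version of what the paper handles by citing the known $\wass_2$-contraction of the reverse diffusion with the true score for Gaussians.

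Where you genuinely diverge from the paper, and where I see a real difficulty, is the TV contraction $\sfd_{\sf TV}(P^*_T,\gamma^D)\le e^{-T}\sfd_{\sf TV}(P^*,\gamma^D)$. The paper cites references for this as a known OU property. Your plan to diagonalise $\bSigma$ and ``reduce to a tensor of one-dimensional Gaussians'' does not go through directly, because $\sfd_{\sf TV}$ does not tensorise over product measures. You anticipate this by routing through Hellinger, which does tensorise, but then the two-sided bound $H^2\le\sfd_{\sf TV}\le\sqrt{2}H$ has mismatched powers when you form the ratio $\sfd_{\sf TV}^2(P^*_T,\gamma^D)/\sfd_{\sf TV}^2(P^*,\gamma^D)$: you'd get a bound of the form $2H^2(P^*_T,\gamma^D)/H^4(P^*,\gamma^D)$, which is not a contraction statement and can blow up as $P^*\to\gamma^D$. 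To make the Hellinger route work you'd need to prove the contraction at the level of $H$ itself and then invoke that $\sfd_{\sf TV}$ and $H$ are comparable with matching exponents on the relevant Gaussian family, which is not automatic. I'd recommend either following the paper in citing the known TV decay for the OU semigroup on Gaussians, or proving the 1D contraction for a single marginal together with a multivariate argument that avoids the naive tensorisation.
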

This lemma reveals that when assessing accuracy through 
the rate of improvement in Wasserstein distance, the 
choice of parameter $T$ must be carefully tailored to 
the target distribution $P^*$. This might be less important
in the case of the TV-distance and the KL-divergence.

\section{Main results: bounds on the error in various settings}

In this section, we derive upper bounds on the Wasserstein-2 
distance between the distribution of the random vector generated 
by the DDPM (see \Cref{algo:1}) and the target distribution 
$P^*$. To this end, we employ a discretization scheme previously 
introduced in the literature \citep{ChenL023,BentonBDD24}. 
This scheme operates in two regimes: an arithmetic grid in the 
first half and a geometric grid in the second half; 
see \Cref{algo:2} for further details.
\begin{center}
    \begin{minipage}{0.85\textwidth}
    \begin{tcolorbox}[
    colback=cyan!05,
    colframe=cyan!80!black,
    arc=4mm,
    boxrule=0.8pt,
    left=2mm, right=2mm, top=-1.7mm, bottom=-1mm
    ]
    \vspace{-10pt}
        \begin{algorithm}[H]
        \caption{Definition of the discretization time steps} \label{algo:2}
        \begin{algorithmic}[1]
        \REQUIRE $\delta,a,T_1>0$, 
        and $K_0\in\mathbb N_{>1}$
        \ENSURE Sequence $t_0<t_1<\ldots<t_{K+1}$
        \STATE Set $t_0 = 0$, $K = 2K_0$, $t_{K+1}=T_1+\tfrac12\log(6a)$
        \FOR{$k = 1$ \TO $K_0$}
            \STATE Set $t_{k} =  (T_1/K_0)\,k$ 
            \qquad\qquad\qquad\qquad\hspace*{46pt}\COMMENT{arithmetic grid}
            \STATE Set $t_{K_0+k} = T_1 + \tfrac{\log(6a)}{2}
            \big[1 - \big(\frac{2\delta}{\log(6a)} \big)^{k/K_0}
            \big]$. \qquad\COMMENT{geometric grid}
        \ENDFOR
        \STATE \textbf{Output} $(t_0,\ldots,t_{K+1})$
        \end{algorithmic}
        \end{algorithm}
    \end{tcolorbox}
    \end{minipage}
\end{center}

\subsection[Strongly log-concave 
convolved with a bounded support  distribution]{Strongly log-concave distributions 
convolved with a distribution with bounded support}

In this section, we consider the case of a distribution $P^*$
satisfying \Cref{ass:1} with a function $\varphi$ that has
the following form: for some constants $m,M,b\geqslant 0$, 
\begin{align}\label{eq:phi_slg}
    \varphi(\sigma) = \frac{\sigma^2}{ 1 + m\sigma^2} +
    \frac{bM ^2\sigma^4}{(1 + M\sigma^2)^2},\qquad \forall 
    \sigma>0.
\end{align}
If $P^*$ is $m$-strongly log-concave, as discussed in \Cref{ssec:2.1}, then \eqref{eq:phi_slg} holds with $b = 0$ and any $M > 0$. Another class of distributions satisfying \eqref{eq:phi_slg} consists of convolutions $P^* = P_{\sf slc} \star P_{\sf cmpct}$, where $P_{\sf slc}$ is $m$-strongly 
log-concave with an $M$-Lipschitz score, and $P_{\sf cmpct}$ is supported on a bounded set of diameter $2\mathfrak D$, for 
some $M \geqslant m > 0$ and $\mathfrak D \geqslant 0$. In this case, \eqref{eq:phi_slg} holds with $b = \mathfrak D^2$.

Finally, there are distributions satisfying \Cref{ass:1} with $\varphi$ given by \eqref{eq:phi_slg} that are not absolutely continuous with respect to the Lebesgue measure on $\mathbb{R}^D$. For example, if $P^*$ is supported on a linear subspace $\mathcal S$ of $\mathbb{R}^D$, and its restriction to $\mathcal S$, viewed as a distribution on $\mathbb{R}^d$ for some $d \in {1, \ldots, D}$, satisfies \Cref{ass:1} with $\varphi$ given by \eqref{eq:phi_slg}, then $P^*$ also satisfies the assumption with the same $\varphi$. This is a consequence of properties (d) and (e)
presented in \Cref{ssec:2.2}.

\begin{theorem}\label{th:1}
    Let the target distribution $P^*$ satisfy $\mathbf E 
    [\|\bX\|_2^2]\leqslant D$ and \Cref{ass:1} with function
    $\varphi$ given by \eqref{eq:phi_slg} for some $m,M,b 
    \geqslant 0$. Let us choose $T_1>0$, 
    \begin{align}
        a = \tfrac1m + b,\qquad 
        K_0\geq 7T_1\log(6a) + 4\log(6a)\log\log(6a) 
        \qquad \delta = 0.5e^{-2T_1},
    \end{align}
    and define the sequence $(t_k)_{0\leq 
    k\leq K+1}$ by \Cref{algo:2}. Let $\widetilde \score$ be a 
    randomized estimator of the score satisfying \Cref{ass:2}.
    Then, the distribution $P^{\sf DDPM}$ of the output 
    of \Cref{algo:1} based on $2K_0$ queries to the score estimator
    $\widetilde \score$ satisfies
    \begin{align}\label{eq:wass_bound1}
        \wass_2(P^*,P^{\sf DDPM}) \leq e^{(4/3)bM}
        \Big\{2e^{-T_1} + 
        7\sqrt{6a}\,h_{\max} + 4\sqrt{6a}\big(2\varepsilon^b_{
        \sf score} + h_{\max}^{1/2}\,\varepsilon^v_{\sf score}\big)
        \Big\}\sqrt{D},
    \end{align}
    with $h_{\max} = \max_{k} (t_{k+1}-t_k) \leqslant 
    \frac{\log(6a)(\log\log(6a) + 2T_1)}{K_0}$.
\end{theorem}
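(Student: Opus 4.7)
My plan is to split the total error by the triangle inequality into an early-stopping piece and a coupling piece, and then to propagate the coupling error through the recursion defining $\bZ_k$. Writing $P^*_\delta$ for the law of the forward OU process at time $\delta$, we have
\begin{align*}
\wass_2(P^*,P^{\sf DDPM}) \leq \wass_2(P^*,P^*_\delta) + \wass_2(P^*_\delta,P^{\sf DDPM}).
\end{align*}
Using $\mathbf E\|\bX\|^2 \leq D$, $\beta_\delta \leq \sqrt{2\delta} = e^{-T_1}$ and $1-\alpha_\delta \leq \delta$, the first term is at most $e^{-T_1}\sqrt D$; this will be absorbed into the $2e^{-T_1}\sqrt D$ term of the statement, together with the initialization error $\wass_2(P^*_T,\gamma^D) \leq 2e^{-T}\sqrt D$ arising from coupling $\bZ_0 \sim \gamma^D$ to $\bY_0 \sim P^*_T$ in the construction below.

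For the second term I would construct three coupled processes driven by the same Brownian motion: the exact continuous-time backward SDE $(\bY_t)$ from \eqref{eq:backward} started at $\bY_0 \sim P^*_T$ (so that $\bY_{T-\delta} \sim P^*_\delta$); its Euler scheme with the true score; and the algorithm's output $(\bZ_k)$. I would then track $E_k := \mathbf E\|\bY_{t_k}-\bZ_k\|^2$ along the recursion, bounding the one-step growth by three contributions: (i) the drift time-discretization error $\|\int_{t_k}^{t_{k+1}}[\drift(s,\bY_s)-\drift(t_k,\bY_{t_k})]\,\rmd s\|^2$, (ii) the drift-Lipschitz term $\|\drift(t_k,\bY_{t_k})-\drift(t_k,\bZ_k)\|^2$, and (iii) the score error $\|\widetilde\score(T-t_k,\bZ_k)-\score(T-t_k,\bZ_k)\|^2$, for which \Cref{ass:2} is used directly. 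The bias pieces in (iii) accumulate additively across the $K$ steps, while the centered variance pieces combine into a martingale-type sum whose $\mathbb L_2$ norm is what produces the extra $\sqrt{h_{\max}}$ factor in front of $\varepsilon^v_{\sf score}$.

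The crux is to use \Cref{prop:1} together with the explicit form \eqref{eq:phi_slg} of $\varphi$ to derive two key properties of $\drift(t,\bx) := \bx + 2\score(T-t,\bx)$: a one-sided monotonicity (semi-log-concavity) estimate and a two-sided Lipschitz bound of order $1/\beta^2_{T-t}$. On the arithmetic grid one checks that $\beta^2_{T-t_k}/\alpha^2_{T-t_k} \geq 6a - 1$, which forces the upper bound on $\nabla^2\log\pi(T-t_k,\cdot)$ to be strictly negative, so the one-step recursion is contractive and absorbs both the initialization error and the errors accumulated earlier. On the geometric grid the drift is only semi-monotone with a constant of order $M$, giving a one-step multiplicative factor $1+\eta_k$ with $\eta_k \asymp M h_k$; the geometric spacing is tailored so that $\sum_k \eta_k$ stays bounded by a quantity of order $bM$, producing the prefactor $e^{(4/3)bM}$.

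The hardest step will be the geometric-grid analysis: there, the Lipschitz constant of $\score$ blows up like $1/\beta^2_{T-t_k}$ as $t_k \to T-\delta$, and the one-step propagation of the discrepancy must be balanced against the shrinking step sizes $h_k = \mathcal O\bigl(\delta^{k/K_0}\bigr)$. This balance is precisely what makes the choice of grid in \Cref{algo:2} work, and it is what prevents the factor $e^{(4/3)bM}$ from degenerating. Summing the one-step bounds via a discrete Gronwall argument, and using the standing hypothesis $K_0 \gtrsim T_1\log(6a)$ to bound $h_{\max}\sqrt{6a}$ by the advertised constants, then yields the four terms appearing in \eqref{eq:wass_bound1}.
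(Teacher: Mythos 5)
Your high-level strategy---couple the discrete iterates $\bZ_k$ to the continuous backward diffusion $\bY_t$ driven by the same Brownian increments, track the squared $\mathbb L_2$-discrepancy along the recursion, use \Cref{prop:1} to get one-sided Hessian bounds on the score, exploit strict contraction on the arithmetic grid and controlled expansion on the geometric grid, and let bias terms accumulate linearly while variance terms cancel à la martingale---is precisely the route the paper takes, and you correctly identify the key ingredients (including that the $e^{(4/3)bM}$ factor comes from summing the expansion constants $-m_k h_k$ over the geometric grid).

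However, two points would need fixing before this becomes a proof. First, your opening triangle inequality $\wass_2(P^*,P^{\sf DDPM}) \le \wass_2(P^*,P^*_\delta) + \wass_2(P^*_\delta,P^{\sf DDPM})$ does not align with the algorithm: the output $\bZ_{K+1}$ is generated at time $t_{K+1}=T$, so $P^{\sf DDPM}$ is an approximation of $P^*$, not of $P^*_\delta$; there is no early stopping in \Cref{algo:1}. The paper instead bounds $\|\bY_T - \bZ_{K+1}\|_{\mathbb L_2}$ directly and runs the recursion through the final step, where the Lipschitz constant of the score blows up like $\beta_{T-t}^{-2}$; the resulting discretization variance $\mathfrak V_K$ is controlled in \Cref{lem:discr} by a $\sqrt{h_K}$-scaling term, and the choice $h_K=\delta=0.5\,e^{-2T_1}$ ensures this term is of order $e^{-T_1}\sqrt D$. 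Your decomposition would require either changing the output point to $\bZ_K$ or adding a third triangle-inequality term for $\|\bZ_K - \bZ_{K+1}\|_{\mathbb L_2}$, and you do not address that. Second, and more importantly, your claim that ``the centered variance pieces combine into a martingale-type sum'' is the right intuition but glosses over the mechanism that actually yields the $h_{\max}$ (rather than $h_{\max}^{1/2}$) discretization rate and the $\sqrt{h_{\max}}$ prefactor on $\varepsilon^v_{\sf score}$: one must write $\mathbf E\big[\|\bDelta_{k+1}\|^2\big] = \|\mathbf E[\bDelta_{k+1}\cond\mathcal F_k]\|_{\mathbb L_2}^2 + \|\bDelta_{k+1}-\mathbf E[\bDelta_{k+1}\cond\mathcal F_k]\|_{\mathbb L_2}^2$ and apply the triangle inequality separately in each summand, so that conditional-mean contributions (score bias and $\mathfrak B_k$) accumulate additively while conditional-variance contributions (score variance and $\mathfrak V_k$) accumulate in squared $\mathbb L_2$-norm. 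Without this exact orthogonality step, a naive ``bound the one-step growth of $E_k$'' recursion (which your outline leaves ambiguous) would reproduce the weaker $h_{\max}^{1/2}$ bound of prior work.
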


There are several notable features in the upper bound stated 
in \Cref{th:1}, when we compare it to the previously known results. 

\begin{remark}[Optimality]\label{rem:1}
    The dependence of the discretization error (the second term in 
    \eqref{eq:wass_bound1}) on the largest step 
    size $h_{\max}$ is linear, 
    whereas it was of order $h_{\max}^{1/12}$ in \cite[Cor.~6]
    {ChenC0LSZ23}, $h_{\max}^{1/4}$ in \cite[Cor.~2.4]{ChenL023}, 
    and $h_{\max}^{1/2}$ in \cite[Remark 12]{Sabanis23}, \cite[Cor.~4.3]
    {strasman2025}, \cite{Ocello25, JMLR:v26:24-0902, Lu25}. Moreover, 
    \cite{JMLR:v26:24-0902} establishes that the lower bound on the 
    Wasserstein-2 error, achieved by the Gaussian distribution, 
    scales as $\sqrt{D}\,h_{\max}$, thereby implying the optimality 
    of the bound in \Cref{th:1}.
\end{remark}

\begin{remark}[Conditions] 
    The bound established in \Cref{th:1} is derived under a set of 
    conditions on the target distribution that may be regarded as 
    restrictive. Indeed, Corollary 2.4 in \cite{ChenL023}, for
    instance, applies to a broader class of distributions than
    those encompassed by \Cref{th:1}. Nevertheless, 
    stronger assumptions are typically indispensable for attaining 
    faster rates of convergence. In this regard, the assumptions 
    imposed in \Cref{th:1} are less stringent than those adopted in 
    earlier works such as \cite{Sabanis23,Lu25,JMLR:v26:24-0902}, 
    among others. In particular, in the case where $P^*$ is 
    $m$-strongly log-concave, we do not assume that the Hessian of 
    the log-density is bounded from below. Furthermore, \Cref{th:1} 
    covers a broad class of distributions obtained as convolutions 
    of a compactly supported distribution and a Gaussian—a framework 
    not addressed in previous studies achieving a convergence rate of 
    $h_{\max}^{1/2}$. In conclusion, the conditions required by 
    \Cref{th:1} are weaker than those previously associated with the 
    $h_{\max}^{1/2}$ rate, while enabling the faster convergence 
    rate of $h_{\max}$.
\end{remark}

\begin{remark}[Impact of noise]
\label{rem:2} All previously known bounds 
are proportional to $\|(\widetilde\score - \score) (\tau,\bX) 
\|_{\mathbb{L}_2}$, 
where the proportionality factor is often logarithmic in the 
number of queries, and the $\mathbb{L}_2$-norm can take 
different forms—the weakest being the case where $\tau \sim 
\text{Unif}([0,T])$ and the law of $\bX$ given $\tau = t$ is 
$P^*_{t}$. If $\widetilde\score(t,\bx) = \hat\score(t,\bx) + \bzeta$, 
with $\|\bzeta\|_{\mathbb{L}_2}^2 = \sigma_{\bzeta}^2 D$ 
as in Example 1 of \Cref{ssec:2.1}, then $\|\widetilde\score - 
\score\|_{\mathbb{L}_2}^2 \geq \sigma_{\bzeta}^2 D$. Thus, 
all known bounds include a term of constant order, 
independent of the number of queries. In contrast, the 
corresponding term in the bound of \Cref{th:1} is $O(\sqrt{D\, h_{\max}}\,\varepsilon^v_{\sf score})$, which scales as 
$\sigma_{\bzeta} \sqrt{DT_1/K}$ and thus vanishes as $K$, 
the number of queries, grows large. 
\end{remark}

\begin{remark}[Informal statement] To facilitate comparison 
with existing results, let us consider the strongly log-concave
case $b=0$ and denote by $L := a$ the surrogate of the Lipschitz 
norm of the score of $P^*$. For $T_1 = \log(K_0)$, our result 
implies that, after $K$ queries to the score estimator, 
\begin{align}
    \wass_2(P^*,P^{\sf DDPM}) \lesssim \sqrt{LD}\Big\{\frac{
    \log L\log K}{K} + \varepsilon_{\sf score}^b + 
    \frac{\sqrt{\log L\log K}}{\sqrt{K}}\,
    \varepsilon_{\sf score}^v\Big\}.
\end{align}
In particular,  $\wass_2(P^*,P^{\sf DDPM}) \lesssim \sqrt{LD}\, 
\varepsilon_{\sf score}^b$, provided that the number of queries
satisfies
\begin{align}
    \frac{K}{\log K} \geqslant \Big\{\frac{1}{\varepsilon_{\sf 
    score}^b}\bigvee \Big(\frac{\varepsilon_{\sf score}^v}{
    \varepsilon_{\sf score}^b}\Big)^2\Big\}\,\log L. 
\end{align}
As mentioned in \Cref{rem:2}, this improves on \cite{Sabanis23,Lu25,JMLR:v26:24-0902,Ocello25}, which 
require $K\gtrsim(\log L)/(\varepsilon_{\sf score}^b)^2$ and $\varepsilon_{\sf score}^v \lesssim\varepsilon^b_{\sf score}$ to achieve $\wass_2(P^*,P^{\sf DDPM}) \lesssim \sqrt{LD}\, 
\varepsilon_{\sf score}^b$. 
\end{remark}

\subsection{Semi log-concave distributions with bounded support}

In this section, we consider the case of a distribution $P^*$
satisfying \Cref{ass:1} with a function $\varphi$ that has
the following form: for some constants $b,M\geqslant 0$, 
\begin{align}\label{eq:str_conc_b:0}
    \varphi(\sigma) = b \wedge \frac{\sigma^2}{(1 - 
    M\sigma^2)_+},\qquad \forall 
    \sigma>0.
\end{align}
The typical example of $P^*$ satisfying this assumption
is a distribution by a bounded set $\mathcal K$ included in 
a linear subspace of $\mathbb R^D$, if in addition the
log-density wrt to the Lebesgue measure on the subspace
has a Hessian $\preccurlyeq M\bfI$. It then follows from
claims (c), (d), and (e) of \Cref{ssec:2.1} that 
$P^*$ satisfies \Cref{ass:1} with $\varphi$ as in
\eqref{eq:str_conc_b:0} with $b=\diamX^2$. 
\begin{theorem}\label{th:2}
    Let the target distribution $P^*$ satisfy $\mathbf E 
    [\|\bX\|_2^2]\leqslant D$ and \Cref{ass:1} with function
    $\varphi$ given by \eqref{eq:str_conc_b:0} for some $b,M
    \geqslant 0$. Let us choose $T_1>0$, 
    \begin{align}
        a = b\vee1,\qquad 
        K_0\geq 7T_1\log(6a) + 4\log(6a)\log\log(6a) 
        \qquad \delta = 0.5e^{-2T_1},
    \end{align}
    and define the sequence $(t_k)_{0\leq 
    k\leq K+1}$ by \Cref{algo:2}. Let $\widetilde \score$ be a 
    randomized estimator of the score satisfying \Cref{ass:2}.
    Then, the distribution $P^{\sf DDPM}$ of the output 
    of \Cref{algo:1} based on $2K_0$ queries to the score estimator
    $\widetilde \score$ satisfies
    \begin{align}\label{eq:wass_bound1b}
        \wass_2(P^*,P^{\sf DDPM}) \leq e^{2b M+1}
        \Big\{2e^{-T_1} + 
        7\sqrt{6a}\,h_{\max} + 4\sqrt{6a}\big(2\varepsilon^b_{
        \sf score} + h_{\max}^{1/2}\,\varepsilon^v_{\sf score}\big)
        \Big\}\sqrt{D},
    \end{align}
    with $h_{\max} = \max_{k} (t_{k+1}-t_k) \leqslant 
    \frac{\log(6a)(\log\log(6a) + 2T_1)}{K_0}$.
\end{theorem}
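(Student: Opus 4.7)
My plan is to mirror the proof of Theorem~1, changing only the step where \Cref{ass:1} is converted into pointwise Hessian bounds on $\log \pi(\tau, \cdot)$ and the subsequent evaluation of the Gronwall integral. The discretization scheme, the score assumption, and the coupling machinery are unchanged, and the target bound \eqref{eq:wass_bound1b} has the same structural form as \eqref{eq:wass_bound1}, differing only in the prefactor and in the choice of $a$.

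First I would apply \Cref{prop:1} with $\alpha = \alpha_\tau$, $\beta = \beta_\tau$, $\sigma = \beta_\tau/\alpha_\tau$ to the specific $\varphi$ from \eqref{eq:str_conc_b:0}. A direct calculation yields
\begin{align}
\nabla^2 \log \pi(\tau, \by) \preccurlyeq \kappa(\tau)\,\bfI_D,\qquad \kappa(\tau) := \min\!\Bigl\{\tfrac{M}{(\alpha_\tau^2 - M\beta_\tau^2)_+},\; \tfrac{b\alpha_\tau^2 - \beta_\tau^2}{\beta_\tau^4}\Bigr\},
\end{align}
together with the universal lower bound $\nabla^2 \log \pi(\tau, \by) \succcurlyeq -\beta_\tau^{-2}\bfI_D$. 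The two branches cross at $\tau^\star = \tfrac12\log\bigl((1+Mb+b)/(1+Mb)\bigr)$, where $\kappa$ peaks at $M(1+Mb+b)$; beyond this crossover $\kappa$ decreases, becoming negative once $\beta_\tau^2 > b\alpha_\tau^2$, which restores log-concavity of $\pi(\tau, \cdot)$ at early reverse times. I would then write the Jacobian of the reverse drift $\boldsymbol{b}(t, \by) = \by + 2\score(T-t, \by)$ as $\mathrm{D}\boldsymbol{b}(t, \by) = \bfI_D + 2\nabla^2 \log \pi(T-t, \by)$ and perform a synchronous coupling between three processes: the exact reverse SDE started from $P^*_T$, the noisy-score reverse SDE started from $\gamma^D$, and the Euler recursion of \Cref{algo:1}. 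This produces a Gronwall-type inequality for $\wass_2^2$ with amplification factor $\exp\!\bigl(\int_0^T 2(1+2\kappa(T-s))_+\,ds\bigr)$, while the remaining terms decompose as in \Cref{th:1}: an initial error $\leq 2e^{-T_1}\sqrt{D}$ from exponential ergodicity of the forward OU process, a score bias/variance contribution controlled by \Cref{ass:2}, and a per-step Euler error summed against the arithmetic--geometric grid of \Cref{algo:2} to yield the $\sqrt{6a}\,h_{\max}$ terms.

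The main obstacle is evaluating the Gronwall integral sharply enough to obtain the prefactor $e^{2bM+1}$. On the first branch, the change of variable $u = (1+M)e^{-2\tau} - M$ reduces $\int_0^{\tau^\star} 2\kappa(\tau)\,d\tau$ to $\int_{1/(1+Mb+b)}^1 du/(u(u+M)) \cdot M$, which partial fractions evaluate to $\log(1+Mb)$; the small additional positive contribution on $(\tau^\star, \tfrac12\log(1+b))$ coming from the second branch must also be bounded, and combined with the base integral $\int 2\,ds$ of the identity part of $\mathrm{D}\boldsymbol{b}$ gives, after using $(1+Mb)^2 \leq e^{2Mb}$, the stated factor $e^{2bM+1}$. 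The absence of strong log-concavity---in contrast to \Cref{th:1}, where the $m$-strongly log-concave component damps propagation of errors---means no contraction is available on the final portion of the trajectory, which is precisely why the prefactor is $e^{2bM+1}$ rather than $e^{(4/3)bM}$ and why the scale $a = b\vee 1$ replaces $1/m + b$. Apart from this integral, all remaining estimates (controlling the $\beta_\tau^{-2}$-type Lipschitz constants, bounding the initial $\wass_2$-error via $\mathbf E\|\bX\|^2 \leq D$, and summing the per-step errors against the grid) proceed verbatim as in the proof of \Cref{th:1}.
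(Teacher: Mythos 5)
Your high-level skeleton (Tweedie/\Cref{prop:1} to get a pointwise Hessian bound $\kappa(\tau)$, synchronous coupling, amplification factor, error accumulation against the two grids) is indeed the paper's skeleton, but there are two concrete problems with the way you execute the key step. First, the paper does not evaluate a continuous-time Gronwall integral; it runs the discrete recursion of \Cref{lem:recursion} together with \Cref{lem:contraction} and \Cref{lem:discr}, and on the geometric grid it simply uses the crude uniform bound $m_k\geqslant 1-2\bar a$ with $\bar a = 1+bM$, which follows at one stroke from $\varphi(\sigma)\leqslant(1+bM)\sigma^2$ for $\varphi$ as in \eqref{eq:str_conc_b:0}. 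Your plan to compute the exact integral $\int (1+2\kappa(\tau))_+\,\rmd\tau$ by change of variables and partial fractions is a legitimate alternative, but it is more refined than necessary and --- more importantly --- your arithmetic does not close. You correctly get $\int_0^{\tau^\star} 2\kappa = \log(1+Mb)$, but the contribution from the second branch on $(\tau^\star,\tfrac12\log(1+b))$ (which evaluates to $Mb-\log(1+Mb)$) and the ``$1$'' part of the Jacobian (which contributes $\tfrac12\log(6a)$, not ``$\int 2\,ds$'') combine to an amplification factor $\sqrt{6a}\,e^{Mb}$, not $e^{2bM+1}\cdot(\cdot)$. Your invocation of $(1+Mb)^2\leqslant e^{2Mb}$ is unmotivated by your own numbers, and the factor $e^{2bM+1}$ is never actually produced. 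Note also that your Gronwall display has an extraneous factor of $2$ inside the exponential: for the $\mathbb L_2$-norm (not its square) the exponent should be $\int (1+2\kappa)_+\,\rmd s$, not $\int 2(1+2\kappa)_+\,\rmd s$.

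Second, the parts you defer as ``verbatim as in \Cref{th:1}'' include the verification that the step-size condition $h_k\bigl(\tfrac{1+\alpha_k^2}{1-\alpha_k^2}+m_k\bigr)\leqslant 2$ of \Cref{lem:contraction} holds on the geometric grid and the bounding of the bias/variance of the discretization error ($\mathfrak B_k$, $\mathfrak V_k$ of \Cref{lem:discr}) with the weaker function $\varphi$ of \eqref{eq:str_conc_b:0}. These are not cosmetic: \Cref{lem:discr}'s bound \eqref{bound:Vk'} is invoked with $\bar a=1+bM$ precisely because $\varphi(\sigma)\leqslant \bar a\sigma^2$ for this $\varphi$, and the identification of $\bar a$ is what drives the final constant. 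Since you never state what $\bar a$ is or how it enters the per-step error, the proposal as written does not establish \eqref{eq:wass_bound1b}.
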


Since the conclusions of this theorem closely mirror 
those of \Cref{th:1}, the remarks provided after the 
latter apply here as well and will not be repeated. We merely emphasize two points. First, $P^*$ is 
not assumed to have a density wrt the Lebesgue 
measure on $\mathbb{R}^D$. Second, the number $K$ of queries 
to the score estimator required to achieve $\wass_2$ error $\varepsilon$ scales as $1/\varepsilon$, 
up to a factor that grows at most logarithmically 
in $1/\varepsilon$. For a 
log-concave distribution supported on a bounded domain, 
we have $(M,b) = (0, \diamX^2)$, so the exponential 
factor in the bound \eqref{eq:wass_bound1b} becomes a universal constant. This complements the result obtained in the 
strongly log-concave setting from \Cref{th:1}.

\section{Relation to prior work: extended discussion}

Given the wealth of recent work on Langevin algorithms  and  score-based generative models, it would be infeasible to provide an exhaustive account of all existing results. Instead, this section offers a synthetic and selective overview of prior work, to situate our contributions within the broader landscape rather than presenting a comprehensive literature review.

Theoretical guarantees for DDPMs have been largely inspired by techniques from the sampling literature, particularly those used for analyzing Langevin Monte Carlo and its variants; see the comprehensive overview in \cite{ChewiBook}. Prior work can be broadly grouped into three categories based on the underlying proof strategies.

The first category includes works that build on the approach initiated in \cite{DalalyanT12,Dalalyan17JRSS}, which combines the \textbf{Pinsker inequality with the Girsanov formula} to derive TV-distance bounds via the KL-divergence. This methodology has been developed further in \cite{ChenC0LSZ23,ChenL023,BentonBDD24,ConfortiDS25,li2025odt,liang2025broadening,HEWC024}. Its key strengths are:\vspace*{-8pt}
\begin{itemize}[leftmargin=20pt]\itemsep=-2pt
    \item it requires only a bound on the mean integrated squared error (MISE) of the score estimator—one of the weakest conditions in this framework;
    \item it relies on mild assumptions on the data-generating distribution~$P^*$.
\end{itemize}\vspace*{-7pt}
As noted in \cite{ChenC0LSZ23,ChenL023}, TV-distance bounds can be converted into Wasserstein bounds under additional assumptions, such as compact support or light-tailed $P^*$. If the support lies in a ball of radius~$R$, one can project the generated sample onto this ball and use that $\wass_2$ is bounded by $R $ times the TV distance. By the data-processing inequality, this projection does not increase the TV-error. 

However, this versatility comes at a price. Let 
$K_{\sf TV}(\wt\varepsilon)$ be the number of steps required to achieve an error smaller than $\wt\varepsilon$ in TV-distance. Then, to achieve $\wass_2$ error $\varepsilon$, one needs a TV-error $\wt\varepsilon = \varepsilon^2/R^2$, leading to a number of steps at least $K_{\sf TV}(\varepsilon^2/R^2)$. As a result, the rates derived from this strategy are suboptimal: ${O}(D^4/\varepsilon^2)$ in \cite{ChenL023}, ${O}(D/\varepsilon^4)$ in \cite{BentonBDD24,ConfortiDS25}, and ${O}(D^3/\varepsilon^2)$ in \cite{HEWC024}, ignoring logarithmic factors. Another limitation of this approach applied to DDPM
and other score-based generative models is that the resulting upper bound on the $\wass_2$ distance scales 
as the square root of the error of estimation of the score. As a consequence, to guarantee an error $\varepsilon$ in $\wass_2$, one needs a score estimator
that has an error $\varepsilon_{\sf score}$ 
bounded by $\varepsilon^2$. Our results, as well as
those of the third category below, typically require
the weaker condition $\varepsilon_{\sf score}\lesssim 
\varepsilon$. 

The second category comprises results that exploit the interpretation of Langevin dynamics as a \textbf{gradient flow in the space of probability measures}. This perspective was initiated in \cite{wibisono18a, Bernton18} and further developed in \cite{ChengB18,DurmusMM19,VempalaW19}. Interestingly, the first polynomial-in-dimension guarantees for DDPM fall within this framework, as shown in \cite{lee2022convergence,yang2022convergence}. These works evaluate the error in terms of $f$-divergences such as total variation, KL, or $\chi^2$ divergence. 
However, when translated to bounds in the $\wass_2$ distance, they suffer from the same limitations as the TV-based approaches discussed above. 
Moreover, this line of work typically relies on strong structural assumptions on the target distribution~$P^*$, notably the satisfaction of a log-Sobolev inequality. Another limitation---shared with our own analysis---is that the score estimation error is measured in the uniform norm. We believe, however, that this requirement could be relaxed, both in the gradient-flow framework and in the recursive method developed in our work.

The third category comprises works using the 
\textbf{recursive approach} to bound the error of 
iterative algorithms such as LMC or DDPM. This method,
widely used in optimization theory, was shown to yield
strong guarantees for sampling in 
\cite{Dalalyan17,DurmusM17,DurmusM19,DalalyanK19}. For 
DDPM, it underlies the analyses in 
\cite{Sabanis23,JMLR:v26:24-0902,strasman2025,Lu25}, 
which establish a $\wass_2$-error rate of order 
$D/\varepsilon^2$---an improvement over the bounds 
derived or derivable from the first two categories. 
However, despite having all the necessary ingredients, 
these works do not reach the faster rate 
$\sqrt{D}/\varepsilon$. This is somewhat surprising, 
especially since their assumptions on $P^*$ are often
quite strong, such as strong log-concavity. We believe 
this gap arises from not fully exploiting the 
smoothness of the score of the distribution obtained 
from $P^*$ by convolving with a Gaussian. Technically,
their recursive bounds relate the error at iteration 
$k$ to that at iteration $k-1$ via triangle inequalities,
which can be loose when the two terms
involved are weakly correlated. As we show, applying 
the recursive approach to the \textbf{squared} 
Wasserstein distance yields significantly tighter 
control and leads to optimal rates. We believe that 
this improvement can be further exploited to get even
faster rates using the randomized midpoint discretization 
\cite{ShenLee,Erdogdu20,yu2024langevin,Lu25} or to get a
faster algorithm exploiting parallelization 
\cite{ChenRYR24,anari24a, Gupta,yu2025parallel}. 



\section{Numerical experiments}\label{sec:numerical}

We supplement our theoretical results with a small-scale 
empirical study on CIFAR-10 \cite{Krizhevsky2009LearningML} 
CelebA-HQ \cite{karras2018progressivegrowinggansimproved}, and LSUN-Church \cite{yu15lsun},
evaluating the robustness of DDPMs to noise in the estimated 
score\footnote{Code is available at https://github.com/VahanArsenian/DiffusionWasserstein}.

\vspace*{-7pt}
\paragraph{Setup.}
We use pretrained DDPM models from the publicly available 
checkpoints \texttt{google/ddpm-cifar10-32}, 
\texttt{google/ddpm-celebahq-256}, and \texttt{google/ddpm-church-256}, all licensed under Apache license 2.0 and hosted on HuggingFace. 
For each model, we follow the standard DDPM sampling 
procedure, and then repeat the generation process while 
injecting noise into the score network~$s_\theta$ at 
every denoising step. Specifically, we replace the 
score function with a perturbed version $
\widetilde{s}_\theta(t, \bx) = s_\theta(t, \bx) + 
\bzeta$, where $\bzeta$ is a $D$-dimensional noise 
vector with independent and identically distributed 
components. We consider 4 noise distributions: centered \texttt{Uniform}, \texttt{Gaussian}, 
\texttt{Laplace}, and \texttt{Student's-t} with 3 
degrees of freedom. For each noise type, we evaluate 
6 values for the noise scale, $\sigma \in \{0.25, 
0.5, 1, 2, 3, 4\}$. All other elements of the generation 
pipeline---including the variance schedule, guidance 
scale, and number of sampling steps---are left unchanged. 
For each experimental setting, we generate 8192 CIFAR-10 
images and 8192 CelebA-HQ images. Additional implementation 
details can be found in \Cref{App:E}.

\vspace*{-7pt}
\paragraph{Qualitative results.}
\Cref{fig:noisy_gen_images} shows random generations 
for standard normal noise.  
We observe that injecting noise with constant 
variance into the score network has a negligible 
impact on the visual quality of the generated 
samples. As expected, the quality gradually 
degrades as the noise level increases. Additional 
qualitative results illustrating this phenomenon 
are provided in \Cref{App:E}.


\vspace*{-7pt}
\paragraph{FID sensitivity.}
The Fréchet Inception Distance (FID) is a widely 
used metric for assessing the quality of generative 
image models. In \Cref{fig:fid}, we plot the FID 
as a function of the noise scale~$\sigma$. On CelebA‑HQ, 
the FID increases only moderately up to $\sigma 
\approx 1$, while CIFAR‑10 exhibits robustness up 
to $\sigma \approx 2$. In agreement with our 
theoretical findings, the shape of the noise 
distribution has negligible impact, only its 
scale matters. We also observe a sharp degradation 
in quality beyond a certain noise threshold, 
a phenomenon not accounted for by our theoretical 
analysis.

    

\begin{figure}[h!]
    \centering
    \begin{subfigure}[t]{0.32\linewidth}
    \centering
    \includegraphics[width=\linewidth]{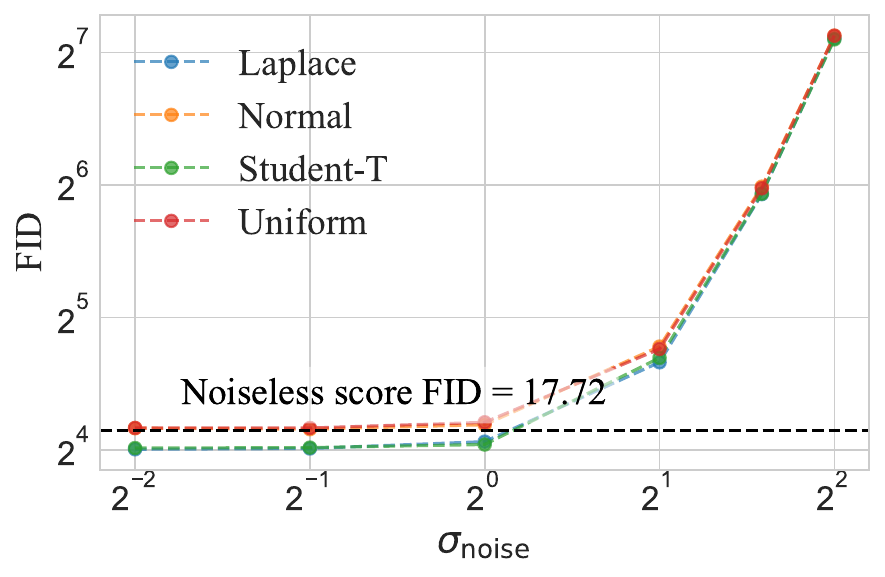}
    \caption{CIFAR-10}
    \end{subfigure}\hfill
    \begin{subfigure}[t]{0.32\linewidth}
    \centering
    \includegraphics[width=\linewidth]{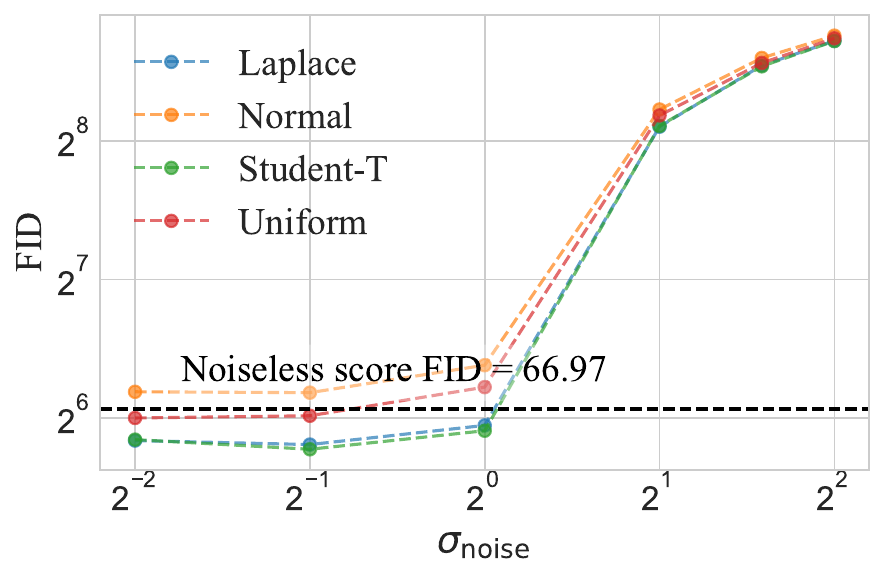}
    \caption{CelebA-HQ}
    \end{subfigure}\hfill
    \begin{subfigure}[t]{0.32\linewidth}
    \centering
    \includegraphics[width=\linewidth]{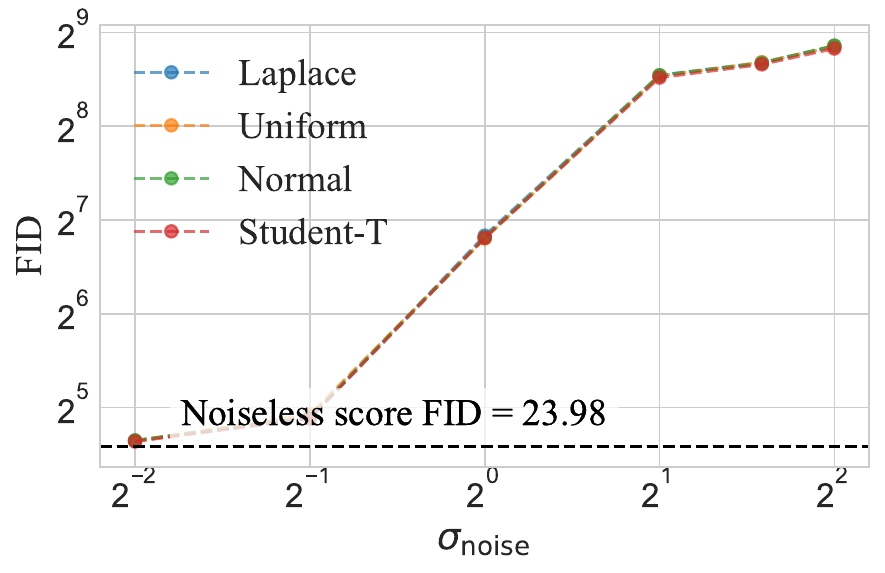}
    \caption{LSUN-Churches}
    \end{subfigure}
    
    \caption{FID as a function of noise level for four 
    distributions and different standard deviations.
    }
    \label{fig:fid}
\end{figure}



\section{Conclusion}
\label{conclusion}
In this paper, we provide a refined theoretical analysis of 
denoising diffusion probabilistic models (DDPMs), revealing two 
important features. First, we show that DDPMs exhibit robustness 
to noise in the estimated score function. Second, we establish 
that, when the true data-generating distribution belongs to a 
broad class---significantly larger than the class of log-concave distributions---DDPMs achieve fast convergence rates in the
Wasserstein distance.

Our findings open several avenues for future research.
One direction is the adaptation of our techniques to the 
analysis of kinetic Langevin diffusion-based DDPMs. It 
remains an open question whether such an extension would 
improve the dependence of the error bounds on the 
discretization step size. Additionally, the convergence 
rates we derive include terms that scale exponentially with 
certain parameters, such as the diameter of the support in 
the case of semi-log-concave targets. It is unclear whether 
this dependence is intrinsic to the problem or an artifact of 
our analysis. Finally, it would be of interest to assess  
the potential benefits of incorporating estimators of the 
Hessian of the log-density into the DDPM framework.


{\footnotesize
\bibliography{bibliography}
\bibliographystyle{alpha}
}
\newpage
\appendix	
\renewcommand\contentsname{}
\part{Appendix} 
\parttoc 

\newpage

\section[Classes of distributions satisfying Assumption~\ref{ass:1}]{Classes of distributions satisfying \Cref{ass:1}}
\label{app:A}

Throughout the paper we make use of Tweedie's formula \cite[Eq. 1.4]{tweedies_formula} which takes the following form using our notation: Let $\pi_{\bY}$ be the probability density function of $\bY = \alpha \bX+\beta \bxi$ where $(\bX,\bxi)\sim P^*\otimes \gamma^D$, then
\begin{align}\label{eq:grad_as_exp}
    \nabla\log \pi_{\bY}(\by)=\frac{\alpha}{\beta^2}\mathbf{E}\brs{\bX\mid\bY=\by}-\frac{\by}{\beta^2},\qquad \forall
    \by\in\mathbb R^D.
\end{align}

This section shows that distributions mentioned in  \Cref{sec:problem_statement} satisfy \Cref{ass:1}.

\subsection{Compactly supported distributions: property (a)}

\begin{lemma}\label{lemma:5}
    Let $P_{\bX,\bY}$ be a probability measure defined on $\mathcal{X}\times\mathcal{Y}$, $P_{\bX}$ and $P_{\bX\cond \bY=\by}$ be the marginal and the conditional distributions of $\bX$. Then 
    \begin{align}
        \operatorname{supp}(P_{\bX\cond \bY=\by})\subset\operatorname{supp}(P_{\bX}).
    \end{align}
\end{lemma}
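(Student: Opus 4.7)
\medskip
\noindent\textbf{Proof plan.} The statement is a classical measure-theoretic fact about disintegrations, and the proof I have in mind is short. Let $S := \operatorname{supp}(P_{\bX})$, which is by definition a closed subset of $\mathcal{X}$. Assuming $\mathcal{X}$ is second countable (which is the case in our application, where $\mathcal{X}$ is a subset of $\mathbb{R}^D$), the complement $S^c$ is a countable union of open balls, each having $P_{\bX}$-measure zero, so $P_{\bX}(S^c)=0$. The goal is then to show that $P_{\bX\mid \bY=\by}$ assigns mass one to $S$, from which the inclusion of supports follows because $S$ is closed (and the support of any probability measure is contained in every closed set of full measure).

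The key step is to invoke the disintegration identity
\begin{align}
    P_{\bX}(A) \;=\; \int_{\mathcal Y} P_{\bX\mid \bY=\by}(A)\, dP_{\bY}(\by),\qquad A\subset \mathcal X \text{ measurable},
\end{align}
applied to $A=S^c$. Since the left-hand side vanishes and the integrand is nonnegative, one obtains $P_{\bX\mid \bY=\by}(S^c)=0$, hence $P_{\bX\mid \bY=\by}(S)=1$, for $P_{\bY}$-almost every $\by$. Combined with the closedness of $S$, this gives $\operatorname{supp}(P_{\bX\mid \bY=\by})\subset S$ for $P_{\bY}$-almost every $\by$, which is the claim in the precise sense it can be stated (the conditional distribution is itself only defined up to a $P_{\bY}$-null set).

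The main subtlety—hardly an obstacle—is the justification of the disintegration identity and the existence of a regular conditional distribution $\by\mapsto P_{\bX\mid \bY=\by}$. In the setting of the paper, $\mathcal X,\mathcal Y$ are Euclidean (hence Polish), so this is standard. A second small point is the use of second countability to conclude $P_{\bX}(S^c)=0$ from the definition of the support; again, this is automatic in $\mathbb{R}^D$. Beyond these standard ingredients, the proof is essentially the one-line argument above.
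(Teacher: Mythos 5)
Your proof is correct and follows essentially the same route as the paper's: both rest on the disintegration of $P_{\bX,\bY}$ through $P_{\bX\cond\bY=\by}$ and the observation that $S=\operatorname{supp}(P_{\bX})$ carries full $P_{\bX}$-mass. Your version is in fact the more careful rendering of the step the paper compresses into ``by Bayes' theorem $P_{\bX,\bY}(S\times\mathcal Y)=P_{\bX\cond\bY=\by}(S)P_{\bY}(\mathcal Y)$'' --- that display, read literally, presupposes that $P_{\bX\cond\bY=\by}(S)$ does not depend on $\by$, whereas the honest statement is the integral identity you wrote, from which $P_{\bX\cond\bY=\by}(S)=1$ follows for $P_{\bY}$-a.e.\ $\by$. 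Your explicit ``$P_{\bY}$-almost every $\by$'' qualifier, and your remark that a regular conditional distribution is only defined up to a $P_{\bY}$-null set, are exactly the right level of precision and correct a small imprecision in the paper's phrasing without changing the argument.
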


\begin{proof}
    Let $S_{\bX}:=\operatorname{supp}(P_{\bX})$. Then by the definition of the marginal probability measure:
    \begin{align}
        P_{\bX}(S_{\bX})=P_{\bX,\bY}(S_{\bX}\times\mathcal{Y})=1.
    \end{align}

    On the other hand, by Bayes' theorem:
    \begin{align}\label{eq:eq:by}
        P_{\bX,\bY}(S_{\bX}\times\mathcal{Y})=P_{\bX\cond \bY=\by}(S_{\bX})P_{\bY}(\mathcal{Y}),
    \end{align}
    where $P_{\bY}$ is the marginal probability measure of $\bY$. The proof is completed by noting that \eqref{eq:eq:by}  yields $P_{\bX\cond \bY=\by}(S_{\bX})=1$.
\end{proof}

A simple consequence of \Cref{lemma:5} is that if $\operatorname{diam}(\operatorname{supp}(P_{\bX}))\leq C$ then $\operatorname{diam}(\operatorname{supp}(P_{\bX\cond \bY=\by}))\leq C$. Using this result, we show that a random vector $\bX$ with support diameter $2\diamX$ satisfies \Cref{ass:1} with $\varphi(\sigma)=\diamX^2$.

\begin{lemma}[Property (a) in \Cref{ssec:2.2}]\label{lem:bounded_supp}
    Let $\bX\sim P$ such that $\operatorname{diam}(\operatorname{supp}(P))\leq 2\diamX$ and let $\bY$ be any random variable defined on
    the same probability space. Then 
    \begin{align}
           \Var\big(\bX \cond \bY = \by \big) \preccurlyeq \diamX^2\bfI_D.
    \end{align}
\end{lemma}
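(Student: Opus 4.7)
The plan is to reduce the multivariate variance bound to a one-dimensional statement via projections, and then invoke a Popoviciu-type inequality on the projected random variable.

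First, by \Cref{lemma:5}, the conditional distribution $P_{\bX \cond \bY = \by}$ is supported inside $\operatorname{supp}(P)$, hence has diameter at most $2\diamX$. So it suffices to show that whenever a random vector $\bX'$ takes values in a (deterministic) set of diameter at most $2\diamX$, its covariance matrix satisfies $\Var(\bX') \preccurlyeq \diamX^2 \bfI_D$.

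Next, I would pass to one dimension. For any unit vector $\bu \in \mathbb{R}^D$, one has
\begin{align}
    \bu^\top \Var(\bX \cond \bY = \by)\,\bu = \Var(\bu^\top \bX \cond \bY = \by),
\end{align}
and the conditional support of the scalar $\bu^\top \bX$ lies in an interval of length at most $2\diamX$, since orthogonal projections do not increase the diameter. Thus the matrix bound $\Var(\bX \cond \bY = \by) \preccurlyeq \diamX^2 \bfI_D$ reduces to the scalar claim: a random variable valued in an interval of length $2\diamX$ has variance bounded by $\diamX^2$.

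For this last step I would use the symmetrization identity $\Var(Z) = \tfrac{1}{2}\,\mathbf E[(Z - Z')^2]$, where $Z'$ is an independent copy of $Z$, which is the cleanest route; alternatively one can directly invoke Popoviciu's inequality. In our setting, since $Z$ and $Z'$ both lie in an interval of length $2\diamX$, one actually wants the sharper observation that $Z$ minimizes $\mathbf E[(Z - c)^2]$ at $c = \mathbf E[Z]$, and comparing with the midpoint $c_0$ of the bounding interval gives $\Var(Z) \leqslant \mathbf E[(Z - c_0)^2] \leqslant \diamX^2$, as each realization of $Z$ is within $\diamX$ of $c_0$. I would use this midpoint argument rather than the symmetrization one, since it directly yields the constant $\diamX^2$ without a factor of $2$. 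Assembling these pieces completes the proof; no step presents a real obstacle, the only thing to be careful about is the correct normalization (diameter $2\diamX$ giving radius $\diamX$ around the midpoint).
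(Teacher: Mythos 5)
Your proposal is correct and follows essentially the same route as the paper: reduce the matrix bound to a scalar one via projections onto unit vectors, use \Cref{lemma:5} to confine the conditional support, and bound the variance of the projected scalar. The only difference is cosmetic — at the last step the paper cites Popoviciu's inequality, whereas you prove the needed scalar bound directly via the midpoint argument, which is precisely one of the standard proofs of that inequality and yields the same constant $\diamX^2$.
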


\begin{proof}
    We need to prove that for any $\bm{v} \in \mathbb{R}^D$:
    \begin{align} \label{eq:scalar_ineq}
        \bm{v}^\top \Var(\bm{X}\cond \bY=\by) \bm{v} \le \bm{v}^\top \left( \diamX^2 \bfI_D \right) \bm{v},
    \end{align}
    which can be rewritten as:
    \begin{align}
        \Var(\bm{v}^\top\bm{X}\cond \bY = \by)\leq \|\bm{v}\|^2\diamX^2.
    \end{align}
    
    By dividing both sides by $\|\bm{v}\|^2$, we can rewrite the target inequality with respect to a unit vector $\bm{u}\in \mathbb{R}^D$:
    \begin{align}
        \Var(\bm{u}^\top\bm{X}\cond \bY = \by)\leq \diamX^2.
    \end{align}
    Denote $Z=\bm{u}^\top\bX$. The $\operatorname{supp}(P_Z)$ is contained in the set $\{\bm{u}^\top\bx\cond \bx\in \operatorname{supp}(P_{\bX\cond \bY=\by})\}$. By \Cref{lemma:5}, the $\operatorname{diam}(\operatorname{supp}(P_{\bX\cond \bY=\by}))\leq 2\diamX$. Let $z_1=\bm{u}^\top\bx_1$ and $z_2=\bm{u}^\top\bx_2$ for arbitrary $\bx_1, \bx_2\in\operatorname{supp}(P_{\bX\cond \bY=\by})$. The distance between them is:
    \begin{align}
        |z_1-z_2|=|\bm{u}^\top\bx_1-\bm{u}^\top\bx_2|=|\bm{u}^\top(\bx_1-\bx_2)|.
    \end{align}
    
    By the Cauchy-Schwarz inequality:
    \begin{align}
        |\bm{u}^\top (\bm{x}_1 - \bm{x}_2)| \le \|\bm{u}\|_2 \|\bm{x}_1 - \bm{x}_2\|_2.
    \end{align}

    Since $\|\bm{u}\|_2 = 1$, we write $|z_1 - z_2| \le \|\bm{x}_1 - \bm{x}_2\|_2.$
    The maximum possible value for $\|\bm{x}_1 - \bm{x}_2\|_2$ is the diameter $2\diamX$. Therefore, $|z_1 - z_2| \le 2\diamX$ for all $z_1, z_2$ in the support of $Z$. This implies that the support of $Z$ is contained within an interval $[a, b]$ such that the length of the interval $b - a \le 2\diamX$.
    We now apply Popoviciu's inequality on variances \cite{popovicius_ineq}, which yields that:
    \begin{align}
        \Var(Z\cond \bY=\by) \le \tfrac14\,(b - a)^2\le \diamX^2.
    \end{align}%
\end{proof}

\subsection{Log-concave and semi-log-concave distributions: properties 
(b) and (c)}
Random vectors with $m$-strongly log-concave densities also satisfy \Cref{ass:1}, as shown in the lemma below.
\begin{lemma}[Property (b) in \Cref{ssec:2.2}]\label{lem:lgc}
    Let $(\bX,\bxi) \sim P \otimes\gamma^D$, where the density of $P$, denoted as $\pi(\bx)$, is $m$-strongly log-concave. Then 
    \begin{align}
           \Var\big(\bX \cond \bX + \sigma \bxi = \by \big) \preccurlyeq \frac{\sigma^2}{1+m\sigma^2} \bfI_D.
    \end{align}
    In addition, if $\bx\mapsto \nabla \log \pi(\bx)$ is $M$-Lipschitz
    for some $M>0$, then 
        \begin{align}
           \Var\big(\bX \cond \bX + \sigma \bxi = \by \big) \succcurlyeq \frac{\sigma^2}{1+M\sigma^2} \bfI_D.
    \end{align} 
\end{lemma}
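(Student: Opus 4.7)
I would work directly with the conditional distribution of $\bX$ given $\bY := \bX + \sigma\bxi = \by$. By Bayes' rule its density is proportional to $\pi(\bx)\,\gamma^D\!\big((\by-\bx)/\sigma\big)$, so it can be written as $e^{-U(\bx;\by)}$ with
\[
    U(\bx;\by) \;=\; -\log\pi(\bx) + \frac{\|\bx-\by\|^2}{2\sigma^2} + c(\by),
    \qquad
    \nabla^2_\bx U(\bx;\by) \;=\; -\nabla^2\log\pi(\bx) + \sigma^{-2}\bfI_D.
\]
The strategy is to translate the two assumptions on $\nabla^2\log\pi$ into two-sided bounds on $\nabla^2_\bx U$, then pass from curvature bounds to covariance bounds via two classical variance inequalities.

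For the upper bound, $m$-strong log-concavity of $\pi$ yields $-\nabla^2\log\pi \succcurlyeq m\bfI_D$, hence $\nabla^2_\bx U \succcurlyeq (m+\sigma^{-2})\bfI_D = \tfrac{1+m\sigma^2}{\sigma^2}\bfI_D$. The conditional law of $\bX$ given $\bY=\by$ is therefore strongly log-concave with constant $(1+m\sigma^2)/\sigma^2$, and the Brascamp--Lieb variance inequality---stating that a density $e^{-V}$ with $\nabla^2 V \succcurlyeq c\,\bfI_D$ satisfies $\Var \preccurlyeq c^{-1}\bfI_D$---directly delivers $\Var(\bX\cond\bY=\by) \preccurlyeq \sigma^2/(1+m\sigma^2)\,\bfI_D$.

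For the lower bound, $M$-Lipschitzness of $\nabla\log\pi$ forces $\|\nabla^2\log\pi\|_{\mathrm{op}} \le M$, so $\nabla^2_\bx U \preccurlyeq (M+\sigma^{-2})\bfI_D$. I would then invoke the Cram\'er--Rao (``reverse Brascamp--Lieb'') inequality: integration by parts against the conditional density yields the Fisher identity $\mathbf E[\nabla_\bx U\,(\nabla_\bx U)^\top\cond\bY=\by] = \mathbf E[\nabla^2_\bx U\cond\bY=\by]$, and applying the Cram\'er--Rao bound to the location family $\bx\mapsto e^{-U(\bx-\btheta;\by)}$ at $\btheta=0$ gives
\[
    \Var(\bX\cond\bY=\by) \;\succcurlyeq\; \big(\mathbf E[\nabla^2_\bx U\cond\bY=\by]\big)^{-1} \;\succcurlyeq\; (M+\sigma^{-2})^{-1}\bfI_D \;=\; \frac{\sigma^2}{1+M\sigma^2}\,\bfI_D.
\]
The Hessian computation is routine; the main care point is justifying the integration by parts underlying the reverse inequality, but the Gaussian factor $e^{-\|\bx-\by\|^2/(2\sigma^2)}$ in the conditional density guarantees the Gaussian-like tail decay needed for boundary terms to vanish, so the argument goes through under the stated assumptions.
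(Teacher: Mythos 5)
Your proof is correct. For the lower bound you take essentially the same route as the paper: compute the Hessian of the conditional log-density, bound it from above using $\|\nabla^2\log\pi\|_{\mathrm{op}}\le M$, and invoke the Cram\'er--Rao inequality. For the upper bound, however, your route is genuinely different. The paper first establishes that the \emph{marginal} density $\pi_{\bX+\sigma\bxi}$ is $\tfrac{m}{1+m\sigma^2}$-strongly log-concave (via the preservation of strong log-concavity under Gaussian convolution), and then converts this to a bound on the conditional variance through the second-order Tweedie identity of \Cref{prop:1}. You instead stay entirely on the conditional side: you observe that $\nabla^2_\bx U = -\nabla^2\log\pi + \sigma^{-2}\bfI_D \succcurlyeq \tfrac{1+m\sigma^2}{\sigma^2}\bfI_D$ and apply Brascamp--Lieb. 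Both are valid; yours is arguably more self-contained and symmetric, since the two directions of the inequality are then derived from the same Hessian bound on $U$ via the Brascamp--Lieb and Cram\'er--Rao inequalities respectively (and this is in fact the approach the paper itself uses in the proof of property (c)). The paper's detour through the marginal has the minor virtue of reusing \Cref{prop:1} and the convolution stability result, but it does not appear to buy anything beyond that here.
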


\begin{proof}
    By applying the preservation of strong log-concavity 
    \cite{log_convex}, we obtain that $\pi_{\bX+\sigma\bxi}(\by)$ 
    is $\frac{m}{1+m\sigma^2}$-strongly log-concave. We 
    then invoke \Cref{prop:1} with parameters $\alpha=1$ 
    and $\beta=\sigma$, which yields 
    \begin{align} 
        \frac{1}{\sigma^4}\Var(\bX\cond\bY=\by)-\frac{1}{\sigma^2}\bfI_D\preccurlyeq -
        \frac{m}{1+m\sigma^2}\bfI_D, 
    \end{align} 
    for $\bY = \bX+\sigma\bxi$, from which the first desired 
    result follows.

    For the second claim, set $\bY=\bX+\sigma\bxi$. The definition 
    of semi-log-concavity yields
    \begin{align}
        0\preccurlyeq -\nabla^2\log\pi(\bx)\preccurlyeq M\bfI_D.
    \end{align}
    The conditional density of $\bX$ given $\bY$ satisfies
    \begin{align}
        \pi_{\bX|\bY=\by}(\bx)\propto \pi_{\bX}(\bx)\pi_{\bY|\bX=\bx}(\by)
    \end{align}
    with $\pi_{\bY|\bX=\bx}(\by)\propto \exp(-\frac{\|\by-\bx\|^2}{2\sigma^2})$. Hence, the Hessian of $\pi_{\bX|\bY=\by}(\bx)$ is equal to:
    \begin{align}
        \nabla^2\log\pi_{\bX|\bY=\by}(\bx) = \nabla^2\ln\pi_{\bX}(\bx)-\frac{1}{\sigma^2}\bfI_D\succcurlyeq \brs{-M-\frac{1}{\sigma^2}}\bfI_D = -\frac{1+M\sigma^2}{\sigma^2}\,\bfI_D.
    \end{align}
    The Cramer-Rao inequality implies that
    \begin{align}
        \Var(\bX\cond\bY=\by) \succcurlyeq -\big(\mathbf{E}[\nabla^2 \log\pi_{\bX|\bY=\by}(\bX)\cond \bY=\by]\big)^{-1}
        \succcurlyeq \frac{\sigma^2}{1+M\sigma^2}\,\bfI_D
    \end{align}
    and the claim of the lemma follows.
\end{proof}
Similar results hold for for semi-log-concave distributions 
with a bounded support.
\begin{lemma}[Property (c) in \Cref{ssec:2.2}]
    Let $(\bX,\bxi) \sim P \otimes\gamma^D$ where $P$ has a density w.r.t. Lebesgue measure denoted as $\pi(\bx)$ and $\operatorname{diam}(\operatorname{supp}(P))\leq 2\diamX$. If  $\pi(x)$ is $M$-semi-log-concave for $M\in\mathbb R$, then:
    \begin{align}
         \Var\big(\bX \cond \bX + \sigma \bxi = \by \big) \preccurlyeq \diamX^2 \wedge \frac{\sigma^2}{(1 - 
    M\sigma^2)_+} \bfI_D.
    \end{align}
\end{lemma}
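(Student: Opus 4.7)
The plan is to establish the two bounds $\diamX^2\bfI_D$ and $\frac{\sigma^2}{(1-M\sigma^2)_+}\bfI_D$ independently and then combine them via the elementary Loewner-order fact that $A\preccurlyeq c_1\bfI_D$ together with $A\preccurlyeq c_2\bfI_D$ implies $A\preccurlyeq (c_1\wedge c_2)\bfI_D$.

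The first bound is immediate from property (a), already established in \Cref{lem:bounded_supp}: since $\bX$ is supported in a set of diameter $2\diamX$, one has $\Var(\bX\cond \bX+\sigma\bxi=\by)\preccurlyeq \diamX^2\bfI_D$ without using the semi-log-concavity hypothesis at all.

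For the second bound, I would work directly with the conditional density of $\bX$ given $\bY=\by$ rather than with $\pi_\bY$ itself, since in the regime $M>0$ the density $\pi_\bY$ need not inherit a good semi-log-concavity constant from $\pi$. By Bayes' rule,
\begin{align}
    \pi_{\bX\cond\bY=\by}(\bx)\propto \pi(\bx)\exp\!\brr{-\tfrac{\|\by-\bx\|^2}{2\sigma^2}},
\end{align}
so its negative log-density has Hessian
\begin{align}
    -\nabla^2_{\bx}\log\pi_{\bX\cond\bY=\by}(\bx)=-\nabla^2\log\pi(\bx)+\tfrac{1}{\sigma^2}\bfI_D\succcurlyeq \tfrac{1-M\sigma^2}{\sigma^2}\bfI_D,
\end{align}
where the inequality invokes the $M$-semi-log-concavity assumption $-\nabla^2\log\pi\succcurlyeq -M\bfI_D$. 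When $1-M\sigma^2>0$, the conditional distribution of $\bX$ given $\bY=\by$ is strongly log-concave with constant $(1-M\sigma^2)/\sigma^2$, and the Brascamp–Lieb inequality (applied as in the proof of property (b) via the Cramér–Rao identity) yields $\Var(\bX\cond\bY=\by)\preccurlyeq \frac{\sigma^2}{1-M\sigma^2}\bfI_D$. When $1-M\sigma^2\leq 0$ the bound is vacuous, which is exactly what the $(\cdot)_+$ at the denominator encodes, and only the bounded-support bound remains informative.

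No serious obstacle is anticipated here: the statement is essentially a packaging of properties (a) and (b), together with a routine adaptation of (b) that accommodates a positive $M$. The only subtlety is a bookkeeping one in the regime $M\sigma^2\geq 1$, where the Brascamp–Lieb argument breaks down but the bounded-support control from (a) still delivers a finite bound, ensuring that the minimum $\diamX^2\wedge \sigma^2/(1-M\sigma^2)_+$ is well defined for every $\sigma>0$.
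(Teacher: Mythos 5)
Your proposal is correct and follows essentially the same route as the paper: invoke property (a) for the $\diamX^2$ bound, compute the Hessian of $\log\pi_{\bX\cond\bY=\by}$ via Bayes' rule and the semi-log-concavity of $\pi$, then apply Brascamp--Lieb to get $\sigma^2/(1-M\sigma^2)$ when $M\sigma^2<1$, and take the minimum. One small imprecision: your parenthetical ``(applied as in the proof of property (b) via the Cram\'er--Rao identity)'' conflates two different tools---in the proof of (b) the Cram\'er--Rao inequality yields the \emph{lower} bound on the conditional variance, whereas the upper bound you need here comes from Brascamp--Lieb (or, in the paper's treatment of (b), from preservation of strong log-concavity together with \Cref{prop:1})---but this does not affect the substance, since you do apply Brascamp--Lieb correctly.
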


\begin{proof}
    Denote $\bY=\bX+\sigma\bxi$. We obtain from the definition of semi-log-concavity that:
    \begin{align}
        \nabla^2\log\pi(\bx)\preccurlyeq M\bfI_D.
    \end{align}
    The posterior of $\bX$ given $\bY$ is proportional to the joint:
    \begin{align}
        \pi(\bx\cond\by)\propto \pi(\bx)\pi(\by\cond\bx)
    \end{align}
    with $\pi(\by\cond\bx)\propto \exp(-\frac{\|\by-\bx\|^2}{2\sigma^2})$. Hence, the Hessian of $\log \pi(\bx\cond\by)$ 
    is equal to:
    \begin{align}
        \nabla^2\log\pi(\bx\cond\by)=\nabla^2\ln\pi(\bx) 
        - \frac{1}{\sigma^2}\bfI_D\preccurlyeq \brs{M-
        \frac{1}{\sigma^2}}\bfI_D,
    \end{align}
    where the last inequality follows from the semi-log-concavity of $\pi(\bx)$. By Brascamp-Lieb inequality \cite{brascamp1976best}, we have that:
    \begin{align}\label{eq:semi_log_conc}
        \Var(\bX\cond\bY=\by)\preccurlyeq\frac{\sigma^2}{1-M\sigma^2}\bfI_D
    \end{align}
    whenever $M\sigma^2\leqslant 1$. The conditional variance of $\bX$ can be bounded via \Cref{lem:bounded_supp}, as $P$ has a bounded support:
    \begin{align}
        \Var(\bX\cond\bY=\by)\preccurlyeq\diamX^2\bfI_D.
    \end{align}
    Combined with \eqref{eq:semi_log_conc}, we write:
    \begin{align}
        \Var\big(\bX \cond \bX + \sigma \bxi = \by \big) \preccurlyeq \diamX^2 \wedge \frac{\sigma^2}{(1 - 
    M\sigma^2)_+} \bfI_D.
    \end{align}
    This completes the proof of the lemma.
\end{proof}

\subsection{Stability by orthogonal transform and 
concatenation: properties (d) and (e)}

Afterwards, we prove that if $\bX$ satisfies \Cref{ass:1} then its rotation also satisfies \Cref{ass:1} with the same $\varphi(\sigma)$.

\begin{lemma}[Property (d) in \Cref{ssec:2.2}]
    Let $(\bX,\bxi) \sim P \otimes\gamma^D$ and 
    \begin{align}
        \Var\big(\bX \cond \bX + \sigma \bxi = \by \big)
        \preccurlyeq \varphi(\sigma)\,\bfI_D, 
        \qquad \forall \sigma > 0.
    \end{align}

    Then for any orthonormal matrix $\bfU$, we have that:
    \begin{align}
        \Var\big(\bfU\bX \cond \bfU\bX + \sigma \bxi' = \by' \big)
        \preccurlyeq \varphi(\sigma)\,\bfI_D, 
        \qquad \forall \sigma > 0.
    \end{align}

    for $\bxi'\sim\gamma^D$ and $\bxi'\indep \bfU\bX$.
\end{lemma}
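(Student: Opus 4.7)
The plan is to reduce the problem for $\bfU\bX$ back to the known covariance bound for $\bX$ by exploiting the rotational invariance of the standard Gaussian distribution. The basic idea is that if we apply $\bfU^\top$ to the noise $\bxi'$, we land back in a situation identical to the one covered by the assumption on $\bX$.

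First I would set $\bxi := \bfU^\top \bxi'$. Since $\bfU$ is orthonormal, $\bxi \sim \gamma^D$. Moreover, because $\bxi'\indep \bfU\bX$ and $\bfU$ is a fixed orthonormal (hence bijective) linear map, we also have $\bxi \indep \bX$, so $(\bX,\bxi)\sim P\otimes\gamma^D$, which is the setting of the hypothesis. I would then rewrite the conditioning event: for $\by':=\bfU\bx'$ (equivalently $\bx'=\bfU^\top\by'$),
\begin{align}
    \{\bfU\bX + \sigma \bxi' = \by'\} = \{\bfU(\bX + \sigma \bxi) = \by'\} = \{\bX + \sigma \bxi = \bfU^\top\by'\},
\end{align}
so the sigma-algebras generated by these events coincide.

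Next, I would transport the conditional variance through the linear map $\bfU$. Using the standard identity $\Var(\bfU\bX\mid \mathcal F)=\bfU\,\Var(\bX\mid \mathcal F)\,\bfU^\top$ for any sigma-algebra $\mathcal F$, applied to $\mathcal F = \sigma(\bX+\sigma\bxi)$, I get
\begin{align}
    \Var\big(\bfU\bX \cond \bfU\bX + \sigma\bxi' = \by'\big)
    = \bfU\,\Var\big(\bX \cond \bX + \sigma\bxi = \bfU^\top\by'\big)\,\bfU^\top.
\end{align}
The hypothesis on $\bX$ gives $\Var(\bX \mid \bX+\sigma\bxi = \bfU^\top\by')\preccurlyeq \varphi(\sigma)\bfI_D$, and conjugation by an orthonormal matrix preserves the Loewner order (if $\bfA\preccurlyeq c\bfI_D$ then $\bfU\bfA\bfU^\top\preccurlyeq c\,\bfU\bfU^\top=c\bfI_D$). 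Combining these two facts yields the desired bound $\Var(\bfU\bX\mid \bfU\bX+\sigma\bxi'=\by')\preccurlyeq \varphi(\sigma)\bfI_D$ for every $\sigma>0$.

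There is no real obstacle here; the only point that requires a touch of care is verifying that the rotated noise $\bxi=\bfU^\top\bxi'$ remains independent of $\bX$ and standard Gaussian, which follows immediately from the hypothesis $\bxi'\indep\bfU\bX$ together with the orthogonal invariance of $\gamma^D$. Everything else is a direct algebraic manipulation.
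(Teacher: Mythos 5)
Your proof is correct and follows essentially the same approach as the paper: set $\bxi = \bfU^\top\bxi'$, exploit the rotational invariance of $\gamma^D$ and the linearity of conditional variance, then observe that conjugation by an orthonormal matrix preserves the Loewner order. The argument is sound and matches the paper's proof line by line.
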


\begin{proof}
    Consider $\Var\big(\bfU\bX \cond \bfU\bX + \sigma \bxi' = \by \big)$. We rewrite it as:
    \begin{align}
        \Var\big(\bfU\bX \cond \bfU\bX + \sigma \bxi' = \by' \big)&=\bfU\Var\big(\bX \cond \bfU\bX + \sigma \bxi' = \by' \big)\bfU^\top\\
        &=\bfU\Var\big(\bX \cond \bfU^\top\bfU\bX + \sigma \bfU^\top\bxi' = \bfU^\top\by' \big)\bfU^\top
    \end{align}
    Let $\by:=\bfU^\top\by'$ and $\bxi:=\bfU^\top\bxi'$. By using the properties that $\bfU^\top\bfU=\bfI_D$ as $\bfU$ is orthonormal and that $\bxi\sim\gamma^D$ independently from $\bX$ as $\bxi'\sim\gamma^D$ and $\bxi'\indep \bfU\bX$, we write:
    \begin{align}
        \Var\big(\bfU\bX \cond \bfU\bX + \sigma \bxi' = \by' \big)&=\bfU\Var\big(\bX \cond \bX + \sigma \bxi = \by \big)\bfU^\top
        \preccurlyeq\varphi(\sigma)\,\bfI_D,
    \end{align}
    and the claim of the lemma follows.
\end{proof}

We now show that the concatenation of two independent random vectors satisfying \Cref{ass:1}  also satisfies \Cref{ass:1}.

\begin{lemma}[Property (e) in \Cref{ssec:2.2}]
    Let $(\bX_1, \bX_2)\sim P_1\otimes P_2$, where $P_1$ and $P_2$ satisfy \Cref{ass:1} for some $\varphi$. Then the concatenation of $\bX_1$ and $\bX_2$ , denoted as $\bX_1\oplus\bX_2$ also satisfies \Cref{ass:1} for the same $\varphi$.
\end{lemma}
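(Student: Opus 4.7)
The plan is to exploit the product structure of the distribution of $\bX_1\oplus\bX_2$ together with the fact that the additive Gaussian noise vector can itself be taken of product form. Specifically, I will write the noise as $\bxi=\bxi_1\oplus\bxi_2$, with $\bxi_1\sim\gamma^{D_1}$, $\bxi_2\sim\gamma^{D_2}$, mutually independent and independent of $(\bX_1,\bX_2)$. Since $\bxi\sim\gamma^{D_1+D_2}$, this is a legitimate coupling to use in the statement of \Cref{ass:1} for $\bX_1\oplus\bX_2$.

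The key observation is that the two pairs $(\bX_1,\bxi_1)$ and $(\bX_2,\bxi_2)$ are independent. Hence, conditioning on the joint event $\{(\bX_1+\sigma\bxi_1)\oplus(\bX_2+\sigma\bxi_2)=\by_1\oplus\by_2\}$ decouples into two separate conditionings: the conditional law of $\bX_1$ depends only on $\by_1$, and similarly for $\bX_2$. Moreover, under this conditioning $\bX_1$ and $\bX_2$ remain independent. As a consequence, the conditional covariance matrix is block-diagonal:
\begin{align}
\Var\bigl(\bX_1\oplus\bX_2\cond \bX+\sigma\bxi=\by\bigr)
= \begin{pmatrix} \Var(\bX_1\cond \bX_1+\sigma\bxi_1=\by_1) & 0 \\ 0 & \Var(\bX_2\cond \bX_2+\sigma\bxi_2=\by_2)\end{pmatrix}.
\end{align}

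Applying \Cref{ass:1} separately to $\bX_1$ and $\bX_2$ with the common function $\varphi$, each diagonal block is bounded above (in the Loewner order) by $\varphi(\sigma)\bfI_{D_i}$. Since the Loewner bound of a block-diagonal matrix is obtained by taking the Loewner bound of each block, the whole conditional covariance is $\preccurlyeq \varphi(\sigma)\bfI_{D_1+D_2}$, which is the desired conclusion.

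I do not expect any real obstacle here: the argument is essentially a bookkeeping exercise with product measures and block-diagonal matrices. The only subtlety worth mentioning explicitly in the write-up is to justify, via the independence of $(\bX_1,\bxi_1)$ and $(\bX_2,\bxi_2)$, that the off-diagonal blocks of the conditional covariance vanish; this is where the proof uses the hypothesis that $\bX_1\indep\bX_2$ in an essential way.
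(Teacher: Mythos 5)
Your proposal is correct and follows essentially the same route as the paper: decompose the noise as $\bxi=\bxi_1\oplus\bxi_2$, observe that $(\bX_1,\bxi_1)\indep(\bX_2,\bxi_2)$ so the conditioning decouples and the conditional covariance is block-diagonal, then apply \Cref{ass:1} blockwise. The paper makes the decoupling step precise via the weak union property of conditional independence (deriving $\bX_1\indep\bX_2\cond(\bY_1,\bY_2)$ and $\Var(\bX_i\cond\bY_1,\bY_2)=\Var(\bX_i\cond\bY_i)$), which is exactly the "subtlety" you flag as needing explicit justification in the write-up; once you include that, the two proofs coincide.
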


\begin{proof}
    Let $\bX_1$ be $d_1$-dimensional, $\bX_2$ be $d_2$-dimensional, and $D=d_1+d_2$. Consider $\bxi\sim\gamma^D$ and independent of $(\bX_1, \bX_2)$. We may write 
    \begin{align}
        \bY=\bX_1\oplus\bX_2+\sigma\bxi=\bX_1\oplus\bX_2+\sigma\brr{\bxi_1\oplus\bxi_2}=\underbrace{\brs{\bX_1+\sigma\bxi_1}}_{:=\bY_1}\oplus\underbrace{\brs{\bX_2+\sigma\bxi_2}}_{:=\bY_2}.
    \end{align}
    We have that $(\bX_1,\bX_2,\bxi_1,\bxi_2)$ are mutually independent as $(\bX_1,\bX_2,\bxi)$ are mutually independent and $\bxi_1$ and $\bxi_2$ are uncorrelated. From $\brr{\bX_1, \bxi_1}\indep(\bX_2, \bxi_2)$ we get that $\brr{\bX_1, \bY_1}\indep(\bX_2, \bY_2)$. Applying the weak union property of the conditional independence twice we get:
    \begin{align}
        \brr{\bX_1, \bY_1}\indep(\bX_2, \bY_2)\Rightarrow\bX_1\indep(\bX_2, \bY_2)\cond\bY_1\Rightarrow \bX_1\indep\bX_2\cond\brr{\bY_1, \bY_2}.
    \end{align}
    Hence the covariance of $\bX_1$ and $\bX_2$ given $(\bY_1, \bY_2)$ is $\bm{0}$. Finally,
    \begin{align}
        \Var(\bX_1\oplus\bX_2\cond \bY&=\by)=\Var(\bX_1\oplus\bX_2\cond \bY_1=\by_1, \bY_2=\by_2)\\
        &=\begin{bmatrix}
            \Var(\bX_1\cond \bY_1=\by_1, \bY_2=\by_2) &\cov(\bX_1, \bX_2\cond \bY_1=\by_1, \bY_2=\by_2) \\
            \cov(\bX_1,\bX_2\cond \bY_1=\by_1, \bY_2=\by_2) & \Var(\bX_2\cond \bY_1=\by_1, \bY_2=\by_2)
        \end{bmatrix}\\
        &=\begin{bmatrix}
            \Var(\bX_1\cond \bY_1=\by_1) &\bm{0}\\
            \bm{0} & \Var(\bX_2\cond \bY_2=\by_2)
        \end{bmatrix}\preccurlyeq \varphi(\sigma)\,\bfI_D
    \end{align}
    where the last inequality is due to $P_1$ and $P_2$ satisfying \Cref{ass:1}.
\end{proof}

\subsection{Convolution with a spherical Gaussian: property (f)}

\begin{lemma}[Property (f) in \Cref{ssec:2.2}]
    Let $(\bW,\bzeta)\sim P_0\otimes \gamma^D$. If $\bW$ satisfies
    \Cref{ass:1} with the function $\varphi_0$, then, for every 
    $\tau > 0 $, $\bX=\bW+\tau\bzeta$ satisfies \Cref{ass:1} with the
    function
    \begin{align}
        \varphi_\tau(\sigma) = \frac{\tau^2\sigma^2}{\tau^2+\sigma^2} 
        + \frac{\sigma^4\varphi_0(\sqrt{\tau^2+\sigma^2})}{(\tau^2 + \sigma^2)^2
        },\qquad \forall \sigma>0.
    \end{align}
\end{lemma}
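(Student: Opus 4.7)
Fix $\sigma>0$ and an independent $\bxi\sim\gamma^D$. Set $\bY = \bX + \sigma\bxi = \bW + \tau\bzeta + \sigma\bxi$. The plan is to decompose $\Var(\bX\mid\bY=\by)$ via the law of total variance with the auxiliary variable $\bW$, reducing the problem to a bound on $\Var(\bW\mid\bY=\by)$ (accessible through the hypothesis) plus a purely Gaussian contribution.

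\textbf{Step 1: reduction to a conditional variance for $\bW$.} Since $\bzeta\indep\bxi$ and both are standard Gaussians, $\tau\bzeta+\sigma\bxi$ is distributed as $\sqrt{\tau^2+\sigma^2}\,\bfeta$ for some $\bfeta\sim\gamma^D$ independent of $\bW$. Thus $\bY$ has the same distribution as $\bW+\sqrt{\tau^2+\sigma^2}\,\bfeta$, and the hypothesis applied with noise level $\sqrt{\tau^2+\sigma^2}$ yields
\begin{align}
    \Var(\bW\mid\bY=\by)\preccurlyeq \varphi_0\!\brr{\sqrt{\tau^2+\sigma^2}}\,\bfI_D.
\end{align}

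\textbf{Step 2: conditional law of $\bX$ given $(\bW,\bY)$.} Conditionally on $(\bW,\bY)=(\bw,\by)$, the only remaining randomness comes from $(\bzeta,\bxi)$, which is independent of $\bW$, subject to the affine constraint $\tau\bzeta+\sigma\bxi=\by-\bw$. A direct Gaussian conditioning computation (the pair $(\tau\bzeta,\tau\bzeta+\sigma\bxi)$ is jointly centered Gaussian with covariance $\tau^2\bfI_D$ and variances $\tau^2\bfI_D$, $(\tau^2+\sigma^2)\bfI_D$) gives
\begin{align}
    \mathbf{E}[\bX\mid \bW,\bY] = \frac{\sigma^2}{\tau^2+\sigma^2}\,\bW + \frac{\tau^2}{\tau^2+\sigma^2}\,\bY,\qquad \Var(\bX\mid \bW,\bY)=\frac{\tau^2\sigma^2}{\tau^2+\sigma^2}\,\bfI_D.
\end{align}

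\textbf{Step 3: combine via the law of total variance.} Applying $\Var(\bX\mid\bY)=\mathbf{E}[\Var(\bX\mid\bW,\bY)\mid\bY]+\Var(\mathbf{E}[\bX\mid\bW,\bY]\mid\bY)$, the first term is the deterministic matrix $\frac{\tau^2\sigma^2}{\tau^2+\sigma^2}\bfI_D$, while the second equals $\brr{\sigma^2/(\tau^2+\sigma^2)}^2\Var(\bW\mid\bY)$ because $\bY$ is fixed under the inner conditioning. Injecting the bound from Step 1 gives
\begin{align}
    \Var(\bX\mid\bY=\by)\preccurlyeq \brs{\frac{\tau^2\sigma^2}{\tau^2+\sigma^2}+\frac{\sigma^4\,\varphi_0(\sqrt{\tau^2+\sigma^2})}{(\tau^2+\sigma^2)^2}}\bfI_D = \varphi_\tau(\sigma)\,\bfI_D,
\end{align}
as required. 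No step is genuinely hard; the only point requiring slight care is justifying that, conditionally on $\bW$, the pair $(\bX,\bY)$ is jointly Gaussian with the stated parameters (using $\bW\indep(\bzeta,\bxi)$), which is what powers the tight law-of-total-variance decomposition in Step 3.
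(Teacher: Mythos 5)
Your proof is correct and follows essentially the same route as the paper: write $\bY = \bW + (\tau\bzeta+\sigma\bxi)$, invoke the hypothesis at noise level $\sqrt{\tau^2+\sigma^2}$ to control $\Var(\bW\mid\bY)$, compute the Gaussian conditional law of $\tau\bzeta$ given $\tau\bzeta+\sigma\bxi$, and combine via the law of total variance with $\bW$ as the auxiliary variable. The only cosmetic difference is that the paper carries out the Gaussian conditioning computation for $(\tau\bzeta,\bfeta)$ explicitly, whereas you cite it as a standard fact; the substance is identical.
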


\begin{proof}
    Let us define $\bY = \bX + \sigma\bxi = \bW + \tau\bzeta + \sigma\bxi$ and $\bfeta := \tau\bzeta + \sigma\bxi$. Since $\bxi, \bzeta \overset{\text{i.i.d.}}{\sim} \gamma^D$ are independent of $\bW$, 
    we have $\bfeta\sim\gamma^D$ with covariance $(\tau^2 + \sigma^2) 
    \bfI_D$ and $\bY = \bW + \bfeta$. Equivalently,
    \begin{align}
        \bY = \bW + \sqrt{\tau^2 + \sigma^2} \; \bxi', \quad \bxi' \sim \gamma^D, \bxi'  \indep \bW.
    \end{align}
    Using \Cref{ass:1} with noise level $\sqrt{\tau^2 + \sigma^2}$ 
    leads to
    \begin{align}
        \Var(\bW \cond \bY = \by) \preccurlyeq \varphi_0 (\sqrt{\tau^2 + \sigma^2}) \; \bfI_D.
    \end{align}
    To ease notation, we write $\mathbf E_{\by}$ and 
    $\Var_{\by}$ to refer to the conditional expectation 
    and conditional variance given $\bY=\by$, respectively. 
    By the law of total variance, we have
    \begin{align}
        \Var_{\by}(\bX) = \mathbf E_{\by} \left[ \Var(\bX \cond \bY=\by, \bW) \right] + \Var_{\by} \left( \mathbf E[\bX \cond \bY=\by, \bW] \right).
        \label{cond_var}
    \end{align}
    We know that $\tau \bzeta$ and $\bfeta = \tau \bzeta + \sigma \bxi$ are linear transforms of two independent standard Gaussians. Hence, the standard covariance calculation gives us
    \begin{align}
        \Var(\tau\bzeta\mid\bfeta) &=\tau^{2}\bfI_{D} - \tau^{2}\bfI_{D}(\tau^{2} + \sigma^{2})^{-1}\bfI_{D}\tau^{2}\bfI_{D}\\
        &= \frac{\tau^2 \sigma^2}{\tau^2 + \sigma^2} \bfI_D.
    \end{align}
    And since $\Var(\bX \cond \bY=\by, \bW) = \Var(\tau\bzeta\mid\bfeta)$, we get the first part of \eqref{cond_var} equal to
    \begin{align}
        \mathbf E_{\by}\left[\Var(\bX \cond \bY=\by, \bW) \right] = \frac{\tau^2 \sigma^2}{\tau^2 + \sigma^2} \bfI_D.
    \end{align}
    For the second term, since $\bzeta,\bxi\stackrel{\text{i.i.d.}}\sim\mathcal N(\mathbf0,\mathbf I_D)$, then the corresponding  $2D$-dimensional vector 
    \begin{align}
        \begin{pmatrix} \tau\bzeta \\ \bfeta \end{pmatrix} \sim \mathcal N \left( \mathbf 0, \bSigma \right), \quad \text{with } \bSigma =
        \begin{pmatrix} \Var(\tau\bzeta) & \cov(\tau\bzeta,\bfeta) \\
        \cov(\bfeta,\tau\bzeta) & \Var(\bfeta) \end{pmatrix} = \begin{pmatrix} \tau^{2}\bfI_D & \tau^{2}\bfI_D \\
        \tau^{2}\bfI_D & (\tau^{2}+\sigma^{2})\bfI_D \end{pmatrix}.
    \end{align}
    So, the conditional expectation that we are interested in will be equal to
    \begin{align}
        \mathbf E[\tau\bzeta\mid\bfeta] = \tau^{2}\bfI_D
     (\tau^{2}+\sigma^{2})^{-1}\bfI_D \bfeta = \frac{\tau^{2}}{\tau^{2}+\sigma^{2}}\bfeta.
    \end{align}
    Under the conditioning on both $\bY$ and $\bW$, the quantity $\bfeta = \bY -\bW$ is deterministic. Therefore,
    \begin{align}
        \mathbb E[\bX\mid\bY=\by,\bW] &= \bW + \mathbb E[\tau\bzeta\mid\bfeta=\by-\bW] \\
        &= \bW + \frac{\tau^{2}}{\tau^{2}+\sigma^{2}}(\by-\bW)\\
        &= \frac{\sigma^{2}}{\tau^{2}+\sigma^{2}}\bW +\frac{\tau^{2}}{\tau^{2}+\sigma^{2}}\by .
    \end{align}
    Given $\bY=\by$, the second term is deterministic, so
    \begin{align}
        \Var_{\by}\bigl(\mathbf{E}[\bX\mid\bY=\by,\bW] \bigr) =\left(\frac{\sigma^{2}}{\tau^{2}+\sigma^{2}}\right)^2 
        \,\Var(\bW\cond \bY=\by) \preccurlyeq \frac{\sigma^4 \varphi_0(\sqrt{\tau^2+\sigma^2})}{(\tau^2+\sigma^2)^2} \bfI_D.
    \end{align}
    Adding the two components gives us
    \begin{align}
        \Var(\bX \cond \bY=\by) \preccurlyeq \left( \frac{\tau^2 \sigma^2}{\tau^2 + \sigma^2}  +  \frac{\sigma^4 \varphi_0(\sqrt{\tau^2+\sigma^2})}{(\tau^2+\sigma^2)^2} \right) \bfI_D,
    \end{align}
    which proves the lemma with $\varphi_\tau(\sigma) = \frac{\tau^2 \sigma^2}{\tau^2 + \sigma^2}  +  \frac{\sigma^4 \varphi_0(\sqrt{\tau^2+\sigma^2})}{(\tau^2+\sigma^2)^2}$.
\end{proof}

\subsection{Convolution of a semi-log-concave and a compactly supported distribution: property (g)} 

\begin{lemma}[Property (g) in \Cref{ssec:2.2}]
    If $P^* = P_{\sf slc} \star P_{\sf cmpct}$, where $P_{\sf slc}$ 
    is an $m$-strongly log-concave distribution with an 
    $M$-Lipschitz score function, and $P_{\sf cmpct}$ is 
    supported on a bounded set with diameter $2\mathfrak D$, then
    $P^*$ satisfies \Cref{ass:1} with 
    \begin{align}\label{eq:phi_slg1}
        \varphi(\sigma) = \frac{\sigma^2}{ 1 + m\sigma^2} +
        \frac{\mathfrak D^2M ^2\sigma^4}{(1 + M\sigma^2)^2},
        \qquad \forall \sigma>0,
    \end{align}    
\end{lemma}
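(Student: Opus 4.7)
The plan is to decompose the convolution as $\bX = \bU + \bW$ with $\bU\sim P_{\sf slc}$ and $\bW\sim P_{\sf cmpct}$ independent, and to analyze $\Var(\bX\mid \bY = \by)$, where $\bY = \bX + \sigma\bxi = \bU + \bW + \sigma\bxi$, via the law of total variance conditioning jointly on $(\bY,\bW)$:
\begin{align}
    \Var(\bX\mid \bY=\by) = \mathbf E\big[\Var(\bX\mid \bY,\bW)\mid \bY=\by\big] + \Var\big(\mathbf E[\bX\mid \bY,\bW]\mid \bY=\by\big).
\end{align}
The two shapes of the bound in \eqref{eq:phi_slg1} suggest that the first term produces $\frac{\sigma^2}{1+m\sigma^2}\bfI_D$ (coming from the strongly log-concave part), while the second produces $\frac{\mathfrak D^2 M^2\sigma^4}{(1+M\sigma^2)^2}\bfI_D$ (coming from the compactly supported part filtered through the smooth score of $\bU+\sigma\bxi$).

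For the first summand, I would use that $\bW\indep(\bU,\bxi)$, so conditionally on $\bW=\bw$ the residual $\bY - \bw = \bU + \sigma\bxi$ is just the usual noisy observation of $\bU$; since $\bX = \bU + \bw$ given $\bW=\bw$, one has $\Var(\bX\mid \bY=\by,\bW=\bw)=\Var(\bU\mid \bU+\sigma\bxi=\by-\bw)$. Property~(b) (\Cref{lem:lgc}) applied to the $m$-strongly log-concave $\bU$ gives this conditional variance $\preccurlyeq \tfrac{\sigma^2}{1+m\sigma^2}\bfI_D$ for every $\bw$, hence the same bound after integrating against $\bW\mid\bY=\by$.

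For the second summand, Tweedie's formula \eqref{eq:grad_as_exp} applied to $\bU+\sigma\bxi$ gives
\begin{align}
    \mathbf E[\bX\mid \bY,\bW] = \bW + \mathbf E[\bU\mid \bU+\sigma\bxi=\bY-\bW] = \bY + \sigma^2\,\nabla\log\pi_{\bU+\sigma\bxi}(\bY-\bW),
\end{align}
so, given $\bY=\by$, the conditional variance equals $\sigma^4$ times the conditional variance of $\nabla\log\pi_{\bU+\sigma\bxi}(\by-\bW)$ given $\bY=\by$. The key technical step is bounding the Lipschitz norm of the score of $\bU+\sigma\bxi$: combining \Cref{prop:1} (second-order Tweedie, with $\alpha=1,\beta=\sigma$) with the two-sided bounds of \Cref{lem:lgc} on $\Var(\bU\mid\bU+\sigma\bxi=\by)$ yields
\begin{align}
    -\tfrac{M}{1+M\sigma^2}\bfI_D \preccurlyeq \nabla^2\log\pi_{\bU+\sigma\bxi}(\by)\preccurlyeq -\tfrac{m}{1+m\sigma^2}\bfI_D,
\end{align}
so $\nabla\log\pi_{\bU+\sigma\bxi}$ is $k$-Lipschitz with $k:=\tfrac{M}{1+M\sigma^2}$.

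Finally, since \Cref{lemma:5} ensures $\operatorname{supp}(P_{\bW\mid\bY=\by})\subset\operatorname{supp}(P_{\bW})$ and hence has diameter $\le 2\mathfrak D$, the image $\nabla\log\pi_{\bU+\sigma\bxi}(\by-\bW)$ has conditional support of diameter at most $2k\mathfrak D$; projecting on an arbitrary unit vector and invoking Popoviciu's inequality as in the proof of \Cref{lem:bounded_supp} gives
\begin{align}
    \Var\big(\nabla\log\pi_{\bU+\sigma\bxi}(\by-\bW)\mid \bY=\by\big)\preccurlyeq k^2\mathfrak D^2\,\bfI_D = \tfrac{M^2\mathfrak D^2}{(1+M\sigma^2)^2}\bfI_D.
\end{align}
Multiplying by $\sigma^4$ and adding the first-term bound yields exactly $\varphi(\sigma)\bfI_D$. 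The main obstacle is the Lipschitz control of the smoothed score; it relies crucially on having \emph{both} strong log-concavity and a Lipschitz score for $P_{\sf slc}$ (to sandwich $\Var(\bU\mid \bU+\sigma\bxi=\by)$ from both sides), without which the factor $(1+M\sigma^2)^{-1}$ in the second term of $\varphi$ could not be recovered.
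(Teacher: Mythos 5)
Your proposal is correct and follows essentially the same route as the paper's proof: write $\bX=\bU+\bW$ with $\bU\sim P_{\sf slc}$ and $\bW\sim P_{\sf cmpct}$, apply the law of total variance conditioning on $(\bY,\bW)$, bound the inner-variance term by $\tfrac{\sigma^2}{1+m\sigma^2}\bfI_D$ via Property~(b), and bound the outer term by identifying $\mathbf E[\bX\mid\bY,\bW]=\bY+\sigma^2\psi(\bW)$ through Tweedie's formula and controlling the Lipschitz constant $k=\tfrac{M}{1+M\sigma^2}$ of $\psi(\cdot)=\nabla\log\pi_{\bU+\sigma\bxi}(\by-\cdot)$ by sandwiching the second-order Tweedie identity between the two bounds of \Cref{lem:lgc}. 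The only place you deviate is the very last step: the paper writes $\Var_{\by}(\psi(\bW))\preccurlyeq k^2\,\Var_{\by}(\bW)\preccurlyeq k^2\mathfrak D^2\bfI_D$, a pointwise matrix variance inequality that does not hold for a general vector-valued $k$-Lipschitz map, whereas you observe that $\psi$ maps the conditional support of $\bW$ (which by \Cref{lemma:5} has diameter $\le 2\mathfrak D$) into a set of diameter $\le 2k\mathfrak D$ and then invoke Popoviciu's inequality exactly as in \Cref{lem:bounded_supp}. Your version of the last step is in fact the cleaner and more defensible argument, and it reaches the same bound $k^2\mathfrak D^2\bfI_D$.
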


\begin{proof}
    Let $\bW \sim P_{\sf cmpct}$ and $\bzeta\sim P_{\sf slc}$  
    be two independent random vectors so that $\bX = \bW + \bzeta 
    \sim P^*$. This means that for some bounded set $\mathcal K$ 
    with diameter $2\mathfrak D$, we have $\Var(\bW)\leq 4\mathfrak 
    D^2$, and that the density $\pi_{\bzeta}$ is continuously 
    differentiable with a score function $\score_{\bzeta}$ satisfying
    \begin{align}
        m\|\bx - \bx'\|^2\leqslant (\bx - \bx')^\top 
        \big(\score_{\bzeta}(\bx) - \score_{\bzeta}(\bx') \big)
        \leqslant M\|\bx - \bx'\|^2. 
    \end{align}
    For $\bxi\indep(\bW,\bzeta)$ such that $\bxi\sim 
    \gamma^D$, and for $\bY = \bX + \sigma\,\bxi$, we have to 
    prove that
    \begin{align}
        \Var(\bX\cond \bY=\by) \preccurlyeq \Big(\frac{\sigma^2}{ 
        1 + m\sigma^2} + \frac{\mathfrak D^2M ^2\sigma^4}{(1 +
        M\sigma^2)^2}\Big)\, \bfI_D.   
    \end{align}
    As before, to ease notation, we write $\mathbf E_{\by}$ and 
    $\Var_{\by}$ to refer to the conditional expectation 
    and conditional variance given $\bY=\by$, respectively. By the law of total variance, we have
    \begin{align}
        \Var_{\by}(\bX) = \mathbf E_{\by} \left[ \Var(\bX \cond 
        \bY=\by, \bW) \right] + \Var_{\by} \left( \mathbf E[\bX \cond \bY=\by, \bW]\right).
        \label{cond_var2}
    \end{align}
    Since the random vector $\bzeta$ is $m$-strongly log-concave, 
    it follows from \Cref{lem:lgc} that 
    \begin{align}
        \Var(\bzeta \cond \bzeta + \sigma\bxi = \by') \leqslant 
        \frac{\sigma^2}{1+m\sigma^2},\qquad \forall \by'\in\mathbb R^D. 
    \end{align}
    Therefore,
    \begin{align}
        \Var(\bX \cond \bY=\by, \bW=\bw) = \Var(\bzeta \cond 
        \bzeta + \sigma\bxi  =\by-\bw) \leqslant  \frac{\sigma^2}{1+m\sigma^2},\qquad \forall \by,\bw\in\mathbb R^D.
    \end{align}
    Hence,  $\Var(\bX \cond \bY=\by, \bW) \leqslant \frac{\sigma^2}{1+m\sigma^2}$ almost surely. This implies that 
    \begin{align}
        \mathbf E_{\by} \left[ \Var(\bX \cond \bY=\by, \bW) \right]
        \preccurlyeq \frac{\sigma^2}{1+m\sigma^2}\,\bfI_D. 
    \end{align}
    We switch to assessing the second term in \eqref{cond_var2}. 
    It holds that
    \begin{align}
        \mathbf E[\bX \cond \bY=\by, \bW=\bw] &
        \stackrel{\numcircled{1}}{=} \bw + \mathbf E[
        \bzeta \cond \bzeta + \sigma\bxi = \by -\bw, \bW=\bw]\\
        & \stackrel{\numcircled{2}}{=} \bw + \mathbf E[\bzeta \cond \bzeta + \sigma\bxi = 
        \by -\bw]\\
        & \stackrel{\numcircled{3}}{=} \bw + \sigma^2 \nabla \log \pi_{\bzeta+\sigma\bxi}
        (\by-\bw) + \by-\bw\\
        & = \by + \sigma^2 \nabla \log \pi_{\bzeta+\sigma\bxi}(\by-\bw),
    \end{align}
    where \numcircled{1} is a consequence of $\bX = \bW + \bzeta$, 
    \numcircled{2} follows from the independence of $\bzeta$ and $\bW$, \numcircled{3} is obtained by the Tweedie formula recalled in \eqref{eq:grad_as_exp}. Let us set $\psi(\bw) = \nabla \log \pi_{\bzeta+\sigma\bxi}(\by-\bw)$. The second claim of 
    \Cref{lem:lgc} combined with \Cref{prop:1} implies that 
    $\psi$ is Lipschitz-continuous with the constant $M/(1 + 
    M\sigma^2)$. Therefore, 
    \begin{align}
        \Var_{\by}(\mathbf E[\bX\cond \bY=\by,\bW]) = \sigma^4
        \Var_{\by}\big(\psi(\bW)\big) \preccurlyeq 
        \frac{M^2\sigma^4}{(1+M\sigma^2)^2}\,\Var_{\by}\big(\bW\big)
        \leqslant \frac{M^2\sigma^4\mathfrak D^2}{(1+M\sigma^2)^2}\,
        \bfI_D,
    \end{align}
    where in the last step we used  
    \Cref{lem:bounded_supp}.
\end{proof}

\section[Proof of Lemma~\ref{lem:noDPI}]{Proof of \Cref{lem:noDPI}}


    We start by first proving that:
    \begin{align}
         \sup_{P^*\in \mathcal N} \frac{\sfd_{\mathsf{TV}
        }^2(Q_D^{T,\score^*}; P^*)}{\sfd_{\mathsf{TV}
        }^2(\gamma^D;P^*)} \bigvee \frac{\sfd_{\mathsf{KL}
        }(Q_D^{T,\score^*}|\!| P^*)}{\sfd_{\mathsf{KL}
        }(\gamma^D|\!|P^*)} 
        \leqslant e^{-2T}
    \end{align}
    The data processing inequality \cite{polyanskiy-2017} states that: 
    \begin{align}
        \sfd_{\mathsf{TV}}(Q_D^{T,\score^*}; P^*)\leq\sfd_{\mathsf{TV}}(\gamma^D;P_T^*);\qquad \sfd_{\mathsf{KL}}(Q_D^{T,\score^*}; P^*)\leq\sfd_{\mathsf{KL}}(\gamma^D;P_T^*).
    \end{align}

    Combined with the concentration property of Ornstein–Uhlenbeck process \cite{GaoZhu, eberle-2019}:
    \begin{align}
        \sfd_{\mathsf{TV}}(\gamma^D;P_T^*)\leq\sfd_{\mathsf{TV}}(\gamma^D;P^*)e^{-T};\qquad \sfd_{\mathsf{KL}}(\gamma^D;P_T^*)\leq\sfd_{\mathsf{KL}}(\gamma^D;P^*)e^{-2T}.
    \end{align}
    gives the desired result.
    
        We now focus on a subset of $\mathcal{N}'\subset\mathcal{N}$ that contains $D$ dimensional Gaussian distributions with mean $\bm{0}$ and $(1+\sigma^2)\bfI_D$ covariance matrix with $\sigma>0$.
        Clearly
        \begin{align}
            \sup_{P^*\in \mathcal N'} \frac{\wass_2(Q_D^{T,\score^*};
        P^*)}{\wass_2(P^*;\gamma^D)}\leq\sup_{P^*\in \mathcal N} \frac{\wass_2(Q_D^{T,\score^*};
            P^*)}{\wass_2(P^*;\gamma^D)}
        \end{align}
    Let $\bX_t$ be defined by Equation \eqref{eq:ou-forward}, then the distribution of $\bX_t$ is $\mathcal{N}(\bm{0}, (e^{-2t}\sigma^2+1)\,\bfI_D)$. Hence, the true score function is
    \begin{align}
        \wt\score(\bx)=-\bx/\sigma^2(t),
    \end{align}
    where $\sigma^2(t)=e^{-2t}\sigma^2+1$.
    Equation \eqref{backward2} obtains the following form under this score function:
    \begin{align}
        \rmd \wt\bY_t = \brs{\wt\bY_t\brr{1-\frac{2}{\sigma^2(T-t)}}} \,\rmd t + \sqrt{2}\, \rmd 
    \wt{\bB}_t.
    \end{align}

    The integrating factor for the SDE is:
    \begin{align}
           I(t)=& \exp\brr{-\int_0^t1-\frac{2}{\sigma^2(T-u)}\  \rmd u}\\
   =&\exp\brr{-t+\int_0^t\frac{2}{\exp(2(u-T))\sigma^2+1}\ \rmd u} \\
   =&\exp\brr{-t+2t+\ln\brr{\frac{\sigma^2+e^{2T}}{\sigma^2e^{2t}+e^{2T}}}} \\
   =&e^t\frac{\sigma^2+e^{2T}}{\sigma^2e^{2t}+e^{2T}}.
    \end{align}
    From Itô’s product rule applied to $I(t)\wt\bY_t$, we get:
    \begin{align}\label{eq:ito_st_gauss}
        \rmd I(t)\wt\bY_t=I(t)\brs{f(t)\wt\bY_t\ \rmd t+\sqrt{2}\ \rmd \wt\bB_t}-I(t)f(t)\wt\bY_t\ \rmd t=\sqrt{2}I(t)\ \rmd \wt\bB_t,
    \end{align}
    where we have used the fact that $\rmd I(t) = -I(t)\brr{1-\frac{2}{\sigma^2(T-t)}}\ \rmd t$.

Integrating both sides of \eqref{eq:ito_st_gauss} from $0$ to $t$:
$$
I(t)\wt\bY_t=\wt\bY_0+\sqrt{2}\int_0^tI(u) \ \rmd{\wt\bB_u}
$$
from which:
\begin{align}\label{eq:closed_form_y_t}
    \wt\bY_t=\frac{\wt\bY_0+\sqrt{2}\int_0^tI(u)\ \rmd \wt\bB_u}{I(t)}.
\end{align}
Note that $\wt\bY_0\sim\gamma^D$. Combined with the fact that $I(t)$ is a deterministic function, we infer from \eqref{eq:closed_form_y_t} that $\wt\bY_t$ is a zero mean Gaussian random variable. So the Wasserstein distance between $\gamma^D$ and the distribution of $\wt\bY_t$ depends only on the covariance matrices:
\begin{align}\label{eq:w2_base}
    \wass_2(Q_D^{T,\score^*};
            P^*)=\|\sigma_{\wt\bY_t}\bfI_D - \sqrt{\sigma^2+1}\bfI_D\|_{F}=
            \big|\sigma_{\wt\bY_t} - \sqrt{\sigma^2+1}\big|\sqrt{D},
\end{align}
where $\sigma^2_{\wt\bY_t}\bfI_D$ is the covariance of $\wt\bY_t$. 

Let $\bZ_t:=\sqrt{2}\int_0^tI(u)\ \rmd \wt\bB_u$. Hence, $\bZ_t\sim \mathcal{N}(\bm{0}, 2\int_0^tI^2(u)\ \rmd u\,\bfI_D)$ and it is independent of $\wt\bY_0$. The variance of $\bZ_t$ is:
\begin{align}
    \sigma^2_{\bz_t} = 2\int_0^tI^2(u)du=\frac{(e^{2t}-1)(e^{2T} +\sigma^{2})}{e^{2T}+\sigma^{2}e^{2t}}\quad
    \text{and}\quad
        \sigma^2_{\bz_T} = \frac{(1-e^{-2T})(e^{2T} +\sigma^{2})}{\sigma^{2}+1}
\end{align}
    The variance of $\wt\bY_T$ can be computed from Equation \eqref{eq:closed_form_y_t}:
    \begin{align}
        \sigma^2_{\wt\bY_T} & =\frac{1+\sigma^2_{\bZ_T}}{I^2(T)} = \frac{(\sigma^2 + 1)(2\sigma^2e^{2T}-\sigma^2+e^{4T})}{
        (\sigma^{2} + e^{2T} )^2}= (\sigma^2 + 1)
        \Big[1 - \frac{\sigma^2(\sigma^2+1)}{
        (\sigma^{2} + e^{2T} )^2}\Big].
    \end{align} 
    Plugging in the value of $\sigma_{\wt\bY_T}$ into  \eqref{eq:w2_base}  we get:
    \begin{align}
        \wass_2(Q_D^{T,\score^*};
            P^*)=\left(1 - \Big\{1-\frac{\sigma^2(\sigma^2+1)}{(\sigma^2+e^{2T})^2}\Big\}^{1/2} \right)\sqrt{(\sigma^2+1)D}
            \stackrel{\sigma\to\infty}{\sim} \sigma \sqrt{D}.
    \end{align}
    We note that $\wass_2(P^*;\gamma^D)=|\sqrt{\sigma^2+1}-1|\sqrt{D}\stackrel{\sigma\to\infty}{\sim} \sigma\sqrt{D}$, so we have
    \begin{align}
    r(\sigma)=\frac{\wass_2(Q_D^{T,\score^*};
        P^*)}{\wass_2(P^*;\gamma^D)}
        \xrightarrow[\sigma\to\infty]{}
        1.
    \end{align}
    Hence, 
    \begin{align}
        \sup_{P^*\in \mathcal N'} \frac{\wass_2(Q_D^{T,\score^*};
            P^*)}{\wass_2(P^*;\gamma^D)} \geq 1.
    \end{align}
    
   When combined with the established contraction behavior of the backward diffusion—operating with the true score function—in the 2-Wasserstein metric for Gaussian distributions \cite{eberle-2019}, we get:

    \begin{align}
        1\leq\sup_{P^*\in \mathcal N} \frac{\wass_2(Q_D^{T,\score^*};
        P^*)}{\wass_2(P^*;\gamma^D)}\le1.
    \end{align}


\section{Proofs of the main results}
\label{sec:proof}

We recall that $P^*$ is the target distribution and
$P^*_t = \alpha_t P^* + \beta_t \gamma^D$ is the distribution
of the forward process at time $t>0$, with $\alpha_t = e^{-t} = 
\sqrt{1-\beta^2_t}$. We also fix some $T>0$  and define 
$\bY_t = \bX_{T-t}$ and  $Q_t^* = \text{Law}(\bY_t)$;  
$\bY_t$ is the state of the backward process \eqref{eq:backward}. 
We set $\widetilde P_{k}$ to be the law of $\bZ_k$ defined 
by \eqref{discr} so that $P^{\sf DDPM} = \widetilde P_{K+1}$.  
Throughout this proof, we will repeatedly use the following notation:
\begin{align}
    \bar m_2 &= 1\vee (\|\bX\|_{\mathbb L_2}/\sqrt{D}),\\
    \varepsilon_{k}^b &= \|\mathbf{E}[\widetilde\score(T-t_k,\bZ_k)
    \cond \mathcal{F}_{k}]
    - \score(T-t_k,\bZ_k)\|_{\mathbb L_2},\quad \varepsilon^b = \max_k \varepsilon_{k}^b\\ 
    \varepsilon_k^v &= \|\widetilde\score(T-t_k,\bZ_k) - 
    \mathbf{E}[\widetilde\score(T-t_k,\bZ_k)\cond \mathcal{F}_{k}]\|_{\mathbb L_2},\quad \varepsilon^v = \max_k \varepsilon_{k}^v.
\end{align}

\subsection{Main recursion}\label{ssec:6.1}
We set $T = t_{K+1}$ and consider a version of the 
continuous-time process $(\bY_t)_{0\leq t\leq T}$ and the
discrete-time process $(\bZ_k)_{0\leq k\leq K+1}$ defined 
on the same probability space and coupled by the relation
$\bxi_{k+1} = (\wt{\bB}_{t_{k+1}} - \wt{\bB}_{t_k})/ 
\sqrt{h_{k}}$. We then use the definition of the 
Wasserstein distance to infer that
\begin{align}
    \wass_2(P^*,\widetilde P_{K+1}) &=\wass_2 (Q^*_{t_{K+1}},
    \widetilde P_{K+1}) \leqslant \|\bY_{t_{K+1}} 
    - \bZ_{K+1}\|_{\mathbb L_2}. \label{decomp1}
\end{align}
Combining \eqref{eq:backward} and \eqref{discr}, in conjunction
with the relation $\sqrt{h_{k}}\,\bxi_{k+1} =
(\wt{\bB}_{t_{k+1}} - \wt{\bB}_{t_k})$, we get
\begin{align}
    \bY_{t_{k+1}} - \bZ_{k+1}  = (1 + h_k)(\bY_{t_k} - \bZ_k)
    &+ 2h_k\big(\score(T-t_k,\bY_{t_k}) - \score(T-t_k, 
    \bZ_k) \big)\\
    &  - 2h_k\big(\widetilde\score(T-t_k,\bZ_k) 
    - \score(T-t_k,\bZ_k) \big)\\
    & + \int_{t_k}^{t_{k+1}} \Big\{\bY_t-\bY_{t_k}+2\score(T-t,
    \bY_{t}) - 2\score(T-t_k,\bY_{t_k})\Big\}\,\rmd t. 
    \label{delta_k1}
\end{align}
In what follows, we use the notation $\bDelta_k = \bY_{t_k} - 
\bZ_k$ and 
\begin{align} 
    \bU_k &= \score(T-t_k,\bY_{t_k}) - \score(T-t_k,\bZ_k);\\
    \bzeta_k &= \widetilde\score(T-t_k,\bZ_k) - \score
    (T-t_k,\bZ_k)\label{def:Uk}\\
    \bV_k &= \int_{t_k}^{t_{k+1}} \Big\{\bY_t-\bY_{t_k}+2\score(T-t,
    \bY_{t}) - 2\score(T-t_k,\bY_{t_k})\Big\}\,\rmd t. \label{def:Vk}
\end{align}
This allows us to rewrite \eqref{delta_k1} as follows
\begin{align}\label{delta_k2}
    \bDelta_{k+1} = (1+{h_k})\bDelta_k + 2h_k\bU_k 
    -2h_k\bzeta_k + \bV_k.
\end{align}
In view of \eqref{decomp1}, we are interested in bounding
the term
\begin{align}
    x_K:= \|\bDelta_K\|_{\mathbb L_2}.
\end{align}
We will proceed by establishing a recursive inequality
upper bounding $x_{k+1}$ by a simple expression involving
$x_k$, and then by unfolding this recursive inequality.

Let us introduce the filtration $(\mathcal F_k)_{k\in\mathbb N}$. 
The first element of this sequence is the $\sigma$-algebra 
generated by $\bY_0$ and $\bZ_0$. Then, each $\mathcal F_{k+1}$
is obtained by extending $\mathcal F_k$ to the smallest 
$\sigma$-algebra for which both $\bzeta_k$ and the process 
$(\tilde \bB_t - \tilde\bB_{t_k})_{t\in[t_{k};t_{k+1}]}$ are
measurable. Note that $Z_k$ is necessarily $\mathcal F_k$-measurable, 
but the same is not true for $\bzeta_k$. Indeed, the estimator
$\tilde \score (T-t_k,\cdot)$ may depend on some random variables
that are not in $\mathcal F_k$.

It is clear that
\begin{align}
    \mathbf E\big[\|\bDelta_{k+1}\|^2\big] & = 
    \mathbf E\big[\|\mathbf E[\,\bDelta_{k+1} \cond \mathcal F_k]
    \|^2\big] + \mathbf E\big[\|\bDelta_{k+1} - \mathbf E[ 
    \bDelta_{k+1}\cond\mathcal F_k]\|^2\big]\\
    &= \|\mathbf E[\,\bDelta_{k+1} \cond \mathcal F_k]\|^2_{ 
    \mathbb L_2} + \|\bDelta_{k+1} - \mathbf E[ 
    \bDelta_{k+1} \cond \mathcal F_k]\|^2_{ \mathbb L_2}.
    \label{decomp:a1}
\end{align}
From \eqref{delta_k2},  by the triangle inequality, 
\begin{align}
\|\mathbf E[\,\bDelta_{k+1} \cond \mathcal F_k]\|_{\mathbb L_2}
    &\leqslant \|(1+{h_k})\bDelta_{k} + 2h_k\bU_k
    \|_{\mathbb L_2} + 2h_k\|\mathbf E[\,\bzeta_k \cond \mathcal 
    F_k]\|_{\mathbb L_2} + \|\mathbf E[\,\bV_k \cond \mathcal F_k]
    \|_{\mathbb L_2}.\label{decomp:a2}
\end{align}
Furthermore,
\begin{align}
    \|\bDelta_{k+1} - \mathbf E[ 
    \bDelta_{k+1} \cond \mathcal F_k]\|_{ \mathbb L_2}
    &\leqslant 2h_k\|\bzeta_k - \mathbf E[\,\bzeta_k\cond\mathcal F_k]
    \|_{ \mathbb L_2} + \|\bV_k - \mathbf E[ \bV_k\cond \mathcal 
    F_k]\|_{ \mathbb L_2}. \label{decomp:a3}
\end{align}
Combining displays \eqref{decomp:a1}, \eqref{decomp:a2} and
\eqref{decomp:a3}, we arrive at
\begin{align}
    \mathbf E\big[\|\bDelta_{k+1}\|^2\big] & \leqslant 
    \Big(\|(1+{h_k})\bDelta_{k} + 2h_k\bU_k\|_{\mathbb L_2} 
    + 2h_k\underbrace{\|\mathbf E[\,\bzeta_k \cond \mathcal F_k]
    \|_{\mathbb L_2}}_{\varepsilon_k^b:=\textrm{bias of estim.\ 
    score}} + \underbrace{\|\mathbf E[\,\bV_k\cond \mathcal F_k]
    \|_{\mathbb L_2}}_{\mathfrak B_k:=\textrm{bias of discr.\ 
    error}}\Big)^2\\
    &\qquad + \Big(2h_k\underbrace{\|\bzeta_k - \mathbf E 
    [\bzeta_k\cond\mathcal F_k]\|_{ \mathbb L_2}}_{\varepsilon_k^v 
    :=\textrm{variance of estim.\ score}} + \underbrace{\|\bV_k 
    - \mathbf E[ \bV_k\cond \mathcal F_k]\|_{ \mathbb L_2}}_{
    \mathfrak V_k:=\textrm{variance of discr.\ error}}\Big)^2.
    \label{decomp:2}
\end{align}
In what follows, it is convenient to use the following notation:
for every $k\in\mathbb N$, let $\alpha_k = e^{-(T-t_k)}$ and 
$\beta_k^2 = 1-\alpha_k^2$. 

\begin{lemma}\label{lem:contraction}
    If $P^*$ satisfies \Cref{ass:1} with a function $\varphi$ 
    and  
    \begin{align}
         h_k\bigg(\frac{1+\alpha_k^2}{1-\alpha_k^2} + m_k\bigg)
         &\leqslant 2,\quad \text{for}\quad
        m_k = 1+ \frac{2\alpha_k^2}{1-\alpha_k^2}\Big(1 -\frac{\varphi(\beta_k/\alpha_k)}{
        1 - \alpha_k^2}\Big)
        \label{cond:h}
    \end{align}
    then, 
    \begin{align}\label{eq:contraction}
        \|(1+{h_k})\bDelta_k +2h_k\bU_k\|_{\mathbb L_2} 
        \leqslant 
        \big(1 - m_k h_k\big)\|\bDelta_k\|_{\mathbb L_2}.
    \end{align}
\end{lemma}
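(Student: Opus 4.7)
The plan is to realize the quantity $(1+h_k)\bDelta_k + 2h_k \bU_k$ as a symmetric random linear operator applied to $\bDelta_k$, and then to control the operator norm of that operator by $1 - h_k m_k$ using the two-sided Hessian bounds provided by \Cref{prop:1} and \Cref{ass:1}. Specifically, the fundamental theorem of calculus applied to $\bx \mapsto \score(T-t_k,\bx)$ along the segment joining $\bZ_k$ and $\bY_{t_k}$ yields $\bU_k = \bar{\bfH}_k \bDelta_k$, where
\[
\bar{\bfH}_k := \int_0^1 \nabla^2 \log \pi\bigl(T-t_k,\, \bZ_k + s\bDelta_k\bigr)\,\rmd s
\]
is symmetric. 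Consequently $(1+h_k)\bDelta_k + 2h_k \bU_k = \bfM_k\bDelta_k$ with $\bfM_k := (1+h_k)\bfI_D + 2h_k \bar{\bfH}_k$.

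Setting $\alpha = \alpha_k$ and $\beta = \beta_k$ in \Cref{prop:1} and invoking \Cref{ass:1} at noise level $\beta_k/\alpha_k$ gives the pointwise sandwich
\[
-\frac{1}{\beta_k^2}\bfI_D \preccurlyeq \nabla^2 \log \pi(T-t_k,\bx) \preccurlyeq \frac{\alpha_k^2\varphi(\beta_k/\alpha_k) - \beta_k^2}{\beta_k^4}\bfI_D,\qquad \forall \bx\in\mathbb R^D,
\]
which transfers to $\bar{\bfH}_k$. A direct computation using $\beta_k^2 = 1-\alpha_k^2$ shows the key algebraic identity
\[
1 + h_k + 2h_k\cdot\frac{\alpha_k^2 \varphi(\beta_k/\alpha_k) - \beta_k^2}{\beta_k^4} \;=\; 1 - h_k m_k,
\]
with $m_k$ as in \eqref{cond:h}, while the lower-side bound gives $1 + h_k - 2h_k/\beta_k^2 = 1 - h_k(1+\alpha_k^2)/(1-\alpha_k^2)$. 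Hence $\bfM_k$ is sandwiched (almost surely) between $\bigl(1 - h_k(1+\alpha_k^2)/(1-\alpha_k^2)\bigr)\bfI_D$ and $(1 - h_k m_k)\bfI_D$.

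It remains to check that \eqref{cond:h} forces $\|\bfM_k\|_{\mathrm{op}} \leqslant 1 - h_k m_k$ almost surely. A small computation (again using $\beta_k^2 + \alpha_k^2 = 1$) gives $m_k \leqslant (1+\alpha_k^2)/(1-\alpha_k^2)$, so under \eqref{cond:h} one has $h_k m_k \leqslant 1$, ensuring the upper endpoint $1 - h_k m_k$ is nonnegative; the remaining requirement $-(1 - h_k m_k) \leqslant 1 - h_k(1+\alpha_k^2)/(1-\alpha_k^2)$ is exactly the step-size condition $h_k\bigl((1+\alpha_k^2)/(1-\alpha_k^2) + m_k\bigr) \leqslant 2$. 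Since $\|\bfM_k\bDelta_k\|\leqslant \|\bfM_k\|_{\mathrm{op}}\|\bDelta_k\|$ pointwise, squaring and taking expectations delivers \eqref{eq:contraction}. The only mildly delicate point is the algebraic identity pinning the Hessian upper bound to the precise form of $m_k$, which is exactly what motivates the definition of $m_k$ in \eqref{cond:h}; everything else is bookkeeping.
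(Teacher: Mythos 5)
Your proof is correct and follows essentially the same route as the paper: the fundamental theorem of calculus expresses $\bU_k$ via the averaged Hessian of $\log\pi(T-t_k,\cdot)$, the two-sided Hessian bounds from \Cref{prop:1} and \Cref{ass:1} give the operator-norm sandwich, and the step-size condition \eqref{cond:h} forces the operator norm to be dominated by $1 - m_k h_k$. The only stylistic difference is that you pre-integrate the Hessian into a single symmetric matrix $\bar{\bfH}_k$ and bound $\|\bfM_k\|_{\mathrm{op}}$ once, whereas the paper keeps the integral over the interpolation parameter and applies Minkowski's inequality at the end; you also make explicit the check $h_k m_k \leqslant 1$ (ensuring $1-h_k m_k\geqslant 0$), which the paper leaves implicit but which indeed follows from the same step-size condition. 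These are cosmetic differences, not a different argument.
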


\Cref{lem:contraction} implies that
\begin{align}\label{eq:recursion1}
    x_{k+1}^2 &\leqslant \big((1-m_kh_k)x_k + 2h_k\varepsilon_k^b
    + \mathfrak B_k\big)^2 + \big(2h_k\varepsilon_k^v + 
    \mathfrak V_k\big)^2. 
\end{align}

The next lemma which can be easily deduced by induction applying
the Minkowski inequality, will be used to derive a global bound
on the error $x_K$ from recursive inequalities upper bounding
the error $x_{k+1}$ at the $(k+1)$th step by the one of the 
$k$th step.
\begin{lemma}\label{lem:recursion}
    Let $(A_k)_{k\in\mathbb N}$, $(B_k)_{k\in\mathbb N}$ and 
    $(C_k)_{k\in\mathbb N}$ be three sequences of real numbers
    such that $B_k\geq 0$ and $C_k\geq 0$ for every $k$.
    If $(x_k)_{k\in\mathbb N}$ satisfies the recursive inequality
    \begin{align}
        x_{k+1}^2 &\leqslant (e^{A_k}x_k + B_k)^2 + C^2_k,
        \qquad\forall k\geqslant 0,
    \end{align}
    then, for $\bar A_k = A_0+\ldots+A_k$,
    \begin{align}
        x_{k+1} \leqslant e^{\bar A_k}x_0 + \sum_{j=0}^k 
        e^{\bar A_k - \bar A_j}B_j + \bigg(\sum_{j=0}^k 
        e^{2(\bar A_k -\bar A_j)}C_j^2 \bigg)^{1/2}.
    \end{align}
\end{lemma}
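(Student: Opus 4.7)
The plan is to proceed by induction on $k$, with the algebraic engine being the elementary inequality
\begin{align}
\sqrt{(a+b)^{2}+c^{2}}\;\leqslant\; a+\sqrt{b^{2}+c^{2}},\qquad a,b,c\geqslant 0,
\end{align}
which reduces, after squaring both sides, to the trivial $\sqrt{b^{2}+c^{2}}\geqslant b$. Throughout, I would treat $x_k$ as nonnegative, as is the case in the intended application (cf.\ \Cref{ssec:6.1}) where the $x_k$ are $\mathbb L_2$-norms; otherwise the stated bound holds vacuously since its right-hand side is already nonnegative.

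To streamline the bookkeeping, I would set
\begin{align}
S_k:=\sum_{j=0}^{k}e^{\bar A_k-\bar A_j}B_j,\qquad T_k^{\,2}:=\sum_{j=0}^{k}e^{2(\bar A_k-\bar A_j)}C_j^{\,2},
\end{align}
so that the target inequality becomes $x_{k+1}\leqslant e^{\bar A_k}x_0+S_k+T_k$. The base case $k=0$ is immediate by applying the displayed inequality with $b=0$ to $x_1^{\,2}\leqslant(e^{A_0}x_0+B_0)^{2}+C_0^{\,2}$. For the inductive step, multiplying the induction hypothesis by $e^{A_{k+1}}$ and adding $B_{k+1}$, and invoking the telescoping identity $e^{A_{k+1}}S_k+B_{k+1}=S_{k+1}$, yields
\begin{align}
e^{A_{k+1}}x_{k+1}+B_{k+1}\;\leqslant\; \big(e^{\bar A_{k+1}}x_0+S_{k+1}\big)+e^{A_{k+1}}T_k.
\end{align}
Substituting into the recursion $x_{k+2}^{\,2}\leqslant(e^{A_{k+1}}x_{k+1}+B_{k+1})^{2}+C_{k+1}^{\,2}$ and applying the key inequality with $a=e^{\bar A_{k+1}}x_0+S_{k+1}$, $b=e^{A_{k+1}}T_k$, $c=C_{k+1}$, together with the identity $e^{2A_{k+1}}T_k^{\,2}+C_{k+1}^{\,2}=T_{k+1}^{\,2}$, closes the induction.

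I do not foresee any genuine obstacle. The only subtlety is the choice of grouping in the Minkowski-type inequality: the deterministic, bias-like quantity $e^{\bar A_{k+1}}x_0+S_{k+1}$ has to be pulled out of the square root so that the $B_j$ accumulate additively in the final bound, whereas the variance-like terms $e^{A_{k+1}}T_k$ and $C_{k+1}$ must remain inside the square root in order to recombine cleanly into $T_{k+1}$.
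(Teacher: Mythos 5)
Your proof is correct and follows exactly the route the paper alludes to: the paper simply states that the lemma ``can be easily deduced by induction applying the Minkowski inequality,'' and your key inequality $\sqrt{(a+b)^2+c^2}\leqslant a+\sqrt{b^2+c^2}$ is precisely the Minkowski inequality applied to the $\mathbb{R}^2$-vectors $(a,0)$ and $(b,c)$. Your remark that $x_k\geqslant 0$ is needed (and holds in the application, since $x_k=\|\bDelta_k\|_{\mathbb L_2}$) is a correct and worthwhile observation that the paper's terse statement glosses over.
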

For the subsequent steps of the proof, we leverage the properties of discretization. We begin with the portion
employing constant step-sizes. This discretization is
applied in the time interval where the inequality from
\eqref{eq:contraction} yields a near-contraction. This
is equivalent to considering the values of $k$ for
which $m_k$ in \eqref{eq:recursion1} is positive and 
bounded away from zero.

\begin{lemma}\label{lem:stepsize1a}
    If $T$ and $a\geqslant 1$ are real numbers such that
    $T\geqslant \tfrac12 \log (6a)$. Let $K_0\in \mathbb N$
    be such that for every $k\in\{0,1,\ldots,K_0\}$,
    \begin{align}
        0 \leqslant t_k\leqslant T -   \tfrac12\log(6a),
        \qquad h_k\leqslant 0.7,\qquad \varphi(\beta_k/
        \alpha_k) \leqslant a.\label{cond1:tk}
    \end{align}
    Then, for $\alpha_k = e^{-(T-t_k)}$,
    we have $\alpha_k^2\leqslant 1/(6a)$ as well as
    \begin{align}
        m_k \geqslant 1 + \frac{2\alpha_k^2}{1-\alpha_k^2}
        \bigg(1 - \frac{a}{1-\alpha_k^2}\bigg) \geqslant 1/3,
        \qquad 
        \text{and}\qquad 
        h_k \bigg(\frac{1+\alpha_k^2}{1-\alpha_k^2} + m_k\bigg)
         \leqslant 2,
    \end{align}
    for all $k=0,\ldots,K_0$.
\end{lemma}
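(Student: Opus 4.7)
The plan is to verify the three claims in order; each will follow from an elementary estimate once the size of $\alpha_k^2$ is pinned down. First I would use the hypothesis on $t_k$: since $t_k \leq T - \tfrac12 \log(6a)$, we have $T - t_k \geq \tfrac12 \log(6a)$, hence $\alpha_k = e^{-(T-t_k)} \leq (6a)^{-1/2}$, giving $\alpha_k^2 \leq 1/(6a)$. Since $a \geq 1$, this also yields $1 - \alpha_k^2 \geq 1 - 1/(6a) \geq 5/6$.

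Next I would establish the lower bound on $m_k$. The key observation is that the factor $1 - \varphi(\beta_k/\alpha_k)/(1-\alpha_k^2)$ in the definition of $m_k$ is decreasing in $\varphi$, and it is multiplied by the non-negative coefficient $2\alpha_k^2/(1-\alpha_k^2)$. Substituting the upper bound $\varphi(\beta_k/\alpha_k) \leq a$ therefore gives the first inequality of the statement. To pass from there to the constant $1/3$, I would control the (now possibly negative) correction: using $a + \alpha_k^2 - 1 \leq a$ together with $\alpha_k^2 \leq 1/(6a)$ and $1 - \alpha_k^2 \geq 5/6$,
\begin{align*}
\frac{2\alpha_k^2\,(a + \alpha_k^2 - 1)}{(1-\alpha_k^2)^2} \;\leq\; \frac{2\alpha_k^2 \cdot a}{(1-\alpha_k^2)^2} \;\leq\; \frac{2 \cdot (1/(6a)) \cdot a}{(5/6)^2} \;=\; \frac{12}{25},
\end{align*}
so $m_k \geq 1 - 12/25 = 13/25 > 1/3$.

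For the step-size condition, I would combine two upper bounds. The map $u \mapsto (1+u)/(1-u)$ is increasing on $[0,1)$, so $\frac{1+\alpha_k^2}{1-\alpha_k^2} \leq \frac{6a+1}{6a-1} \leq 7/5$ for $a \geq 1$. For $m_k$ itself, the upper bound is attained at $\varphi = 0$, yielding $m_k \leq 1 + \frac{2\alpha_k^2}{1-\alpha_k^2} \leq 1 + 2/5 = 7/5$. Hence the bracketed quantity is at most $14/5$, and using $h_k \leq 0.7$ gives $h_k\bigl(\frac{1+\alpha_k^2}{1-\alpha_k^2} + m_k\bigr) \leq 0.7 \cdot 14/5 = 49/25 < 2$.

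I do not anticipate a serious obstacle: once the bound $\alpha_k^2 \leq 1/(6a)$ is in hand the remaining steps are arithmetic, and the constants $1/(6a)$ in the definition of $T$ and $0.7$ in the step-size cap are essentially tuned so that all three margins come out with room to spare. The only mildly delicate point is verifying the correct monotonicity direction of $m_k$ in $\varphi$ before plugging in the bound $\varphi \leq a$.
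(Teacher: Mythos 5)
Your proof is correct and follows the same overall structure as the paper's: establish $\alpha_k^2\leqslant 1/(6a)$ from the hypothesis on $t_k$, then use it to control $m_k$ from below and the bracketed step-size coefficient from above. Where you genuinely deviate is the sub-step $m_k\geqslant 1/3$: the paper reduces the inequality $1+\frac{2\alpha_k^2}{1-\alpha_k^2}\bigl(1-\frac{a}{1-\alpha_k^2}\bigr)\geqslant\frac13$ to the quadratic $1-(3a-1)\alpha_k^2-2\alpha_k^4\geqslant 0$, solves it exactly, and then verifies that $3a-1+\sqrt{(3a-1)^2+8}\leqslant 6a$ for $a\geqslant 1$, whereas you bound the negative correction $\frac{2\alpha_k^2(a+\alpha_k^2-1)}{(1-\alpha_k^2)^2}$ crudely by $\frac{2a\cdot(1/(6a))}{(5/6)^2}=\frac{12}{25}$ using $a+\alpha_k^2-1\leqslant a$, $\alpha_k^2\leqslant 1/(6a)$ and $1-\alpha_k^2\geqslant 5/6$. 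Your route is simpler and even gives the sharper constant $m_k\geqslant 13/25$; the paper's exact quadratic analysis is slightly longer but makes visible that $1/(6a)$ is essentially the tight threshold for the $1/3$ bound. The step-size bound $h_k\bigl(\frac{1+\alpha_k^2}{1-\alpha_k^2}+m_k\bigr)\leqslant 0.7\cdot\frac{14}{5}<2$ is computed the same way in both proofs (you bound the two terms by $7/5$ each; the paper combines them into $1+\frac{1+3\alpha_k^2}{1-\alpha_k^2}\leqslant 2.8$, which is the same number).
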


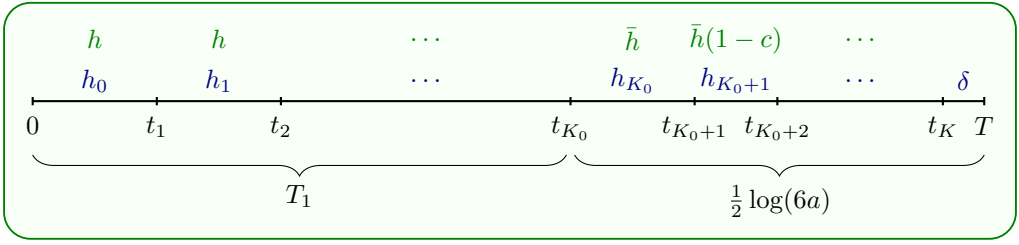
\begin{figure}[h]
    \centering
    \begin{tikzpicture}[scale=0.55,boxstyle/.style={
        draw=green!50!black, 
        fill=green!3, 
        rounded corners=10pt,
        line width=0.7pt,
        inner sep=5pt
    }] 

    \begin{scope}[local bounding box=myfigure]
        \draw[thick] (0,0) -- (23,0);
        
        \foreach \x in {0,3,6,13,16,18,22,23} {
            \draw[thick] (\x,-0.1) -- (\x,0.1);
        }
        
        \node[below] at (0,-0.15) {$0$};
        \node[below] at (3,-0.15) {$t_1$};
        \node[below] at (6,-0.15) {$t_2$};
        \node[below] at (13,-0.15) {$t_{K_0}$};
        \node[below] at (16,-0.15) {$t_{K_0+1}$};
        \node[below] at (18,-0.15) {$t_{K_0+2}$};
        \node[below] at (22,-0.15) {$t_{K}$};
        \node[below] at (23,-0.15) {$T$};
        
        \draw[decorate, decoration={brace, amplitude=8pt, mirror}] 
            (0,-1.3) -- node[below=8pt] {$T_1$} (12.9,-1.3);
        \draw[decorate, decoration={brace, amplitude=8pt, mirror}] 
            (13.1,-1.3) -- node[below=8pt] {$\tfrac{1}{2}\log(6a)$} (23,-1.3);
        
        \definecolor{darkblue}{rgb}{0,0,0.5}
        \definecolor{darkgreen}{rgb}{0,0.5,0}
        
        \node[color=darkblue] at (1.5,0.5) {$h_0$};
        \node[color=darkblue] at (4.5,0.5) {$h_1$};
        \node[color=darkblue] at (9.5,0.5) {$\ldots$};
        \node[color=darkblue] at (14.5,0.5) {$h_{K_0}$};
        \node[color=darkblue] at (17,0.5) {$h_{K_0+1}$};
        \node[color=darkblue] at (20,0.5) {$\ldots$};
        \node[color=darkblue] at (22.5,0.5) {$\delta$};
        
        \node[color=darkgreen] at (1.5,1.5) {$h$};
        \node[color=darkgreen] at (4.5,1.5) {$h$};
        \node[color=darkgreen] at (9.5,1.5) {$\ldots$};
        \node[color=darkgreen] at (14.5,1.5) {$\bar h$};
        \node[color=darkgreen] at (17,1.5) {$\bar h(1-c)$};
        \node[color=darkgreen] at (20,1.5) {$\ldots$};
    \end{scope}

    \begin{scope}[on background layer]
        \node[boxstyle, fit=(myfigure)] {};
    \end{scope}
        
\end{tikzpicture}
    \caption{Notations corresponding to the discretization schedule.}
    \label{fig:stepsizes}
\end{figure}
We set $h_k = h$ for $k=0,\ldots,K_0$. Then, 
\eqref{eq:recursion1}, \Cref{lem:recursion} and $1-h_km_k 
\leqslant 1- h/3\leqslant e^{-h/3}$ imply that
\begin{align}
    x_{K_0} &\stackrel{\numcircled{1}}{\leqslant} e^{-K_0h/3} x_0 
    + \sum_{k=0}^{K_0-1} (1-\tfrac{h}{3})^{K_0-k-1} (2h
    \varepsilon_k^b + \mathfrak B_k)\\ 
    &\qquad + \bigg\{\sum_{k=0}^{K_0-1} (1-\tfrac{h}{3})^{
    2 (K_0-k-1)}(2h\varepsilon_k^v + \mathfrak V_k)^2
    \bigg\}^{1/2}\\
    &\stackrel{\numcircled{2}}{\leqslant}
    e^{-K_0h/3} x_0 + \max_{1\leqslant k< K_0} \Big[ 
    \tfrac{3}{h} (2h\varepsilon_k^b + \mathfrak B_k )\Big]+ \max_{1\leqslant k< K_0} \Big[\sqrt{\tfrac{1.7}{h}} (2h \varepsilon_k^v + \mathfrak V_k) \Big]
    \\
    &\leqslant e^{-K_0h/3} x_0 + 
    \max_{1\leqslant k< K_0} \Big[6\varepsilon_k^b + 3h^{-1}
    \mathfrak B_k\Big] + \max_{1\leqslant k< K_0} 
    \Big[1.35h^{-1/2} (2h\varepsilon_k^v + \mathfrak V_k) 
    \Big], \label{eq:xK2}
\end{align}
where $\numcircled{1}$ comes from applying \Cref{lem:recursion} with $e^{A_k} = (1-m_k h) \leq e^{-h/3}$, form which we get $e^{\bar{A}_{j}} = e^{A_0}\cdot e^{A_1}...e^{A_j} = \prod\limits_{l=0}^j (1-m_l h_l) \leq \left( 1- \frac{h}{3} \right)^{j+1}$ and $e^{\bar{A}_{K_0-1}} = e^{-K_0 h/3}$, 
and $\numcircled{2}$ uses the fact that $\sum_{k=0}^{K_0-1} (1-\tfrac{h}{3})^{K_0-k-1} = \frac{3}{h}\left[1 - \left(1 - \frac{h}{3} \right)^{K_0} \right] \leq \frac{3}{h}$ and, similarly, $\sum_{k=0}^{K_0-1} (1-\tfrac{h}{3})^{2 (K_0-k-1)} = \frac{9}{h(6-h)} \left[1 - \left(1 - \frac{h}{3} \right)^{2K_0} \right] \leq \frac{9}{h(6-h)} \leq \frac{9}{5.3h} \leq \frac{1.7}{h}$ since $h \leq 0.7$.

The next lemma provides an upper bound for the 
bias and the variance of the discretization error.

\begin{lemma}\label{lem:discr}
    Assume that for some $a>0$ and $k\in \{0,\ldots,K\}$, $P^*$ 
    satisfies \Cref{ass:1}
    with $\varphi$ satisfying $\varphi(\sigma)\leqslant a$ for 
    every $\sigma\in [\beta_{k+1}/\alpha_{k+1}; \beta_k/\alpha_k]$. 
    Assume, in addition, that $\bar m_2 = (\mathbf E[\|\bX\|^2]/ D )
    \vee 1 < \infty$. Then, it holds that
    \begin{align}
        \mathfrak B_k &\leqslant \tfrac12\sqrt{\bar m_2
        D}\, h_k^2, \label{bound:Bk}\\
        \mathfrak V_k &\leqslant \tfrac12 \sqrt{\bar m_2D}
        \, h_k^2 + \tfrac{4\sqrt{2D}}{3} \,h_k^{3/2} 
        \frac{(a\alpha_{k+1}^2) \vee\beta_{k+1}^2
        }{\beta_{k+1}^4}. \label{bound:Vk}
    \end{align}
    If instead of $\varphi(\sigma)\leqslant a$, we have 
    $\varphi(\sigma)\leqslant \bar a\sigma^2$ for some 
    $\bar a\geqslant 1$, then \eqref{bound:Vk} can be 
    strengthened as follows 
    \begin{align}
        \mathfrak V_k &\leqslant \tfrac12 \sqrt{\bar m_2D}
        \, h_k^2 + \tfrac{4\sqrt{2D}}{3} \,\bar a\, 
        \frac{h_k^{3/2}}{\beta_{k+1}^2}. \label{bound:Vk'}
    \end{align}
    Finally, under the same condition, the error $\mathfrak 
    V_K$ of the last iterate can be bounded by
    \begin{align}
        \mathfrak V_K &\leqslant \tfrac12 \sqrt{\bar m_2D}
        \, h_K^2 + \tfrac92\,\bar a\sqrt{D\,h_K}. \label{bound:VK}
    \end{align}
\end{lemma}

\subsection[Proof of Theorem~\ref{th:1}: Strongly log-concave convolved 
with a compactly supported distribution]{Proof of \Cref{th:1}: Strongly log-concave convolved 
with a compactly supported distribution}\label{strong_conc_bound}

We know that 
\begin{align}
    \varphi(\sigma) = \frac{\sigma^2}{1 + m\sigma^2} +
    \frac{bM^2\sigma^4}{(1+M\sigma^2)^2}\leqslant 
    \Big[\frac1m + b\Big]\wedge
    \Big[\sigma^2\Big(1 + \frac{bM}4\Big)\Big].
\end{align}
Therefore, 
we can apply \Cref{lem:contraction}, \Cref{lem:stepsize1a} 
with $a=1\vee [(1/m) + b]$ as well as inequalities
\eqref{bound:Bk} and \eqref{bound:Vk'} of \Cref{lem:discr} 
with $\bar a= 1 + \tfrac14 bM$. In addition, to bound the last term in
\eqref{bound:Vk'}, we use the fact that
\begin{align}
    \frac{1}{\beta_{k+1}^2} &= \frac1{1-e^{2(t_{k+1} -T)}}
    \leqslant \frac{1}{1-e^{-\log(6a)}} = \frac{6a}{
    6a - 1} \leqslant 1.2.
\end{align}
Together with \eqref{eq:xK2}, this  leads to 
\begin{align}
    x_{K_0} 
    &\leqslant e^{-K_0h/3} x_0 + 
    6\varepsilon^b + 3h^{1/2}\varepsilon^v + 
    \sqrt{\bar m_2 D}\,\big(\tfrac32  + 1.35\times (\tfrac12 
    + \tfrac{4\sqrt{2}\,\bar a}{3}\times 1.2)\big)h\\
    &\leqslant e^{-K_0h/3} x_0 + 
    6\varepsilon^b + 3h^{1/2}\varepsilon^v + 
    (5.3 + 0.6bM)h\sqrt{\bar m_2 D}.\label{eq:xK0b}
\end{align}
On the time interval $[T-\tfrac{\log(6a)}2;T]$, we use 
the discretization obtained by geometrically decreasing
stepsizes as previously proposed in the literature:
\begin{align}
    h_{K_0+j} = \tfrac{\log(6a)}2\, c \,(1-c)^{j},\quad 
    j=0,\ldots,K-K_0-1,  
\end{align}
where $c\leqslant 0.6/\log(6a)$. This implies, in particular,
that $c\leqslant 0.6/\log 6\leqslant 0.4$ and that $\bar h:=
\max_{k\in[K_0,K]} h_k \leqslant 0.3$. 
The constants $c$ and $K$ are chosen in such a way that
$t_K = T- h_K$ for some small $h_K\leqslant \tfrac{\log(6a)}2$, 
and $t_{K+1} = T$. This means that
\begin{align}
    T- h_K &= T-\tfrac{\log(6a)}2 + \tfrac{\log(6a)}2c
    \sum_{j=0}^{K-K_0-1} (1-c)^{j} \\
    &= T-\tfrac{\log(6a)}2 + \tfrac{\log(6a)}2(1 - (1-c)^{K-K_0}).
\end{align}
This yields
\begin{align}
    (1-c)^{K-K_0} = \frac{2 h_K}{\log(6a)}\qquad 
    \text{and}\qquad K-K_0 &= \frac{\log\log(6a) - \log(2h_K)}{-\log(1-c)}\\ 
    &\leqslant \frac{ \log\log(6a) - \log(2h_K)}{c}.
\end{align}

For $k\geqslant K_0+1$, we will apply \Cref{lem:contraction}. 
To check that its conditions are fulfilled, note that 
\begin{align}
    \frac{1+\alpha_k^2}{1-\alpha_k^2} + m_k
    \leqslant \frac{1+\alpha_k^2}{1-\alpha_k^2} + 1 +
    \frac{2\alpha_k^2}{1-\alpha_k^2}
    \leqslant \frac{4}{1-e^{2(t_k-T)}}\leqslant 
    \frac{4}{1-e^{-1}}\Big(\frac{1}{2(T-t_k)}\vee 1\Big).
\end{align}
This expression, multiplied by $h_k$, is less than 2 
whenever $h_k\leqslant 0.3$. Indeed, on the one hand, 
\begin{align}
    \frac{4h_k}{1-e^{-1}} \leqslant \frac{1.2}{1-e^{-1}}
    \leqslant 2.
\end{align}
On the other hand, for $k>K_0$,
\begin{align}
    t_k &= T- \tfrac{\log(6a)}2 + h_{K_0}+\ldots + h_{k-1} = T- \tfrac{\log(6a)}2 + \tfrac{\log(6a)}2\,c\sum_{j=0}^{ k-K_0-1}  
    (1-c)^{j}\\
    & = T- \tfrac{\log(6a)}2 + \tfrac{\log(6a)}2\,(1-(1-c)^{k-K_0})
     = T - c^{-1}h_k.\label{eq:tkhk}
\end{align}
This implies that 
\begin{align}
    \frac{4h_k}{2(1-e^{-1})(T-t_k)} &= \frac{2c}{1-e^{-1}}<2
\end{align}
since $c\leqslant 0.6$. In addition, taking $\sigma=\beta_k/\alpha_k$ and using the substitution $\beta_k^2 = 1-\alpha_k^2$, we have 
\begin{align}
    m_k &\stackrel{\numcircled{1}}{=} 1 + \frac{2\alpha_k^2}{1-\alpha_k^2} \left(1 - \frac{\varphi(\beta_k/\alpha_k)}{1-\alpha_k^2} \right) \\
    &\stackrel{\numcircled{2}}{=} 1 + \frac{2\alpha_k^2}{1-\alpha_k^2} \left(1 - \frac{1}{\beta_k^2} \left[ \frac{\sigma^2}{1+m\sigma^2} + \frac{bM^2\sigma^4}{(1+M\sigma^2)^2} \right] \right) \\
    &\stackrel{\numcircled{3}}{=} 1 + \frac{2\alpha_k^2}{\beta_k^2} \left( 1- \frac{1}{\alpha_k^2 + m\beta_k^2} - \frac{\beta_k^2 \cdot bM^2}{(\alpha_k^2 + M\beta_k^2)^2}\right) \\
    &= 1 + \frac{2\alpha_k^2}{\beta_k^2} - \frac{2}{\beta_k^2(1+m\sigma^2)} - \frac{2bM^2 \alpha_k^2}{(\alpha_k^2 + M(1-\alpha_k^2))^2},
\end{align}
where $\numcircled{1}$ comes from the definition of $m_k$ from \eqref{cond:h}, $\numcircled{1}$ is true for any $\varphi(\sigma)$ satisfying \eqref{eq:phi_slg}. Equality $\numcircled{3}$ comes from the fact that 
\begin{align}
    \frac{1}{\beta_k^2}\cdot\frac{\sigma^2}{1+m\sigma^2} &= \frac{1}{\cancel{\beta_k^2}}\cdot\frac{\cancel{\beta_k^2}/\alpha_k^2}{1+m\sigma^2} = \frac{1}{\alpha_k^2(1+m\sigma^2)} = \frac{1}{\alpha_k^2+m\beta_k^2},
\end{align}
and 
\begin{align}
    \frac{1}{\beta_k^2} \cdot \frac{bM^2\sigma^4}{(1+M\sigma^2)^2} = \frac{1}{\beta_k^2}\cdot \frac{bM^2 \cdot \beta_k^4/\alpha_k^4}{(1+M\sigma^2)^2} = \frac{\beta_k^2bM^2}{\alpha_k^4(1+M\sigma^2)^2}=\frac{\beta_k^2bM^2}{(\alpha_k^2+M\beta_k^2)^2}. 
\end{align}
Finally, noting that  
\begin{align}
    1 + \frac{2\alpha_k^2}{\beta_k^2} - \frac{2}{\beta_k^2(1+m\sigma^2)} \geq 1 + \frac{2\alpha_k^2}{\beta_k^2} - \frac{2}{\beta_k^2} = -1,
\end{align}
for any $m, \sigma^2\geq 0$, we arrive at
\begin{align}\label{eq:mk2}
  m_k&\geqslant -1  - \frac{2bM^2\alpha_k^2}{(\alpha_k^2+M(1-\alpha_k^2))^2}. 
\end{align}
Therefore, 
\eqref{eq:recursion1} yields
\begin{align}
    x_{k+1}^2 &\leqslant \big(e^{-m_kh_k}x_k + 2h_k\varepsilon_k^b
    + \mathfrak B_k\big)^2 + \big(2h_k\varepsilon_k^v + 
    \mathfrak V_k\big)^2. 
\end{align}
From this recursion and \Cref{lem:recursion}, using the 
notation $H(k) = -m_{K_0}h_{K_0}-\ldots -m_{k}h_{k}$, we infer that
\begin{align}
    x_{K+1} &\leqslant e^{H({K})}\bigg[x_{K_0} + 
    \sum_{k=K_0}^{K} e^{-H(k)}(2h_k \varepsilon_k^b + 
    \mathfrak B_k) + \bigg\{\sum_{k=K_0}^{K} e^{-2H(k)}
    (2h_k\varepsilon_k^v + \mathfrak V_k)^2\bigg\}^{1/2} 
    \bigg].
\end{align}
Inequality \eqref{eq:mk2} yields
\begin{align}
    H(K) - H(k) &\leqslant \sum_{j=k+1}^K h_j + 2bM\sum_{j=k+1}^K
    \frac{Mh_j\alpha_j^{-2}}{(1 + M(\alpha_j^{-2}-1))^2}\\
    &\leqslant \frac12\log(6a) -\sum_{j=K_0}^k h_j+ 2bM\sum_{j=K_0}^K
    \frac{Mh_je^{2(T-t_j)}}{(1 + M(e^{2(T-t_j)}-1))^2}.
\end{align}
Let us set $y_j = M(e^{2(T-t_j)}-1)$. On the one hand, we have
\begin{align}
    H(K) - H(k) &\leqslant \frac12\log(6a) - \sum_{j=K_0}^k h_j + 2bM\sum_{j=K_0}^K
    \frac{h_j(y_j+M)}{(1 + y_j)^2}.
\end{align}
On the other hand, since $h_j\leqslant 0.3$, we have $e^{-2h_j}-1
\leqslant -1.5\,h_j$. Therefore,
\begin{align}
    y_{j} - y_{j+1} = (y_j+M)(1 - e^{-2h_j})\geqslant 1.5 h_j (y_j+M).
\end{align}
This implies that
\begin{align}
    H(K) - H(k) &\leqslant \frac12\log(6a) -\sum_{j=K_0}^k h_j + bM\sum_{j=K_0}^K
    \frac{4(y_j - y_{j+1})}{3(1 + y_j)^2}\\
    &\leqslant \frac12\log(6a) -\sum_{j=K_0}^k h_j + bM\int_{0}^{\infty} 
    \frac{4}{3(1 + t)^2}\,\rmd t\\
    & \leqslant \frac12\log(6a) -\sum_{j=K_0}^k h_j + \frac{4bM}{3}.
\end{align}
Using the standard inequalities
\begin{align}\label{eq:int_ineq}
    \sum_{k=K_0}^{K} e^{-u(h_{K_0}+\ldots+
    h_k)}h_k \leqslant \int_0^{\infty} e^{-ux}\,\rmd x 
    = 1/u,\qquad \forall u>0,
\end{align}
we arrive at
\begin{align}
    x_{K+1} &\leqslant \sqrt{6a}\,e^{\frac43bM}\Big(x_{K_0} +
     2\varepsilon^b + \max_{K_0<k<K} h_k^{-1}\mathfrak B_k +
     \max_{K_0<k<K} \big[ \sqrt{h_k}\,\varepsilon_k^v
     +  \tfrac12 h_k^{-1/2}\mathfrak V_k\big]\Big) \\
     &\qquad + \mathfrak B_K + 2h_K\varepsilon^v + 
     \mathfrak V_K.
\end{align}
We apply then inequalities \eqref{bound:Bk}, \eqref{bound:Vk'}
and \eqref{bound:VK} of \Cref{lem:discr} with $\bar a = 1 + 
\tfrac14 bM$. This leads to
\begin{align}
    x_{K+1} &\leqslant \sqrt{6a}e^{\frac43bM}\Big(x_{K_0} + 
        2\varepsilon^b  + \sqrt{\bar h}\,\varepsilon^v 
        + \sqrt{D}\Big[\sqrt{\bar m_2}\,\bar h +
        \max_{k<K} \frac{\bar a h_k}{\beta_{k+1}^2}\Big]\Big) + 
        5\bar a\sqrt{\bar m_2Dh_K}.\label{eq:xK+1}
\end{align}
The stepsizes $h_k$ of the geometric grid are much smaller
than the noise levels $\beta_{k+1}^2$, as attested by
the following inequality\footnote{We use the standard 
inequality $1-e^{-x}\geqslant (1-e^{-1})(x\wedge 1)$ 
for every $x>0$.}
\begin{align}
    \frac{h_k}{\beta_{k+1}^2} 
        & = \frac{h_k}{1 - e^{2(t_{k+1} - T)}}
        \leqslant \frac{h_k}{1.2 (T-t_{k+1})\wedge 0.5}
        \leqslant \frac{5h_k}{6(T-t_{k+1})} \vee 
        \frac{5h_k}{3}.
\end{align}
It follows from \eqref{eq:tkhk} that $T-t_{k+1} = c^{-1} 
h_{k+1} = c^{-1}(1-c)h_k\geqslant \tfrac23 c^{-1}h_k$. 
Hence,
\begin{align}
    \frac{h_k}{\beta_{k+1}^2} 
        \leqslant \frac{5c}{4} \vee 
        \frac{5c\log(6a)}{6} = \frac{5c\log(6a)}{6}
        =\frac{\bar h}{3}.\label{eq:hkbk1}
\end{align}
Combining \eqref{eq:xK+1} and \eqref{eq:hkbk1}, we arrive 
at
\begin{align}
    x_{K+1} &\leqslant \sqrt{6a}\,e^{\frac43 bM}\Big(x_{K_0} 
    + 2\varepsilon^b 
    + \bar h^{1/2}\,\varepsilon^v + \tfrac43\,\bar a\sqrt{\bar m_2D}\,
    \bar h\Big) + 5\bar a \sqrt{\bar m_2Dh_K}.
\end{align}
This inequality, in conjunction with \eqref{eq:xK0b}, leads
to
\begin{align}
    x_{K+1} &\leqslant \sqrt{6a}\,e^{\frac43 bM}\Big(x_{0} + 8\varepsilon^b 
    + 4h_{\max}^{1/2}\,\varepsilon^v + 6.7\bar a\sqrt{\bar m_2D}\,
    h_{\max}\Big) + 5\bar a\sqrt{\bar m_2Dh_K},
\end{align}
where $h_{\max} = \max(h,\bar h)$ is the maximal step size 
of the entire discretization grid, comprising the parts
defined through arithmetic and geometric progressions. 
These step sizes should satisfy the inequalities
\begin{align}
    h &\leqslant \frac{T-\tfrac12\log(6a)}{K_0}\leqslant 0.7\quad
    \bar h  = \frac{c\log(6a)}2  \leqslant 
    \frac{\log(6a)\big(\log\log(6a) - \log(2h_K)\big)
    }{K-K_0}\leqslant 0.3.
\end{align}
To bound $x_0$, we note that
\begin{align}
    x_0^2 &\leqslant \mathbf E[\|\alpha_{T}\bX + \beta_T\bxi  
    - \bxi\|^2]
    = \alpha_T^2\|\bX\|_{\mathbb L_2}^2 + (1-\beta_T)^2D\leqslant 
    1.01 \bar m_2^2 D e^{-2T}.
\end{align}
as soon as $T\geqslant \log (6)$. Thus, $x_0 \leqslant 1.01
\sqrt{\bar m_2 D} e^{-T}$. We set $T = \tfrac12\log(6a)  
+ T_1$ and $h_K = \delta = 0.5 e^{-2T_1}$ and $K = 2K_0$. 
This leads to the claim of the theorem. Indeed, $h\leqslant 0.7$
translates into $K_0 \geq (10/7) T_1$ and $\bar h \leq 0.3$ 
translates into 
\begin{align}
    K_0 \geq \frac{10\log(6a)\big(\log\log(6a) + 2T_1\big)}{3}
\end{align}
which is satisfied when $K_0\geqslant 7T_1\log(6a) + 4\log(6a)
\log\log(6a)$. Finally, notice that $h\leqslant T_1/K_0$ 
and 
\begin{align}
    \bar h\leqslant \frac{\log(6a)\big(\log\log(6a) + 2T_1\big)}{K_0}.
\end{align}
These inequalities yield the claimed upper bound on $h_{\max}$.

\subsection[Proof of Theorem~\ref{th:2}: Semi log-concave and 
compactly supported distribution on a subspace]{Proof of \Cref{th:2}: Semi log-concave and 
compactly supported distribution on a subspace}

For $P^*$ satisfying \Cref{ass:1} with the function 
\begin{align}\label{eq:str_conc_b}
    \varphi
    (\sigma) = b \wedge \frac{\sigma^2}{(1 - 
    M\sigma^2)_+}.
\end{align}
we can apply \Cref{lem:stepsize1a} with $a=b\vee1$ and \Cref{lem:discr} with $\bar{a}= bM+1$. Similarly to \Cref{strong_conc_bound}, 
the application of \Cref{lem:contraction} and \Cref{lem:stepsize1a} yields
\begin{align}
    x_{K_0} 
    &\leqslant e^{-K_0h/3} x_0 + 
    6\varepsilon^b + 3h^{1/2}\varepsilon^v + 
    \sqrt{\bar m_2 D}\,\big(\tfrac32  + 1.35\times (\tfrac12 
    + \tfrac{4\sqrt{2}\,\bar a}{3}\times 1.2)\big)h\\
    &\leqslant e^{-K_0h/3} x_0 + 
    6\varepsilon^b + 3h^{1/2}\varepsilon^v + 
    (2.2 + 3.1\bar{a})h\sqrt{\bar m_2 D}.\label{eq:xK0b2}
\end{align}
We again use the discretization with geometrically decreasing stepsize 
on the interval $[T-\tfrac{\log(6a)}2;T]$:
\begin{align}
    h_{K_0+j} = \tfrac{\log(6a)}2\, c \,(1-c)^{j},\quad 
    j=0,\ldots,K-K_0-1,  
\end{align}
where $c\leqslant 0.6/\log(6a)$. Following the discussion in \Cref{strong_conc_bound}, we have that
\begin{align}
    K-K_0 &\leqslant \frac{ \log\log(6a) - \log(2h_K)}{c},
\end{align}
and, for $k>K_0$ 
\begin{align}
    h_k\bigg(\frac{1+\alpha_k^2}{1-\alpha_k^2} + m_k\bigg)
         \leqslant 2 \quad \text{and}\quad t_k=T-c^{-1}h_k.  
\end{align}

Combined with \eqref{eq:str_conc_b}, we get
\begin{align}
    m_k\geq1-2\bar{a}.
\end{align}
Hence, \ref{eq:recursion1} yields
\begin{align}
    x_{k+1}^2 &\leqslant \big(e^{(2\bar{a}-1)h_k}x_k + 2h_k\varepsilon_k^b
    + \mathfrak B_k\big)^2 + \big(2h_k\varepsilon_k^v + 
    \mathfrak V_k\big)^2. 
\end{align}
We denote $H_k=(2\bar{a}-1)\sum_{i=K_0}^{k}h_k$. We note that $H_K\leqslant \frac{2\bar{a}-1}{2}\ln(6a)$. Lemma \ref{lem:recursion} states: 
\begin{align}
    x_{K+1} &\leqslant e^{H_{K}}\bigg[x_{K_0} + 
    \sum_{k=K_0}^{K} e^{-H_k}(2h_k \varepsilon_k^b + 
    \mathfrak B_k) + \bigg\{\sum_{k=K_0}^{K} e^{-2H_k}
    (2h_k\varepsilon_k^v + \mathfrak V_k)^2\bigg\}^{1/2} 
    \bigg].
\end{align}
As $2\bar{a}-1=2bM+1$ which is strictly positive, we may apply \eqref{eq:int_ineq} which results in:
\begin{align}
    x_{K+1} &\leqslant \sqrt{6a}\,e^{2\bar{a}-1}\Big(x_{K_0} + 
     \frac{2\varepsilon^b+\max_{k<K}h_k^{-1}\mathfrak B_k}{2\bar{a}-1} + \frac{1}{\sqrt{2\bar{a}-1}}\max_{k<K} \Big\{\sqrt{h_k}\,\varepsilon_k^v + 
    \frac{\mathfrak V_k}{2 h_k^{1/2}}\Big\}\Big) \\ 
    &+ \mathfrak B_K
    + 2h_K\varepsilon^v +\mathfrak V_K.
\end{align}
We apply then inequalities \eqref{bound:Bk}, \eqref{bound:Vk'}
and \eqref{bound:VK} of \Cref{lem:discr}, which leads to
\begin{align}
    x_{K+1} &\leqslant \sqrt{6a}e^{(2\bar{a}-1)}\Big(x_{K_0} + 
        \frac{2\varepsilon^b}{2\bar{a}-1}  + \frac{\sqrt{\bar h}\,\varepsilon^v}{\sqrt{2\bar{a}-1}} 
        + \frac{\sqrt{D}}{\sqrt{2\bar{a}-1}}\Big[\sqrt{\bar m_2}\,\bar h +
        \max_{k<K} \frac{\bar a h_k}{\beta_{k+1}^2}\Big]\Big) + 
        5\bar a\sqrt{\bar m_2Dh_K}.
\end{align}
The above inequality with \eqref{eq:hkbk1} yields:
\begin{align}
     x_{K+1} &\leqslant \sqrt{6a}e^{(2\bar{a}-1)}\Big(x_{K_0} + 
        \frac{2\varepsilon^b}{2\bar{a}-1}  + \sqrt{\frac{\bar h}{2\bar{a}-1}}\,\varepsilon^v 
        + \frac{4\bar{a}\bar{h}}{3}\sqrt{\frac{D\bar m_2}{2\bar{a}-1}}\Big) + 
        5\bar a\sqrt{\bar m_2Dh_K}.\label{eq:xK+12}
\end{align}

Combining \eqref{eq:xK+12} with \eqref{eq:xK0b2} and noting that $(2\bar{a}-1)\geq1$, we get:
\begin{align}
    x_{K+1} &\leqslant \sqrt{6a}\,e^{2\bar{a}-1}\Big(x_{0} + 8\varepsilon^b 
    + 4h_{\max}^{1/2}\,\varepsilon^v + 6.7\bar a\sqrt{\bar m_2D}\,
    h_{\max}\Big) + 5\bar a\sqrt{\bar m_2Dh_K},
\end{align}

Following the discussion of \Cref{strong_conc_bound}, we complete 
the proof by showing that:
 \begin{align}
        x_{K+1} \leq e^{2\bar a-1}
        \Big\{2e^{-T_1} + 
        7\sqrt{6a}\,h_{\max} + 4\sqrt{6a}\big(2\varepsilon^b_{
        \sf score} + h_{\max}^{1/2}\,\varepsilon^v_{\sf score}\big)
        \Big\}\sqrt{D}.
    \end{align}

\section{Proofs of lemmas used in the proofs of main theorems}

We collect in this section the proofs of the building
blocks of our main results. 

\subsection[Proof of Lemma~\ref{lem:contraction}: the origin of the 
contraction/expansion]{Proof of \Cref{lem:contraction}: the origin of the 
contraction/expansion}

Since $\score$ is continuously differentiable, we have
\begin{align}
    \bU_k = \int_0^1 \mathrm D \score \big(T-t_k,\bZ_k + 
    \theta(\bY_{t_k} - \bZ_k)\big)\,(\bY_{t_k} - \bZ_k )\, 
    \rmd\theta := \int_0^1\bfM_k(\theta)\bDelta_k\, 
    \rmd\theta,
\end{align}
where 
\begin{align}
    \bfM_k(\theta) = \mathrm D\score (T-t_k, \bZ_k + \theta
    \bDelta_k) = \nabla^2 \log \pi(T-t_k, \bZ_k + \theta 
    \bDelta_k).
\end{align}
The matrix $\bfM_k$ is symmetric, and according to 
\Cref{prop:1},  all its eigenvalues satisfy
\begin{align}
    -\frac1{1-\alpha_k^2} \leqslant \lambda_j(\bfM_k(\theta))
    \leqslant -\frac1{1-\alpha_k^2}\bigg( 1- \frac{\alpha_k^2
    \varphi(\beta_k/\alpha_k)}{1-\alpha_k^2}\bigg). 
\end{align} 
We assume that $h_k$ is chosen so that
\begin{align}\label{ineq:hk}
    \frac{2h_k}{1 - \alpha_k^2} - (1+{h_k}) \leqslant 
    (1+{h_k}) - \frac{2h_k}{1 - \alpha_k^2}\Big( 1 - \frac{\alpha_k^2\varphi(\beta_k/\alpha_k)}{1-
    \alpha_k^2}\Big).
\end{align}
This is equivalent to 
\begin{align}\label{ineq:15}
    \frac{h_k}{1 - \alpha_k^2}\Big( 2 - \frac{\alpha_k^2
    \varphi(\beta_k/\alpha_k)}{1-\alpha_k^2}\Big) 
    \leqslant (1+{h_k}).
\end{align}
Regrouping the terms, we get
\begin{align}\label{ineq:13a}
    \frac{h_k}{1 - \alpha_k^2}\Big( 1+\alpha_k^2 - 
    \frac{\alpha_k^2 \varphi(\beta_k/\alpha_k)}{1 - 
    \alpha_k^2}\Big) \leqslant 1.
\end{align}
This inequality can be checked to be the same as \eqref{cond:h}. 
Hence, \eqref{ineq:hk} is indeed satisfied and, therefore, 
\begin{align}
    \|(1+h_k)\,\bfI_D + 2h_k\bfM_k(\theta)\| \leqslant 
    1 + h_k - \frac{2h_k}{1-\alpha_k^2}\Big(1 - 
    \frac{\alpha_k^2 \varphi(\beta_k/\alpha_k)}{1-\alpha_k^2}
    \Big).
\end{align}
Therefore, we have 
\begin{align}
    \|(1+{h_k})\bDelta_k + 2h_k\bU_k\|_{\mathbb L_2} 
    &\leqslant \int_0^1 
    \big\| \big((1+{h_k})\,\bfI_D + 2h_k\bfM_k(\theta)\big) 
    \bDelta_k\big\|_{\mathbb L_2}\,\rmd\theta\\
    &\leqslant \Big\{1+{h_k} - \frac{2 h_k}{1-\alpha_k^2}
    \Big( 1 -\frac{\alpha_k^2 \varphi(\beta_k/\alpha_k)}{
    1-\alpha_k^2}\Big) \Big\} \|\bDelta_k\|_{\mathbb L_2}\\
    &= \Big\{1 - \frac{2h_k}{1-\alpha_k^2}\Big(\frac{1 + 
    \alpha_k^2}{2}  - \frac{\alpha_k^2 \varphi(\beta_k/
    \alpha_k)}{1-\alpha_k^2}\Big) \Big\} \|\bDelta_k
    \|_{\mathbb L_2}\\
    &= (1- m_kh_k)\|\bDelta_k\|_{\mathbb L_2}.
\end{align}
This completes the proof of \Cref{lem:contraction}. 

\subsection[Proof of Lemma~\ref{lem:stepsize1a}: strength of 
the deflation in the contracting regime]{Proof of \Cref{lem:stepsize1a}: strength of 
the deflation in the contracting regime}

Notice that $\varphi(\beta_k/\alpha_k)\leqslant a$
implies 
\begin{align}
    m_k\geqslant 1 + \frac{2\alpha_k^2}{1-\alpha_k^2}\Big(1
    - \frac{a}{1-\alpha_k^2}\Big).
\end{align}
Then, one checks that
\begin{align}
    1 + \frac{2\alpha_k^2}{1-\alpha_k^2}\Big(1
    - \frac{a}{1-\alpha_k^2}\Big)\geqslant \frac13 \quad&\Longleftrightarrow 
    \quad (1-\alpha_k)^4 \geqslant 3\alpha_k^2\big(a - 
    (1-\alpha_k^2)\big)\\
    &\Longleftrightarrow 
    \quad (1-\alpha_k)^4 - 3\alpha_k^2\big(a - 
    (1-\alpha_k^2)\big) \geqslant 0\\
    &\Longleftrightarrow 
    \quad 1-(3a-1)\alpha_k^2 -2\alpha_k^4\geqslant 0\\
    &\Longleftrightarrow 
    \quad 2\alpha_k^2\leqslant \frac{2}{3a-1 + 
    \sqrt{(3a-1)^2 + 2}}
\end{align}
Since $a\geqslant 0$, we have 
\begin{align}
    3a-1 + \sqrt{(3a-1)^2 + 2} \leqslant 6a.
\end{align}
Therefore, $\alpha_k^2\leqslant 1/(6a)$ implies $m_k\geqslant 
1/3$. Let us check that for $t_k$ satisfying \eqref{cond1:tk}, 
we have $\alpha_k^2\leqslant 1/(6a)$. Indeed, $\alpha_k$
being an increasing function of $t_k$, we have 
\begin{align}
    \alpha_k^2 & = e^{2(t_k-T)}\leqslant 
    \exp(-\log(6a)) = 1/(6a). 
\end{align}
For the second inequality, it suffices to notice that 
$\varphi(\sigma)\geqslant 0$ and $a\geqslant 1$ imply 
that
\begin{align}
    \frac{1+\alpha_k^2}{1-\alpha_k^2} + m_k 
    & \leqslant \frac{1+\alpha_k^2}{1-\alpha_k^2} + 1
    +\frac{2\alpha_k^2}{1-\alpha_k^2}= 1 + \frac{1 + 
    3\alpha_k^2}{1- \alpha_k^2} \leqslant 1 + 
    \frac{6a+3}{6a - 1} \leqslant 2.8.
\end{align}
Combining with the condition $h_k\leqslant 0.7$, 
this yields $h_k(\frac{1+\alpha_k^2}{1-\alpha_k^2} 
+ m_k)\leqslant 2$ and completes the proof of the lemma.


\subsection[Proof of Lemma~\ref{lem:discr}: assessing the increments of the drift]{Proof of \Cref{lem:discr}: assessing the increments of the drift}

Let $\drift_t = \bY_t +2\score (T-t,\bY_t)$. To 
prove the first inequality, we recall that 
$\score(T-t,\by) = (\alpha_{T-t}\mathbf E[\,\bX_0 
\cond \bY_{t} = \by] -\by)/\beta_{T-t}^2$. 
Therefore, 
\begin{align}
    \drift_t = \frac{2\alpha}{\beta^2}
    \mathbf E[\,\bX_0\cond \bY_{t}]  + \bY_t\Big(
    1- \frac{2}{\beta^2}\Big).
\end{align}
In addition, $\bY_t = \alpha \bX_0 + \beta \bxi$
with $\bxi\indep\bX_0$ and $\bxi\sim\mathcal N_D
(0,\mathbf I_D)$. It holds that
\begin{align}
    \mathbf{E}[\|\bY_t\|^2] &= \alpha^2 \mathbf{E}
    [\|\bX_0\|^2] + \beta^2 \mathbf{E}[\|\bxi\|^2] = 
    \alpha^2 \bar m_2D + \beta^2 D,
    \label{norm_Y}\\     
    \mathbf{E}[\bX_0^\top \bY_t] &= \alpha \mathbf{E}[
    \|\bX_0\|^2] + \beta \mathbf{E}[\bX_0^\top \bxi] 
    = \alpha \bar m_2 D,
\end{align}
since $\bxi$ is independent of $\bX_0$ and has zero mean.

Let us use the ``local notation'' $\bar\score(t,
\by) = \score(t,\by) + \by$ as well as $\bfH(t,\by) = 
\mathrm{D} \score(t,\by)$. According to \cite[Prop. 2]{
ConfortiDS25}, it holds that 
\begin{align}
    \rmd \bar\score(T-t,\bY_t) = 
    \bar\score(T-t,\bY_t)\,\rmd t + \sqrt{2}\,
    \mathrm{D}\bar\score(T-t,\bY_t)\,\rmd\tilde
    \bB_t.
\end{align}
Since $2\bar\score(T-t,\bY_t) = \drift_t + \bY_t$, and 
${\rm D}\bar\score(T-t,\bY_t) = \bfH(T-t,\bY_t) + \bfI_D$
we get
\begin{align}
    \rmd \drift_t &= -\rmd \bY_t + 2\rmd 
    \bar\score (T-t,\bY_t)\\
    & = -\drift_t\,\rmd t - \sqrt{2}\,\rmd\wt{\bB}_t
    +(\drift_t + \bY_t)\,\rmd t + 2\sqrt{2}\,(\bfH(T-t,\bY_t) 
    + \bfI_D)\,\rmd\wt{\bB}_t\\
    & = \bY_t\,\rmd t + \sqrt{2}\,(2\bfH(T-t,\bY_t) + 
    \bfI_D)\,\rmd\wt{\bB}_t.\label{eq:dSt}
\end{align}
Since $\tilde\bB_t-\tilde\bB_{t_k}$ is independent
of the $\sigma$-algebra $\mathcal F_k$, we get
\begin{align}
    \mathbf E[\,\drift_t-\drift_{t_k}\cond \mathcal F_k] 
    &= \int_{t_k}^{t} \mathbf E[\,\bY_u\cond\mathcal F_k]\,\rmd u
\end{align} 
and, therefore,
\begin{align}
    \|\mathbf E[\,\drift_t-\drift_{t_k}\cond \mathcal F_k]
    \|_{\mathbb L_2} &\leqslant \int_{t_k}^t \|\mathbf E[\,\bY_u 
    \cond \mathcal F_k]\|_{\mathbb L_2}\rmd u \leqslant 
    \int_{t_k}^t \|\mathbf \bY_u\|_{\mathbb L_2}\rmd u\\
    &\leqslant \int_{t_k}^t\sqrt{D(e^{-2(T-u)}\bar m_2 + 
    (1-e^{-2(T-u)}))}\,\rmd u\\
    &\leqslant \sqrt{\bar m_2D}\,(t-t_k).
\end{align}
The definition of $\bV_k$ given in \eqref{def:Vk} implies
that $\bV_k = \int_{t_k}^{t_{k+1}} (\drift_t-\drift_{t_k})
\,\rmd t$. This leads to
\begin{align}
    \|\mathbf E[\,\bV_k\cond\mathcal F_k]\| &
    \leqslant \int_{t_k}^{t_{k+1}} \|\mathbf 
    E[\,\drift_t-\drift_{t_k}\cond\mathcal F_k]\|_{\mathbb 
    L_2}\,\rmd t\\
    &\leqslant \sqrt{\bar m_2D}
    \int_{t_k}^{t_{k+1}} (t-t_k)\,\rmd t 
    = \tfrac12 \sqrt{\bar m_2D}\, h_k^2.
\end{align}
This yields the claim of \eqref{bound:Bk}.

We prove now \eqref{bound:Vk}. The definition of $\drift_t 
= \bY_t + 2\score(T-t,\bY_t)$ leads to
\begin{align}
    \big\|\drift_t -\drift_{t_k} &-\mathbf
    E[\,\drift_t -\drift_{t_k}\cond \mathcal F_k]\big\|_{\mathbb
    L_2} \\
    &=  \bigg\|\int_{t_k}^t
    (\bY_u -\mathbf E[\,\bY_u\cond \mathcal F_k])\,\rmd u 
    + \int_{t_k}^t \sqrt{2}\big(2\bfH(u) + \bfI_D\big)\,
    \rmd\tilde\bB_u\bigg\|_{\mathbb L_2}\\
    &\leqslant \int_{t_k}^t
    \big\|\bY_u -\mathbf E[\,\bY_u\cond \mathcal F_k]\big\|_{
    \mathbb L_2}\,\rmd u + \bigg\|\int_{t_k}^t \sqrt{2}
    \big(2\bfH(u) + \bfI_D\big)\,\rmd\tilde\bB_u
    \bigg\|_{\mathbb L_2}. \label{eq:lem3:a}
\end{align}    
On the one hand, in view of the law of total variance, 
we have $\big\|\bY_u - \mathbf E[\,\bY_u\cond \mathcal F_k] 
\big\|_{\mathbb L_2} \leqslant \big\|\bY_u\big\|_{\mathbb L_2}$. 
Therefore, using \eqref{norm_Y}, we get 
\begin{align}
\int_{t_k}^t
    \big\|\bY_u -\mathbf E[\,\bY_u\cond \mathcal F_k]\big\|_{
    \mathbb L_2}\,\rmd u \leqslant 
    \int_{t_k}^t \sqrt{\bar m_2 D} \,\rmd u = \sqrt{\bar m_2 D}\, (t-t_k).\label{eq:lem3:b}
\end{align}
On the other hand, the properties of the stochastic integral
imply that
\begin{align}
    \bigg\|\int_{t_k}^t \sqrt{2}
    \big(2\bfH(u) + \bfI_D\big)\,\rmd\tilde\bB_u
    \bigg\|_{\mathbb L_2}^2 = 
    2\int_{t_k}^t \mathbf E\big[\big\|2\bfH(u) + 
    \bfI_D\big\|_F^2\big]\,\rmd u.\label{eq:lem3:c}
\end{align}
Combining he definition of $\bV_k$ given in \eqref{def:Vk}
with \eqref{eq:lem3:a}, \eqref{eq:lem3:b} and \eqref{eq:lem3:c}, 
we get
\begin{align}
    \|\bV_k - \mathbf E[\,\bV_k\cond \mathcal F_k\,]\|_{ \mathbb L_2} 
    & = \int_{t_k}^{t_{k+1}} \big\|\drift_t -\drift_{t_k} -\mathbf
    E[\,\drift_t -\drift_{t_k}\cond \mathcal F_k\,]\big\|_{\mathbb
    L_2} \rmd t\\
    &\leqslant  \tfrac12 \sqrt{\bar m_2D}\, h_k^2 +       \int_{t_k}^{t_{k+1}}\!\bigg\{2\int_{t_k}^t\! \mathbf E\big[\big\|2\bfH(u) + 
    \bfI_D\big\|_F^2\big]\,\rmd u\bigg\}^{1/2}\!\!\!\!\rmd t. \,\,\label{eq:lem3:d}
\end{align}
The integral  in \eqref{eq:lem3:c} can be bounded from above 
using \Cref{prop:1} and various assumptions of the function 
$\varphi$ from \Cref{ass:1}.  Indeed, denoting $\sigma_{T-u} 
= \beta_{T-u}/\alpha_{T-u}$, we have $\bfH(u)\preccurlyeq 
\beta_{T-u}^{-2}(\varphi(\sigma_{T-u})\sigma_{T-u}^{-2} - 
1)\,\bfI_D$. Since, in addition $\bfH(u) \succcurlyeq -\beta_{T-u}^{-2}\bfI_D$, we get
\begin{align}
    0\preccurlyeq (2\bfH(u) +\bfI_D)^2 \preccurlyeq 
    4\frac{[\varphi(\sigma_{T-u})/\sigma_{T-u}^2]^2
    \vee 1}{\beta_{T-u}^4}\;\bfI_D. \label{eq:lem3:e}
\end{align}
If we assume that $\varphi(\sigma_{T-u})\leqslant a$,  
we arrive at
\begin{align}
    \bigg\{2\int_{t_k}^t \mathbf E\big[\big\|2\bfH(u) + 
    \bfI_D\big\|_F^2\big]\,\rmd u\bigg\}^{1/2} &\leqslant 
    2\sqrt{2D(t-t_k)}\; \frac{(a\alpha_{T-t}^2)\vee\beta_{T-t}^2
    }{\beta_{T-t}^4}. 
\end{align}
In view of \eqref{eq:lem3:d}, this yields
\begin{align}
    \|\bV_k - \mathbf E(\bV_k\cond \mathcal F_k)\|_{ \mathbb L_2} 
    & \leqslant \tfrac12 \sqrt{\bar m_2D}\, h_k^2 + 
    \tfrac{4\sqrt{2D}}{3} \,h_k^{3/2} \frac{(a\alpha_{T-t_{k+1}}^2)
    \vee\beta_{T-{t_{k+1}}}^2}{\beta_{T-t_{k+1}}^4}.
\end{align}
This completes the proof of the second claim of the lemma. 

If instead of the assumption $\varphi(\sigma)\leqslant a$, we
use the assumption $\varphi(\sigma)\leqslant \bar a \sigma^2$ 
with $\bar a\geqslant 1$, inequality \eqref{eq:lem3:e}, the fact 
that $u\mapsto \beta_{T-u}$ is decreasing, and inequality \eqref{eq:lem3:d} imply that
\begin{align}
    \|\bV_k - \mathbf E(\bV_k\cond \mathcal F_k)\|_{ 
    \mathbb L_2} & \leqslant \tfrac12 \sqrt{\bar m_2D}
    \, h_k^2 + \tfrac{4\sqrt{2D}}{3} \,h_k^{3/2} 
    \frac{\bar a}{\beta_{T-t_{k+1}}^2}.
\end{align}

For the last claim, we use \eqref{eq:lem3:d} and \eqref{eq:lem3:e}
as follows
\begin{align}
    \int_{t_{K}}^{T}\!\bigg\{\int_{t_K}^t\! \mathbf E\big[\big\|2\bfH(u) + \bfI_D\big\|_F^2\big]\,\rmd u\bigg\}^{1/2}
    \!\!\!\!\rmd t&\leqslant \sqrt{D} 
    \int_{t_{K}}^{T}\!\bigg\{\int_{t_K}^t\! \frac{4\bar a^2}{
    (1-e^{-2(T-u)})^2}\,\rmd u\bigg\}^{1/2}\!\!\!\!\rmd t\\
    &\leqslant \sqrt{D} 
    \int_0^{T-t_{K}}\!\bigg\{\int_{t}^{T-t_K}\! \frac{4\bar a^2}{
    (1-e^{-2u})^2}\,\rmd u\bigg\}^{1/2}\!\!\!\!\rmd t\\
    &\leqslant \sqrt{D} \int_0^{T-t_{K}}\!\bigg\{\int_t^{T-t_K}
    \! \frac{4\bar a^2}{u^2}\,\rmd u\bigg\}^{1/2}
    \!\!\!\!\rmd t\\
    &= \sqrt{D} \int_0^{T-t_{K}}\!\bigg\{ \frac{4\bar a^2
    (T-t_k-t)}{t(T-t_K)}\bigg\}^{1/2}\!\!\!\!\rmd t\\
    & =\pi\bar a\sqrt{D(T-t_K)}.
\end{align}
Thus, from \eqref{eq:lem3:d}, we infer that
\begin{align}
    \|\bV_K - \mathbf E(\bV_K\cond \mathcal F_K)\|_{ \mathbb L_2} 
    & \leqslant \tfrac12 \sqrt{\bar m_2D}\, h_K^2 + 
    \tfrac92\bar a\sqrt{Dh_K}.
\end{align}
This completes the proof.

\section{Numerical Experiments}\label{App:E}

Our experiments follow the standard DDPM sampling procedure as described in the original DDPM paper by \cite{HoJA20}, specifically the pseudocode presented in their Algorithm 2. 

\subsection{Implementation Details}
For clarity, we re-state their algorithm below.

\begin{center}
    \begin{minipage}{0.85\textwidth}
    \begin{tcolorbox}[
    colback=cyan!05,
    colframe=cyan!80!black,
    arc=4mm,
    boxrule=0.8pt,
    left=2mm, right=2mm, top=-1.7mm, bottom=-1mm
    ]
    \vspace{-10pt}
        \begin{algorithm}[H]
        \caption{DDPM Sampling \cite{HoJA20}} \label{algo:ddpm_sampling}
        \begin{algorithmic}[1]
        \STATE $\bx_T \sim \mathcal{N}(\boldsymbol{0}, \bfI)$
        \FOR{$t = T$ \TO $1$}
            \STATE $\bz \sim \mathcal{N}(\boldsymbol{0}, \bfI)$ if $t>1$, else $\bz =\boldsymbol{0}$
            \qquad\hspace*{46pt}
            \STATE $\bmu_{\btheta}(\bx_t,t) =  \frac{1}{\sqrt{\alpha_t}} \left( \bx_t - \frac{1-\alpha_t}{\sqrt{1-\bar{\alpha}_t}}\boldsymbol{\epsilon}_{\theta}(\bx_t,t) \right)$
            \STATE $\bx_{t-1} = \bmu_{\btheta}(\bx_t,t) + \sigma_t \bz$ \qquad
        \ENDFOR
        \RETURN $\bx_0$
        \end{algorithmic}
        \end{algorithm}
    \end{tcolorbox}
    \end{minipage}
\end{center}
To better explain the correspondence between notation used in our
paper and that of \cite{HoJA20}, we provide the following table:
\begin{table}[h]
    \centering
    \begin{tabular}{c| c}
    \toprule
        Notation in \cite{HoJA20} & Our notation\\
        \midrule    
         $\bx_T,\ldots,\bx_0$ & $\bZ_0,\ldots,\bZ_{K+1}$  \\[5pt]
         $\bz$ &  $\bxi_{k+1}$\\[5pt]
         $\sigma_t$ &  $\sqrt{2h_{k}}$\\[5pt]
         $\alpha_t$ & $(1+h_k)^{-2}\approx e^{-2h_k}\approx 1-2h_k$\\[5pt]
         $\bar\alpha_t$ & $\prod_{j=0}^k (1+h_k)^{-2} \approx 
         e^{-2t_{K+1}}$\\[5pt]
         $\displaystyle\frac{\boldsymbol{\epsilon}_{\theta}(\bx_t,t)}{\sqrt{1-\bar\alpha_t}}$
         & $-2\wt\score(T-t_k,\bZ_k)$\\
         \bottomrule
    \end{tabular}
    \label{tab:my_label}
\end{table}

To evaluate the robustness of the generative process under perturbed score estimates, we had to isolate the score estimation component within the sampling loop. In the formulation of \cite{HoJA20}, this corresponds to the rescaled neural network output $-0.5\boldsymbol{\epsilon}_\theta(\bx_t, t)/\sqrt{1-\bar{\alpha}_t}$. In our experiments, we added various forms of noise (Gaussian, Uniform, Laplace, and Student's-$t$) directly to this term, simulating inaccurate or noisy score predictions. This modification allows us to assess the impact of score perturbations on the quality of generated samples, both visually and quantitatively.

We know that in our formulation of the problem, the 
conditional expectation of the next state given that the
current state is $\bx$ is given by
$\bmu_\theta(\bx, t) = (1+h)\bx + 2\score(t, \bx)h$. 
Therefore, adding $\bzeta$ to $\score(t, \bx)$ implies adding $2h\bzeta$ to $\bmu_\theta(\bx_t, t)$, and thus adding $
\displaystyle\frac{\sqrt{\alpha_t(1-\bar{\alpha}_t)}}{1-\alpha_t}
\times 2h_k\bzeta\approx 2\sqrt{1-\bar\alpha_t}\,\bzeta$ to $\boldsymbol{\epsilon}_\theta(\bx, t)$.

If the paper is accepted, we will open-source the complete inference and evaluation code, including scripts to reproduce all figures and tables.

\subsection{Additional Figures}

\paragraph{Qualitative results.}
 \Cref{fig:qual_cifar} and \Cref{fig:qual_celeba} extend the main-paper image grids.
 For each dataset (CIFAR-10, CelebA-HQ, and LSUN-Churches) we display samples generated with Gaussian, Laplace, and Student's-$t$ score noise at two strengths, $\sigma=0.5$ or $\sigma=1$ (moderate) and $\sigma=2$ (severe).  
Rows share the same latent seed as the baseline to enable direct visual comparison.
\vspace*{-7pt}

\paragraph{Quantitative trends.}
\Cref{fig:fid_steps} tracks FID on the CIFAR-10 dataset as we truncate the 1 000-step DDPM schedule at \{250, 500, 750, 1000\} steps for the \emph{clean} score and the i.i.d. $\mathcal N(\boldsymbol{0},\bfI_D)$ noise contaminated score. 
\begin{figure}[h!]
    \centering
    \vspace{-10pt}
    \begin{subfigure}[t]{0.33\linewidth}
    \includegraphics[width=\linewidth]{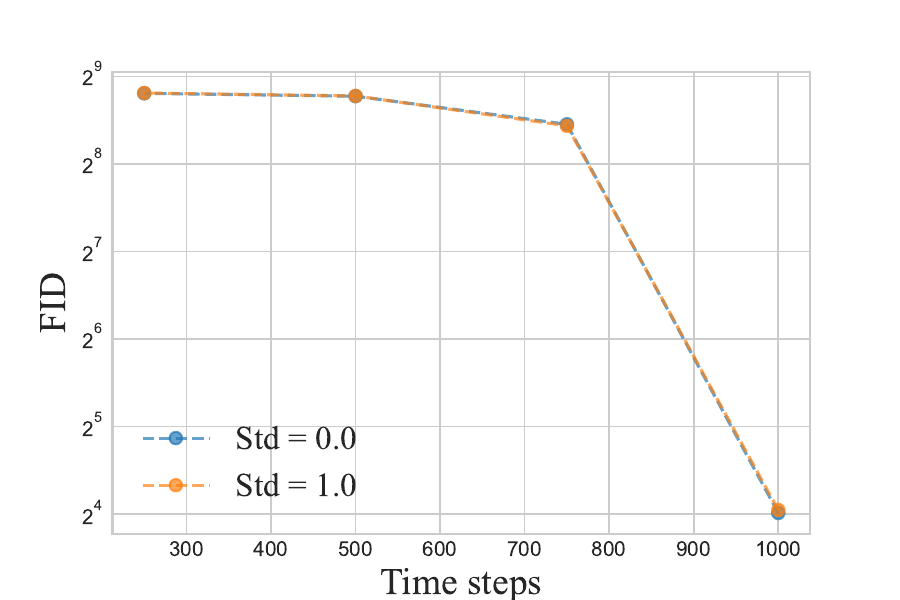}
    \caption{CIFAR-10}
    \end{subfigure}\hfill
    \begin{subfigure}[t]{0.33\linewidth}
    \includegraphics[width=\linewidth]{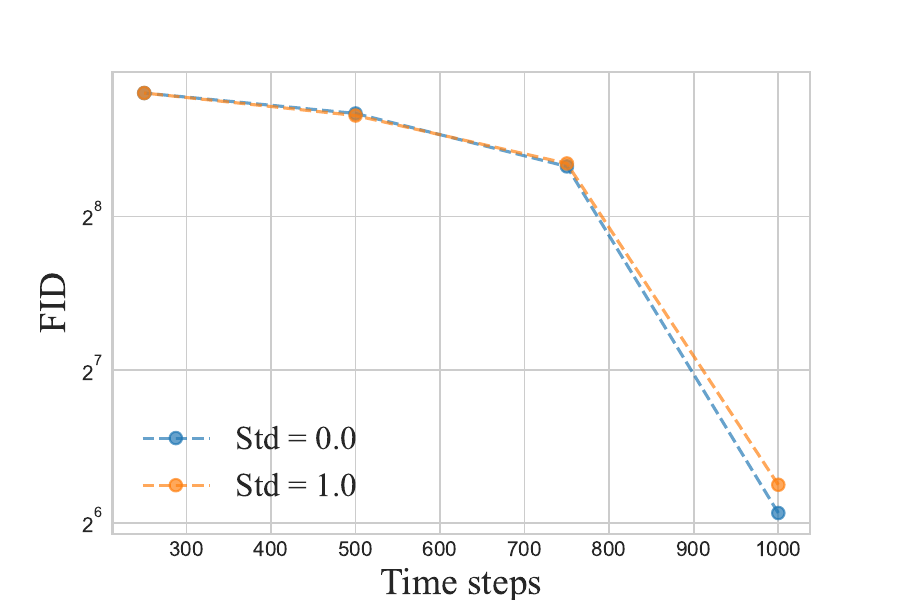}
    \caption{CelebA-HQ}
    \end{subfigure}\hfill
    \begin{subfigure}[t]{0.33\linewidth}
    \includegraphics[width=\linewidth]{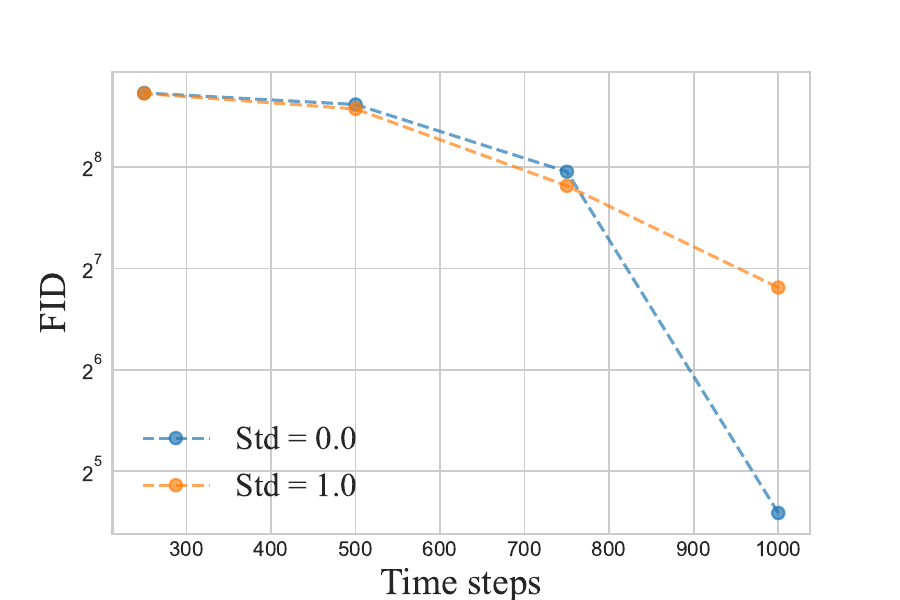}
    \caption{LSUN-Churches}
    \end{subfigure}
    
    \caption{FID as a function of time steps. Blue:  
    standard DDPM inference.  
    Orange: same sampler with i.i.d.  $\mathcal N(\boldsymbol{0},\bfI_D)$ noise added to the score at each step.}    
    \vspace*{-9pt}
    \label{fig:fid_steps}
\end{figure}

We observe that performance increases at a similar rate with the number of steps for both clean and noisy score estimates.

Additionally, \Cref{fig:fid_evolution} illustrates the ``deterioration'' of three distinct pictures for each of the different models (datasets) that we have --- each starting with a fixed random noise, generating the corresponding image after 1000 diffusion steps with the noise contaminated score, as described before, parametrized by different $\sigma$. We observe that datasets with higher-resolution images and, respectively, deeper noise (alternatively, score) predicting neural networks exhibit higher deterioration than those with low-resolution images.
\begin{figure}[h!]
    \centering
\includegraphics[width=\linewidth]{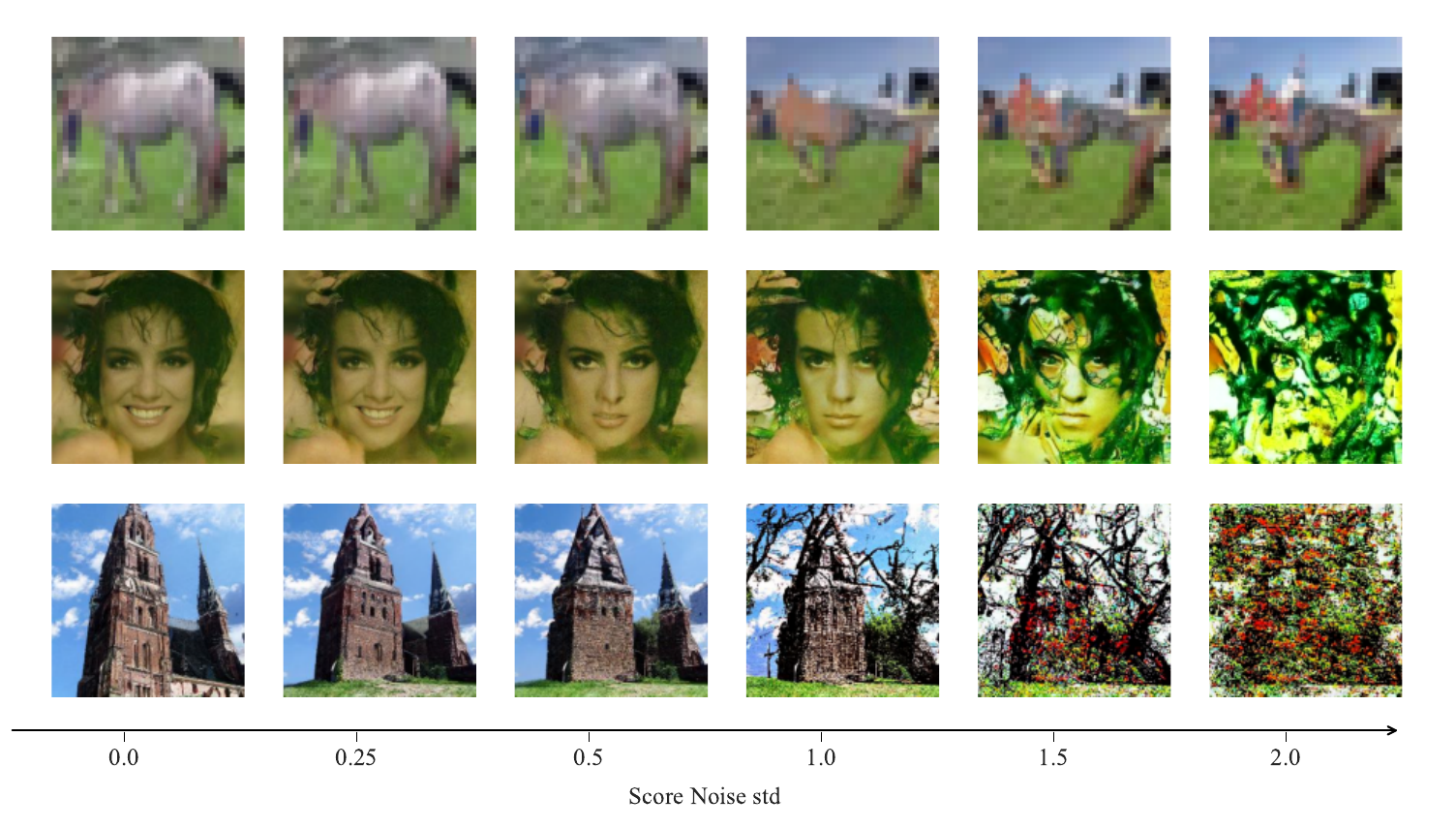}
    \caption{A single example of CIFAR-10 (top), CelebA-HQ (middle) and LSUN-Churches (bottom) generated data, respectively, over different standard deviations.}
    \label{fig:fid_evolution}
\end{figure}

\subsection{Computational Resources}
\label{compute_resources}
This project was provided with computer and storage resources 
by GENCI at IDRIS thanks to the grant 20XX-104565 on the 
supercomputer Jean Zay's A100 partition. 

Some of the experiments were run on two additional GPU nodes: 
one with AMD EPYC 7V12 64-Core Processor, 1TB of RAM, and 
with 8xA100 40GB VRAM version NVIDIA GPUs. The other one 
with AMD EPYC 9005 192-Core Processor, 0.5TB of RAM, and 
with 2xH100 NVIDIA GPUs.

Sampling 8192 CIFAR‑10 images or 512 CelebA-HQ or 512 
LSUN-Churches images takes 1.5 GPU‑hours. FID evaluation 
for all the scale values of a single noise distribution 
takes 0.2 GPU-hours. 

\begin{figure}[p]
  \centering
  \begin{subfigure}[t]{0.3\linewidth}
    \centering
    \includegraphics[width=\linewidth, height=1.5\linewidth]{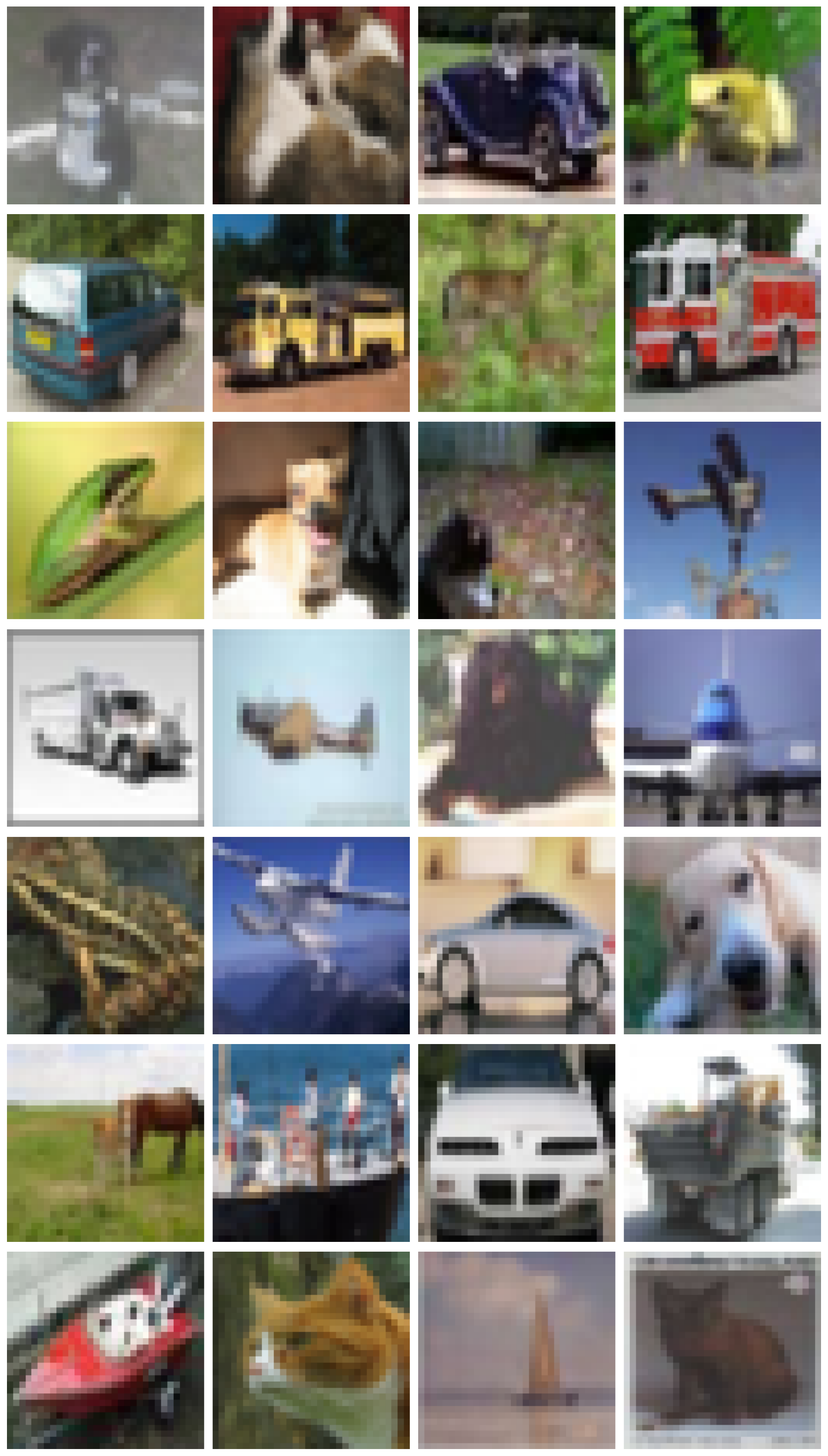}
    \caption{No noise}
  \end{subfigure}\hfill
  \begin{subfigure}[t]{0.3\linewidth}
    \centering
    \includegraphics[width=\linewidth, height=1.5\linewidth]{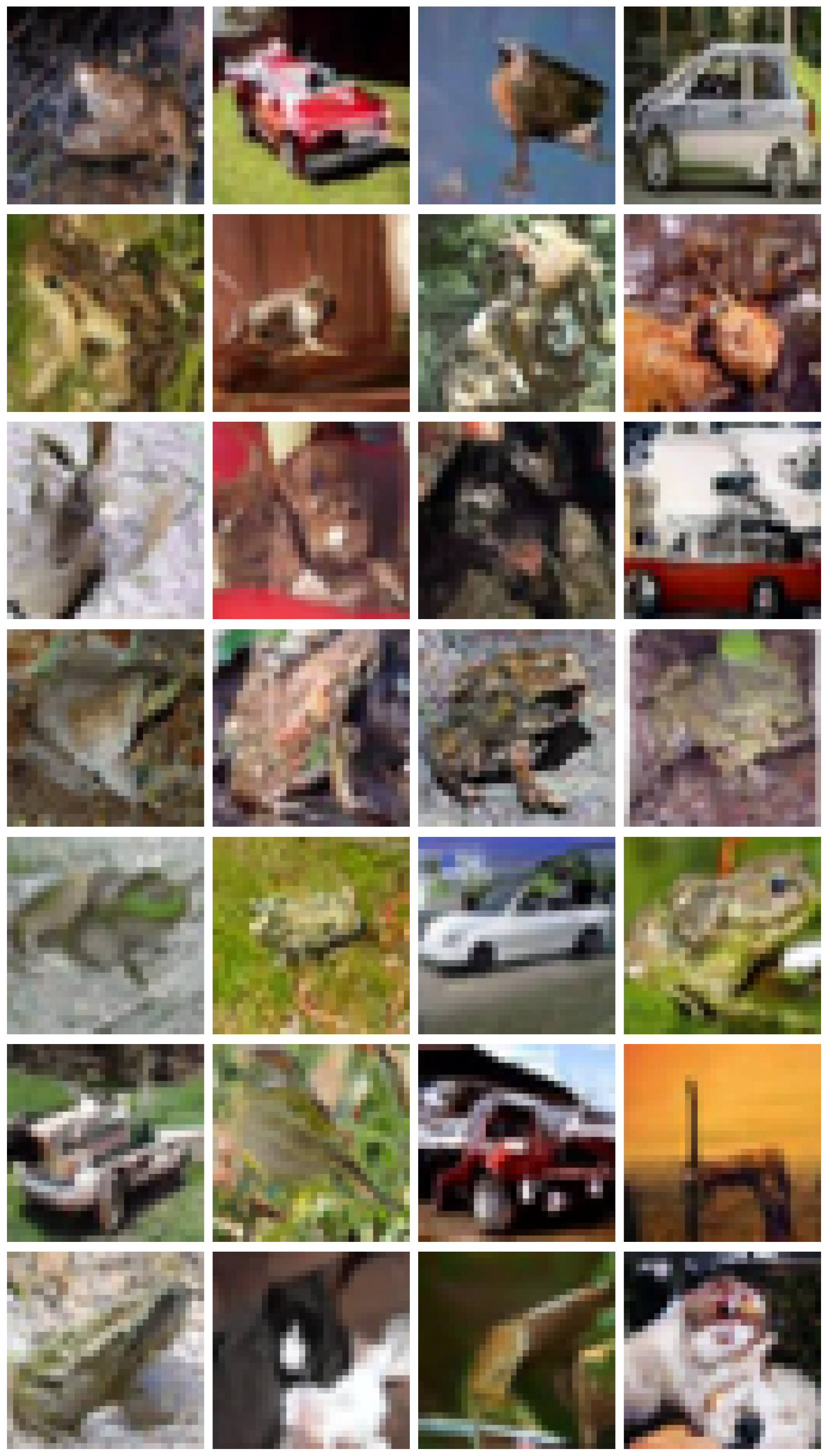}
    \caption{Gaussian noise, $\sigma=1$}
  \end{subfigure}\hfill
  \begin{subfigure}[t]{0.3\linewidth}
    \centering
    \includegraphics[width=\linewidth, height=1.5\linewidth]{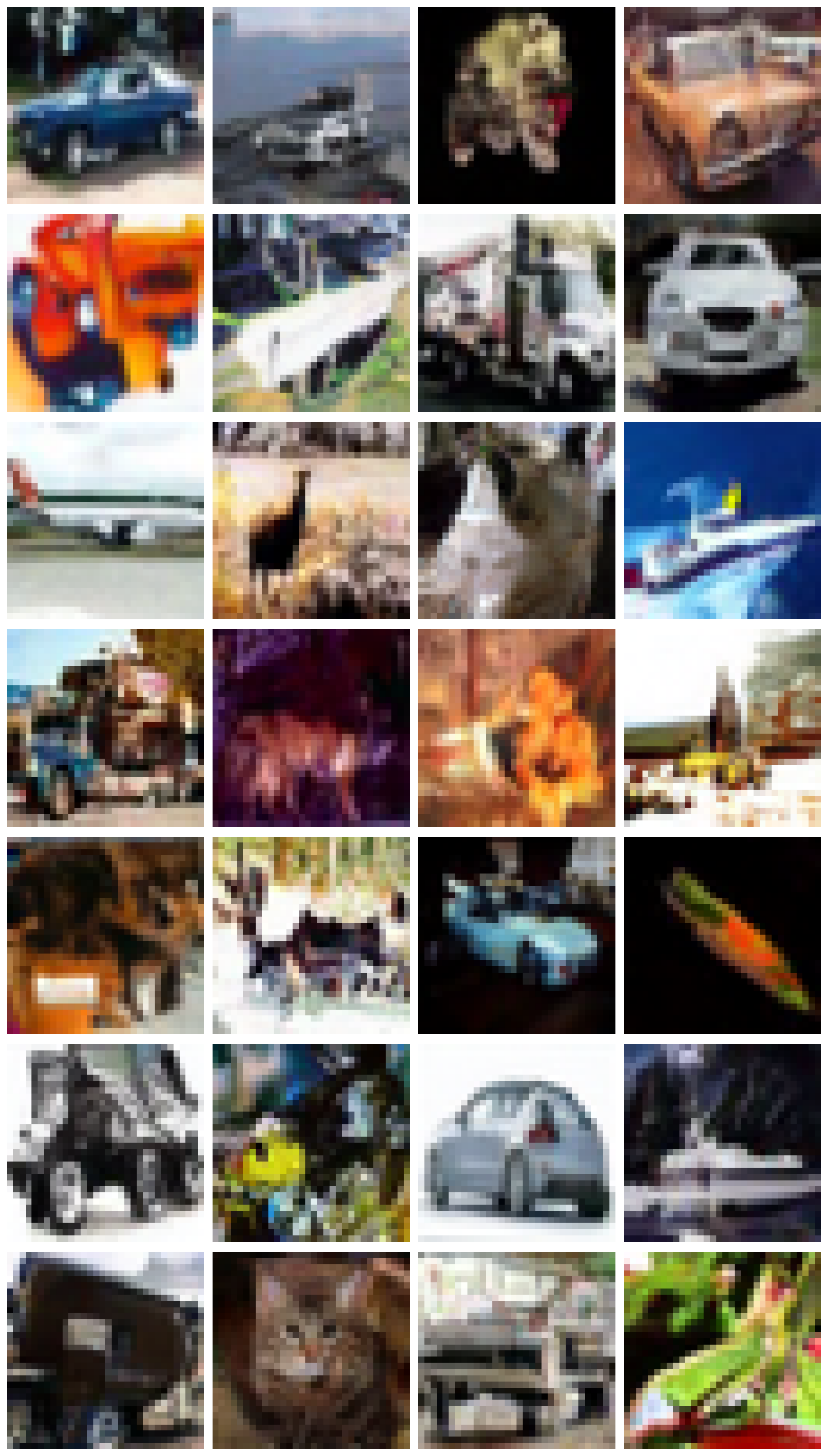}
    \caption{Gaussian noise, $\sigma=2$}
      \vspace{0.5em} 

  \end{subfigure}
  \vspace{0.5em} 
  \begin{subfigure}[t]{0.3\linewidth}
    \centering
    \includegraphics[width=\linewidth, height=1.5\linewidth]{figs/cifar_no_noise.pdf}
    \caption{No noise}
  \end{subfigure}\hfill
  \begin{subfigure}[t]{0.3\linewidth}
    \centering
    \includegraphics[width=\linewidth, height=1.5\linewidth]{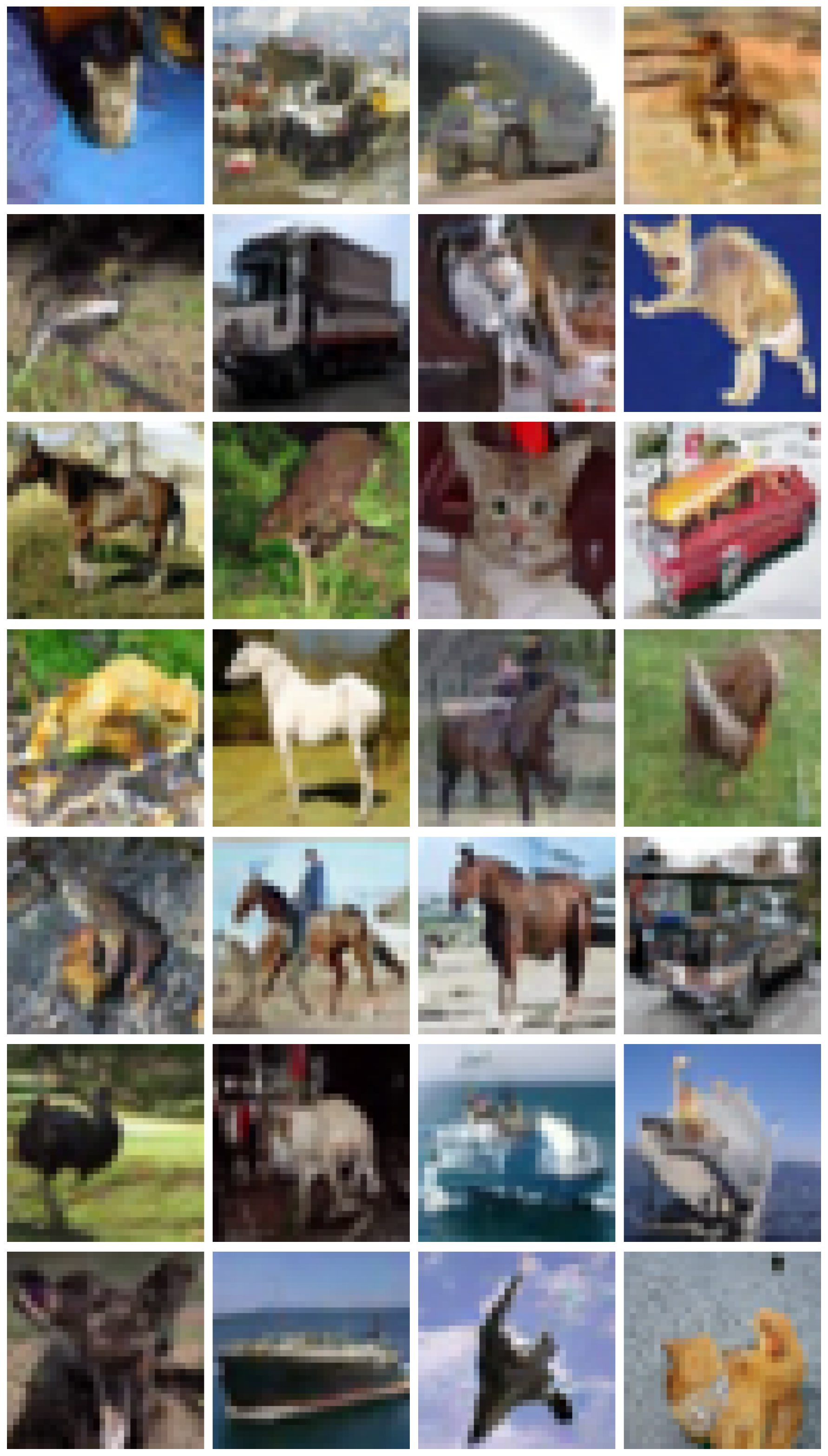}
    \caption{Laplace, $\sigma=1$}
  \end{subfigure}\hfill
  \begin{subfigure}[t]{0.3\linewidth}
    \centering
    \includegraphics[width=\linewidth, height=1.5\linewidth]{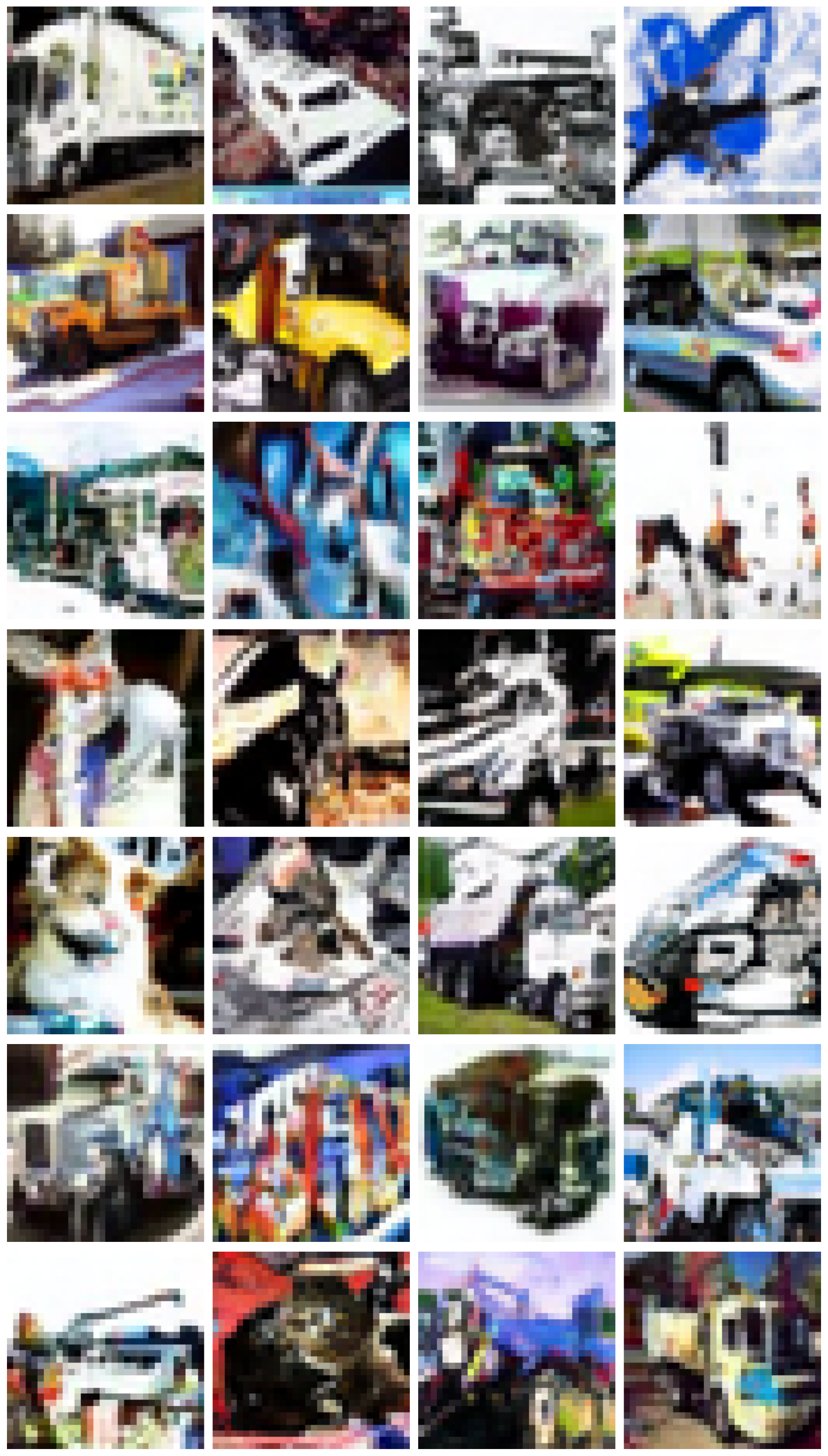}
    \caption{Laplace, $\sigma=2$}
  \end{subfigure}
  \vspace{0.5em} 
  \begin{subfigure}[t]{0.3\linewidth}
    \centering
    \includegraphics[width=\linewidth, height=1.5\linewidth]{figs/cifar_no_noise.pdf}
    \caption{No noise}
  \end{subfigure}\hfill
  \begin{subfigure}[t]{0.3\linewidth}
    \centering
    \includegraphics[width=\linewidth, height=1.5\linewidth]{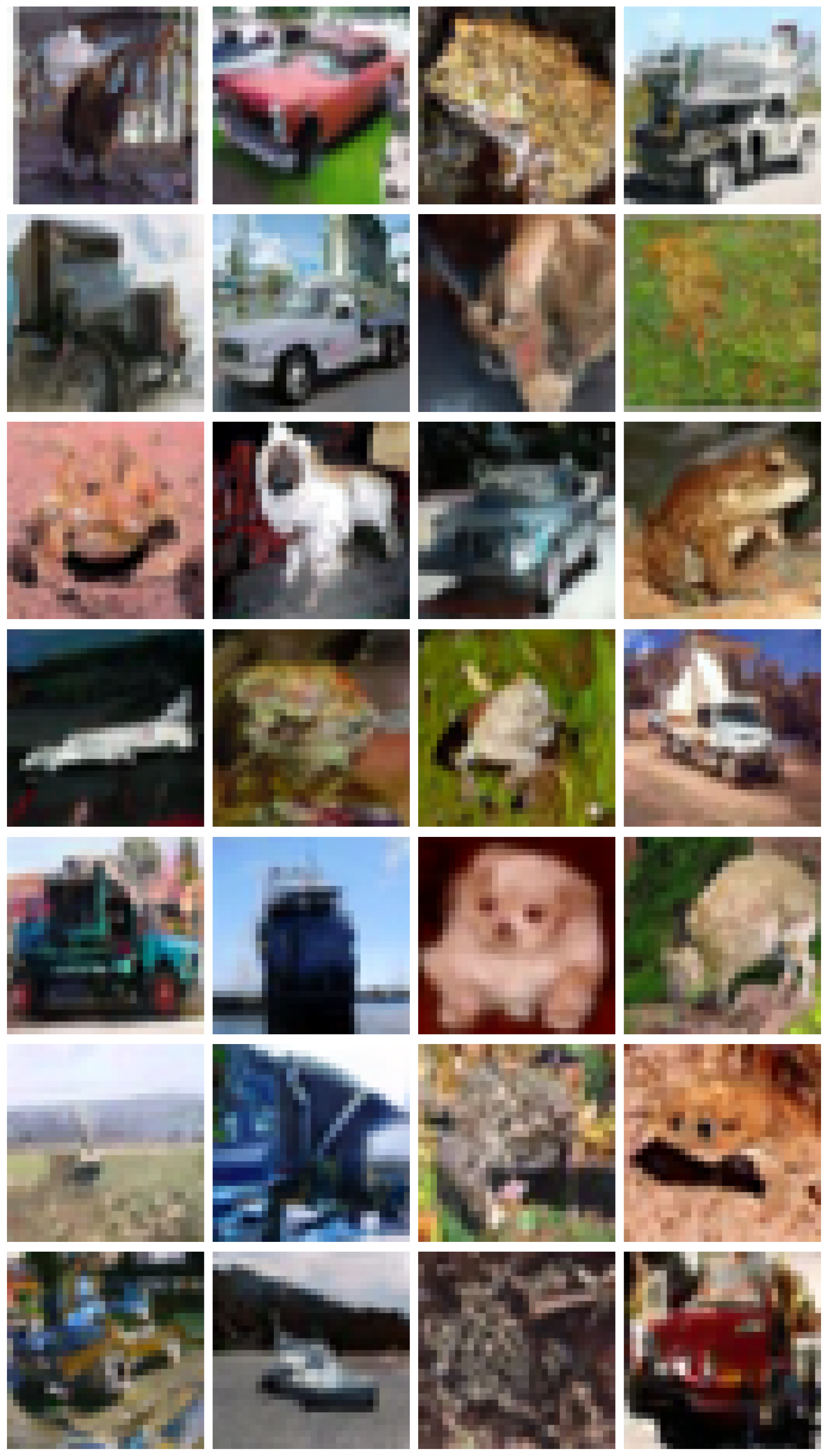}
    \caption{Student's $t$, $\sigma=1$}
  \end{subfigure}\hfill
  \begin{subfigure}[t]{0.3\linewidth}
    \centering
    \includegraphics[width=\linewidth, height=1.5\linewidth]{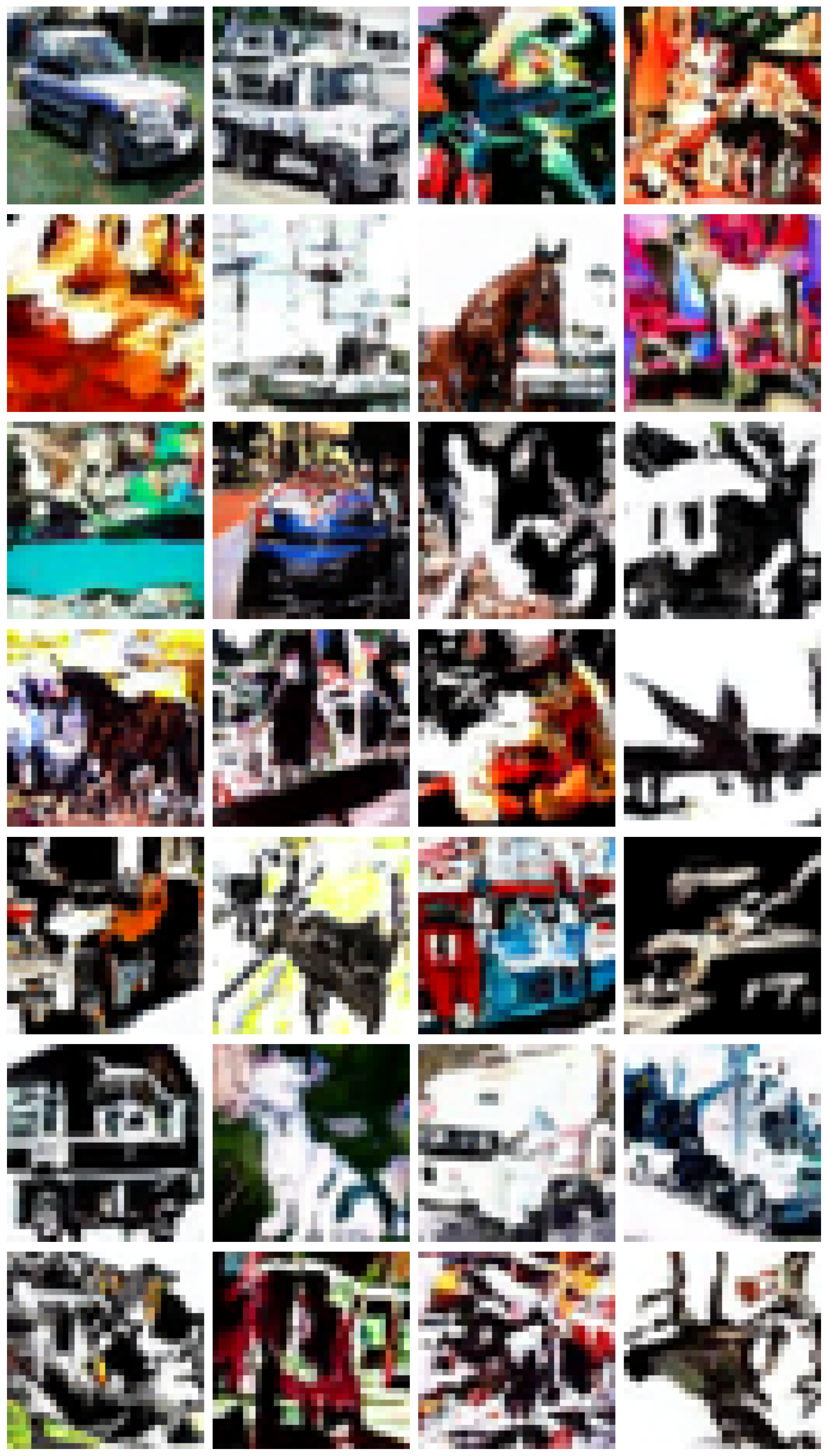}
    \caption{Student's $t$, $\sigma=2$}
  \end{subfigure}

  \caption{Additional CIFAR-10 generations for 3 noise families 
  (rows) and 2 noise levels (columns).}
  \label{fig:qual_cifar}
\end{figure}

\begin{figure}[p]
  \centering
  \begin{subfigure}[t]{0.3\linewidth}
    \centering
    \includegraphics[width=\linewidth, height=1.5\linewidth]{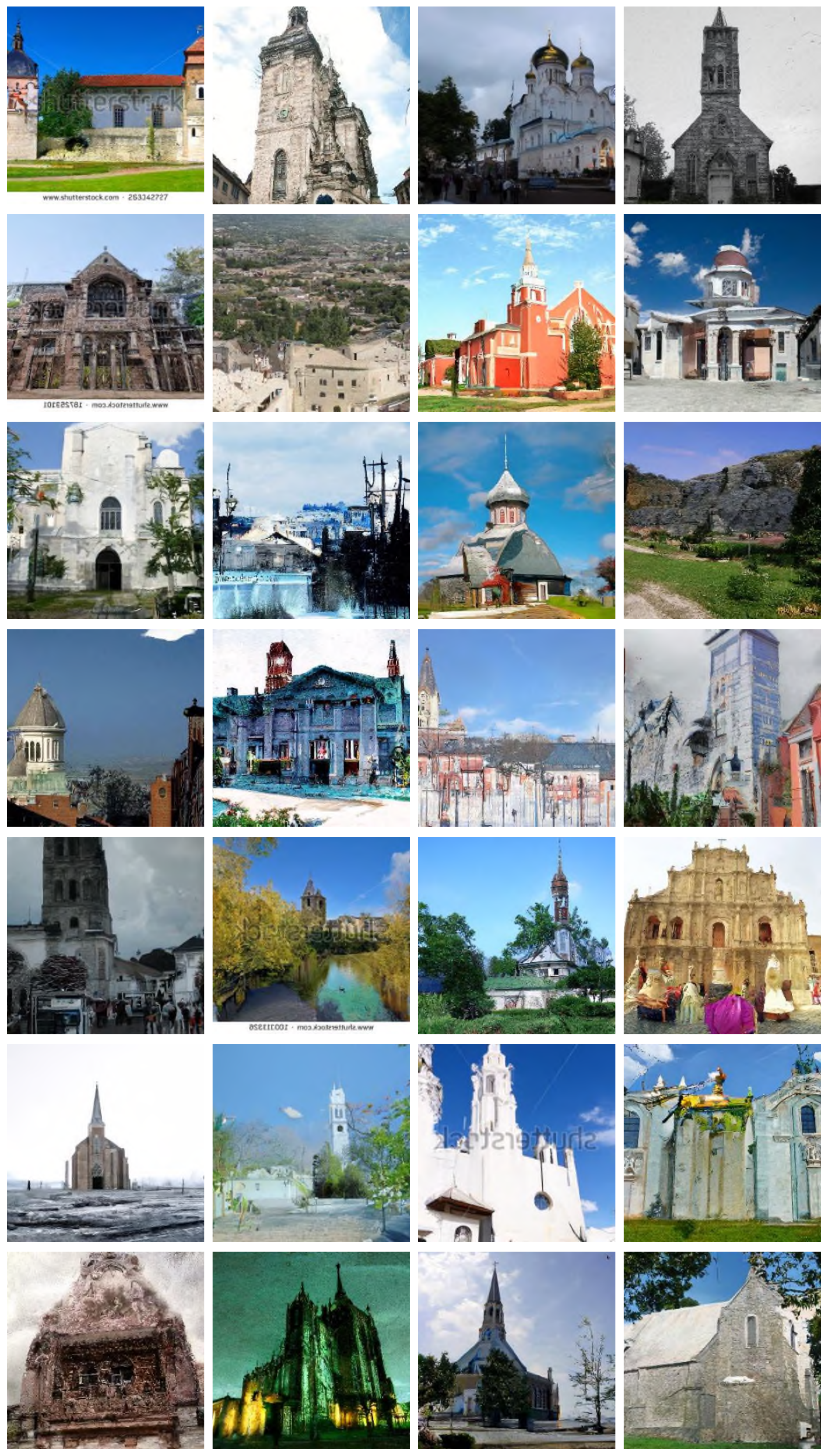}
    \caption{No noise}
  \end{subfigure}\hfill
  \begin{subfigure}[t]{0.3\linewidth}
    \centering
    \includegraphics[width=\linewidth, height=1.5\linewidth]{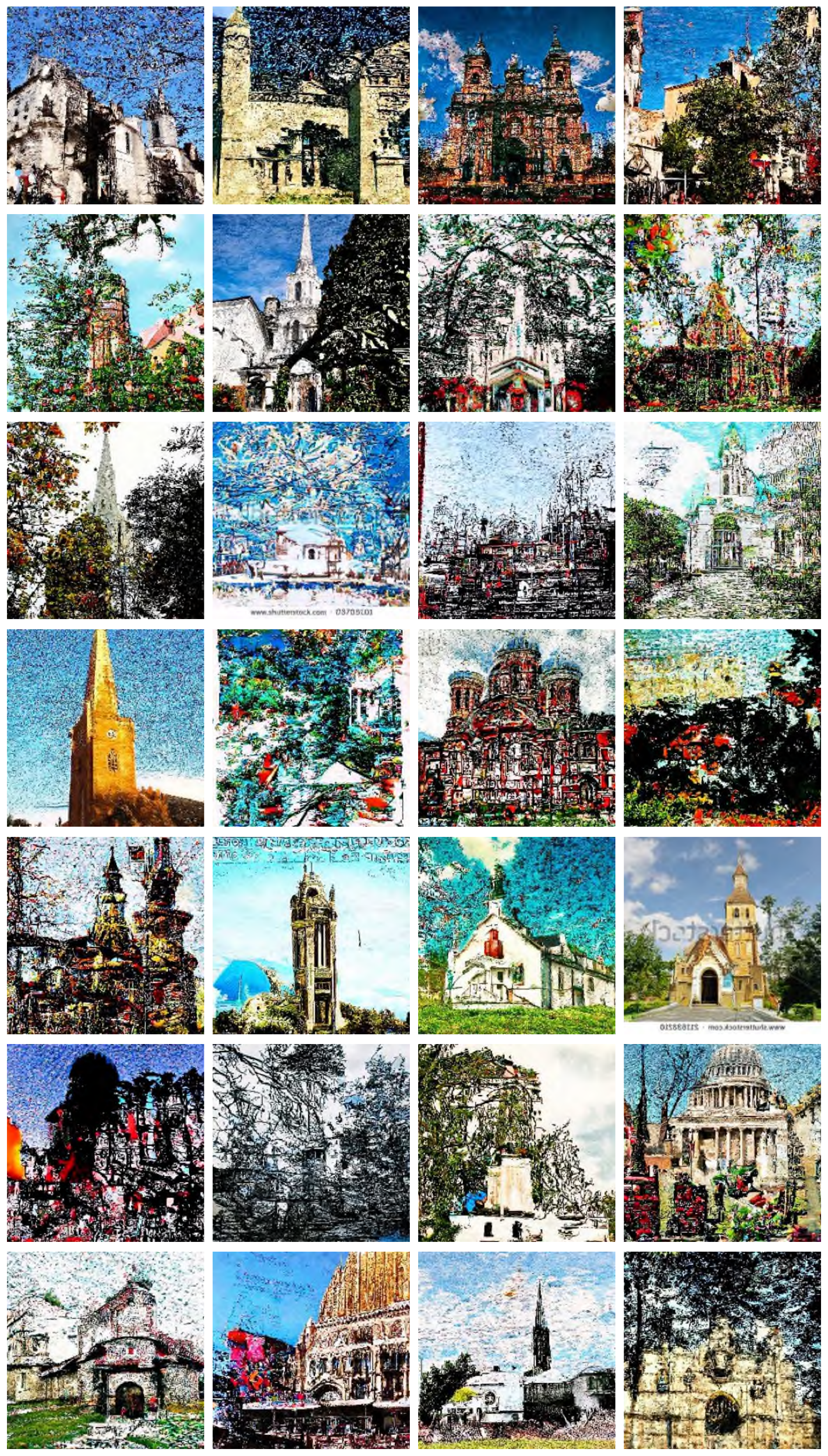}
    \caption{Gaussian noise, $\sigma=1$}
  \end{subfigure}\hfill
  \begin{subfigure}[t]{0.3\linewidth}
    \centering
    \includegraphics[width=\linewidth, height=1.5\linewidth]{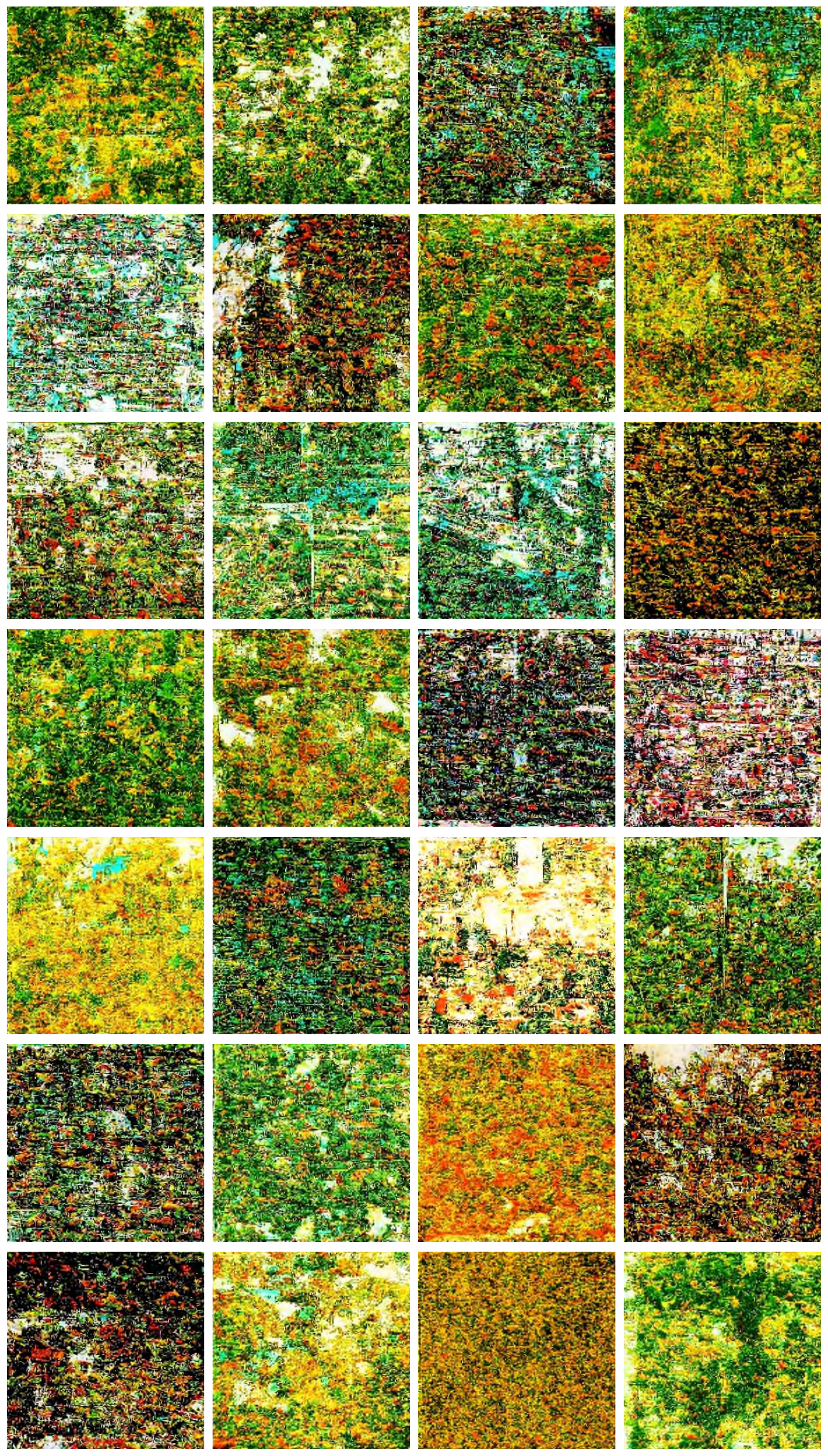}
    \caption{Gaussian noise, $\sigma=2$}
      \vspace{0.5em} 

  \end{subfigure}
  \vspace{0.5em} 
  \begin{subfigure}[t]{0.3\linewidth}
    \centering
    \includegraphics[width=\linewidth, height=1.5\linewidth]{figs/church_no_noise_small.pdf}
    \caption{No noise}
  \end{subfigure}\hfill
  \begin{subfigure}[t]{0.3\linewidth}
    \centering
    \includegraphics[width=\linewidth, height=1.5\linewidth]{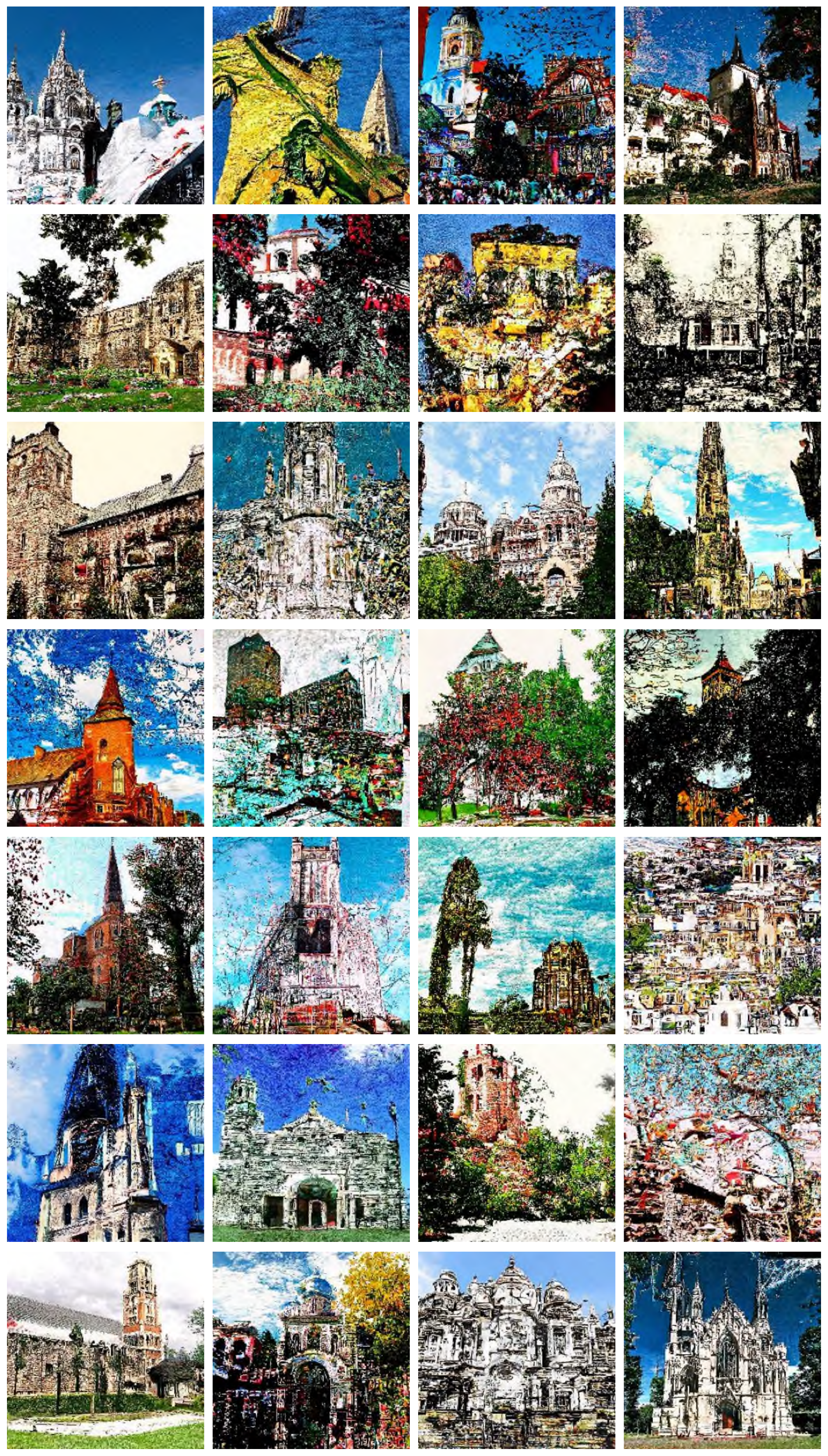}
    \caption{Laplace, $\sigma=1$}
  \end{subfigure}\hfill
  \begin{subfigure}[t]{0.3\linewidth}
    \centering
    \includegraphics[width=\linewidth, height=1.5\linewidth]{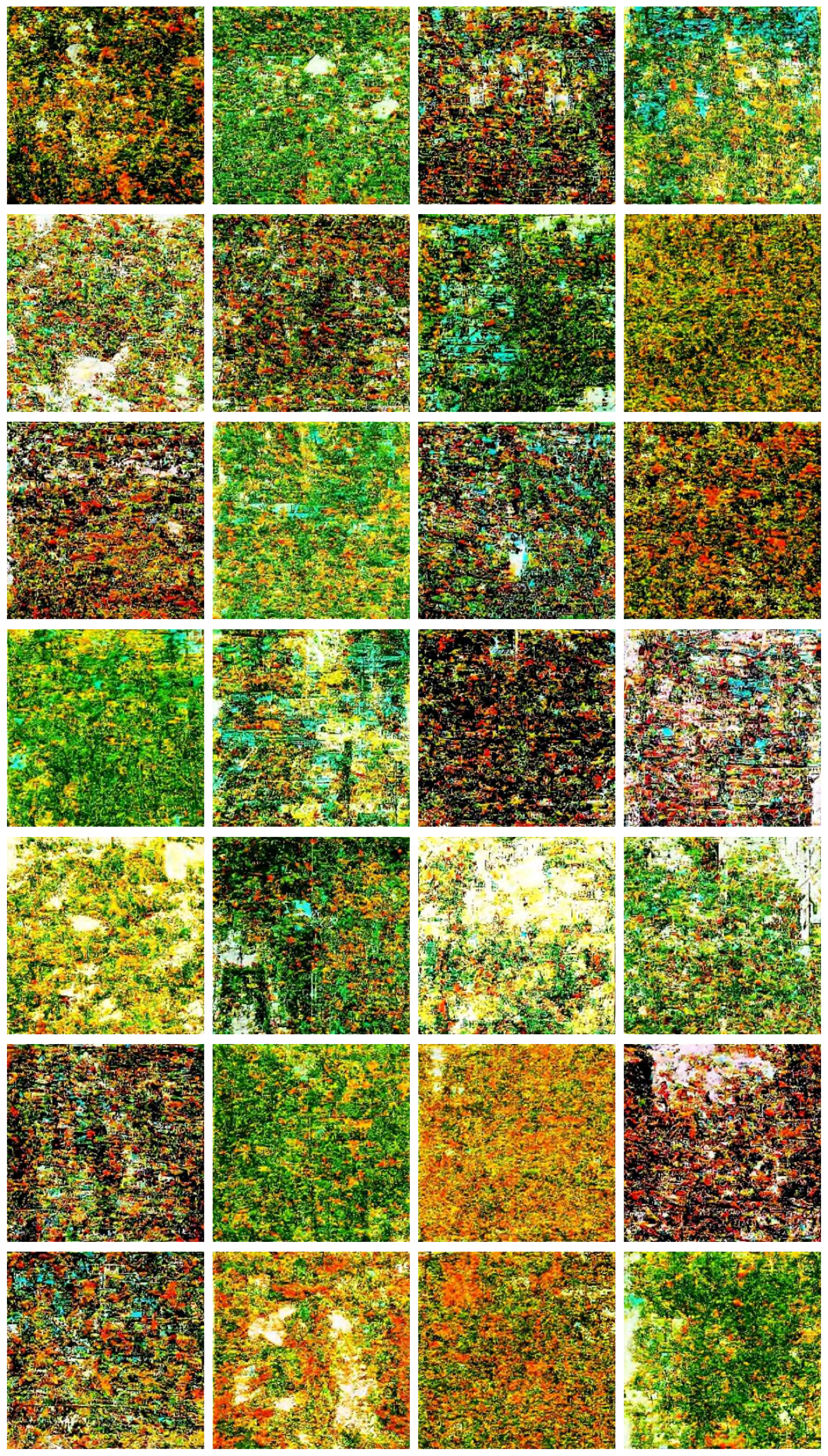}
    \caption{Laplace, $\sigma=2$}
  \end{subfigure}
  \vspace{0.5em} 
  \begin{subfigure}[t]{0.3\linewidth}
    \centering
    \includegraphics[width=\linewidth, height=1.5\linewidth]{figs/church_no_noise_small.pdf}
    \caption{No noise}
  \end{subfigure}\hfill
  \begin{subfigure}[t]{0.3\linewidth}
    \centering
    \includegraphics[width=\linewidth, height=1.5\linewidth]{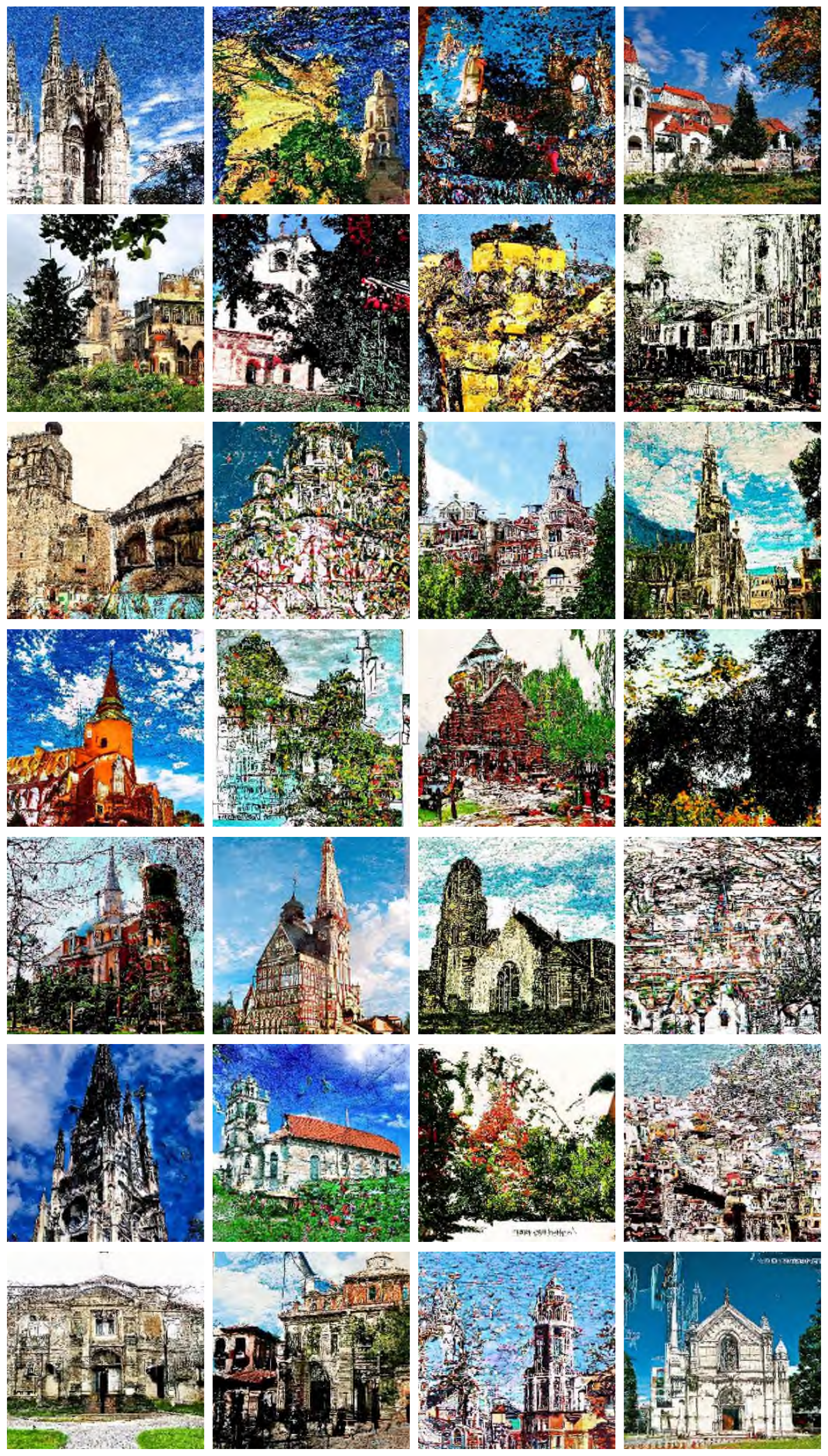}
    \caption{Student's $t$, $\sigma=1$}
  \end{subfigure}\hfill
  \begin{subfigure}[t]{0.3\linewidth}
    \centering
    \includegraphics[width=\linewidth, height=1.5\linewidth]{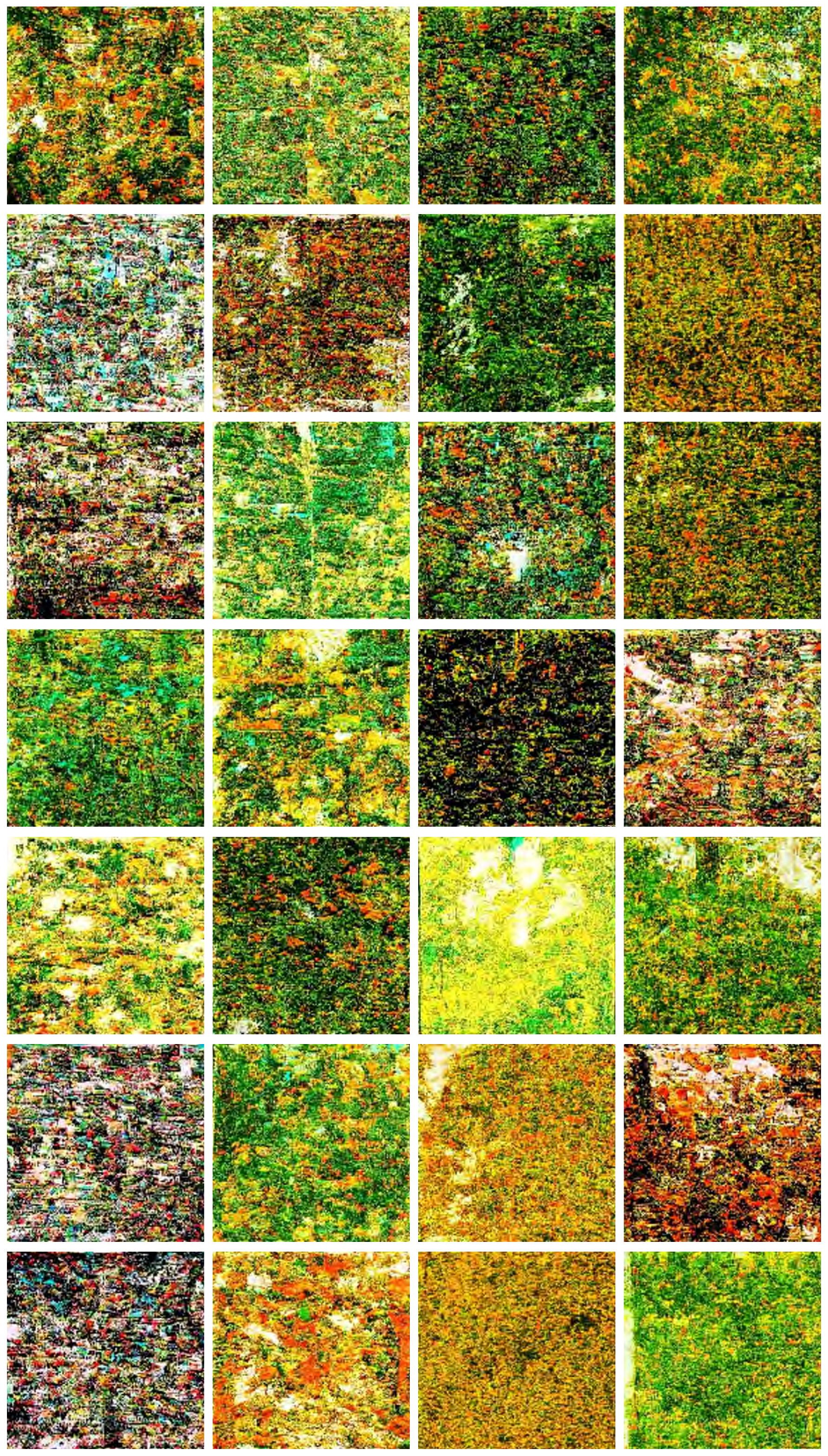}
    \caption{Student's $t$, $\sigma=2$}
  \end{subfigure}

  \caption{Additional LSUN-Church generations for 3 noise 
  families (rows) and 2 noise levels (columns).}
  \label{fig:qual_LSUN}
\end{figure}

\begin{figure}[p]
  \centering
  \begin{subfigure}[t]{0.3\linewidth}
    \centering
    \includegraphics[width=\linewidth, height=1.5\linewidth]{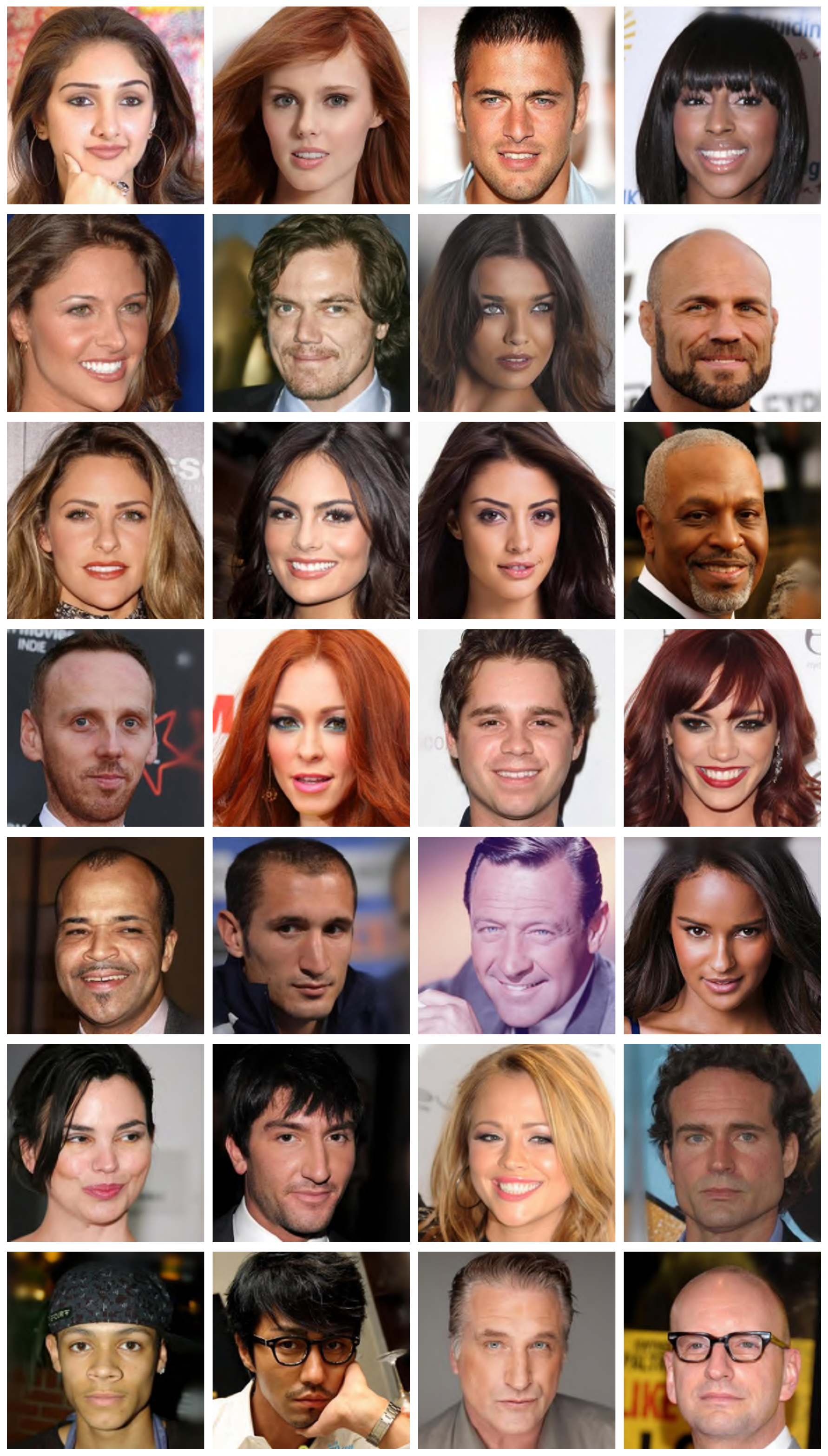}
    \caption{No noise}
  \end{subfigure}\hfill
  \begin{subfigure}[t]{0.3\linewidth}
    \centering
    \includegraphics[width=\linewidth, height=1.5\linewidth]{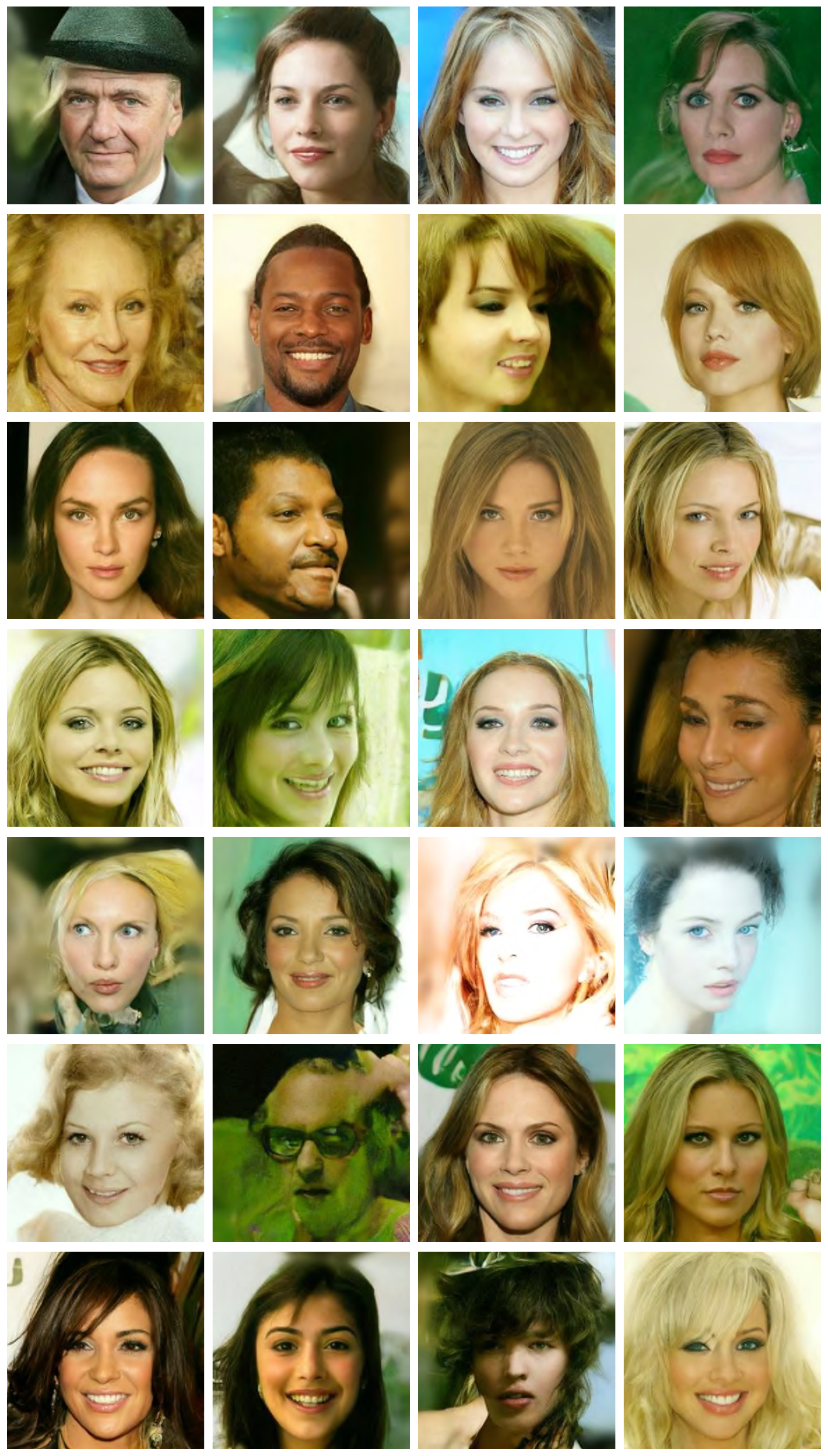}
    \caption{Gaussian noise, $\sigma=0.5$}
  \end{subfigure}\hfill
  \begin{subfigure}[t]{0.3\linewidth}
    \centering
    \includegraphics[width=\linewidth, height=1.5\linewidth]{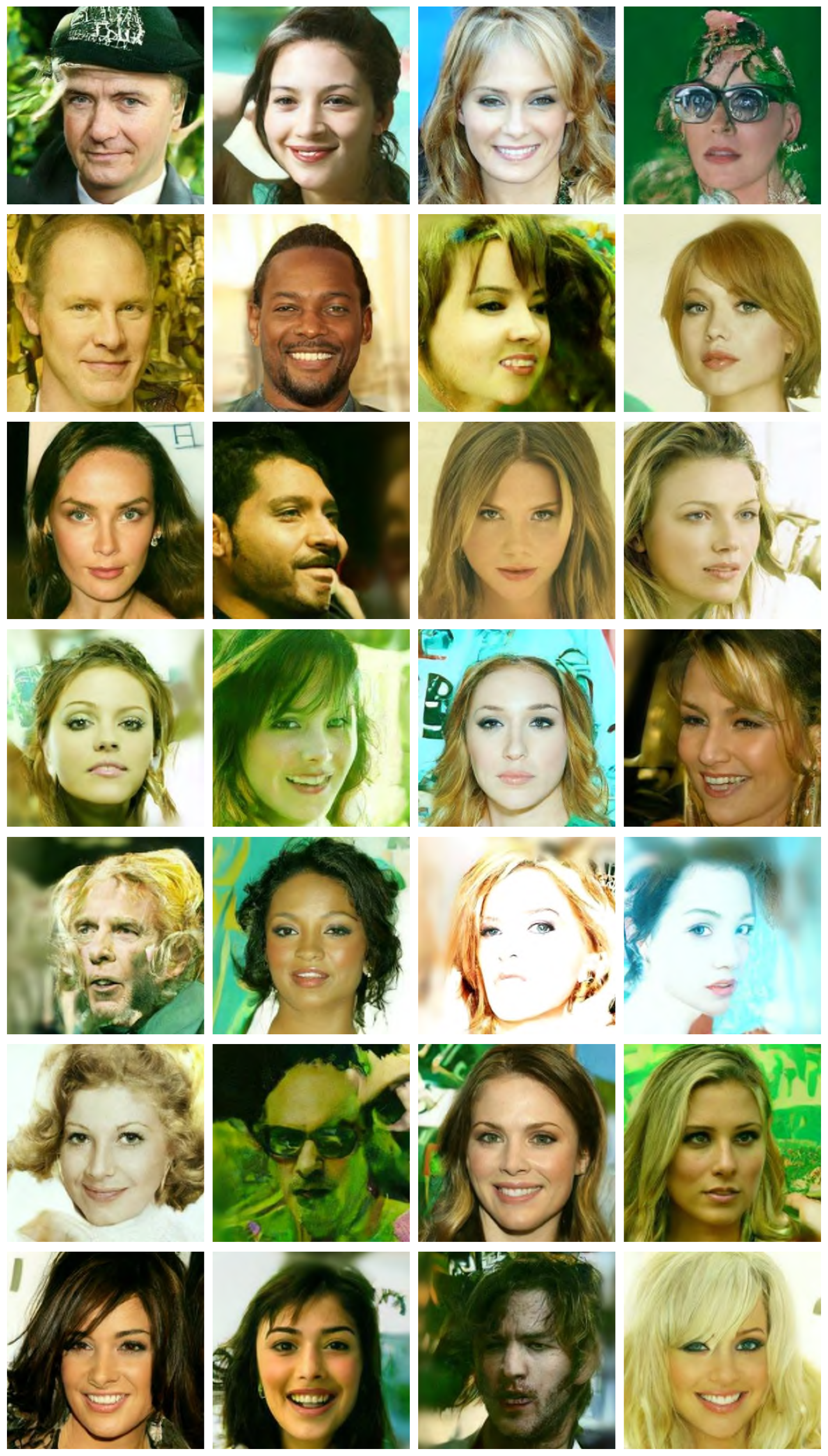}
    \caption{Gaussian noise, $\sigma=1$}
      \vspace{0.5em} 

  \end{subfigure}
  \vspace{0.5em} 
  \begin{subfigure}[t]{0.3\linewidth}
    \centering
    \includegraphics[width=\linewidth, height=1.5\linewidth]{figs/celeba_no_noise_small.pdf}
    \caption{No noise}
  \end{subfigure}\hfill
  \begin{subfigure}[t]{0.3\linewidth}
    \centering
    \includegraphics[width=\linewidth, height=1.5\linewidth]{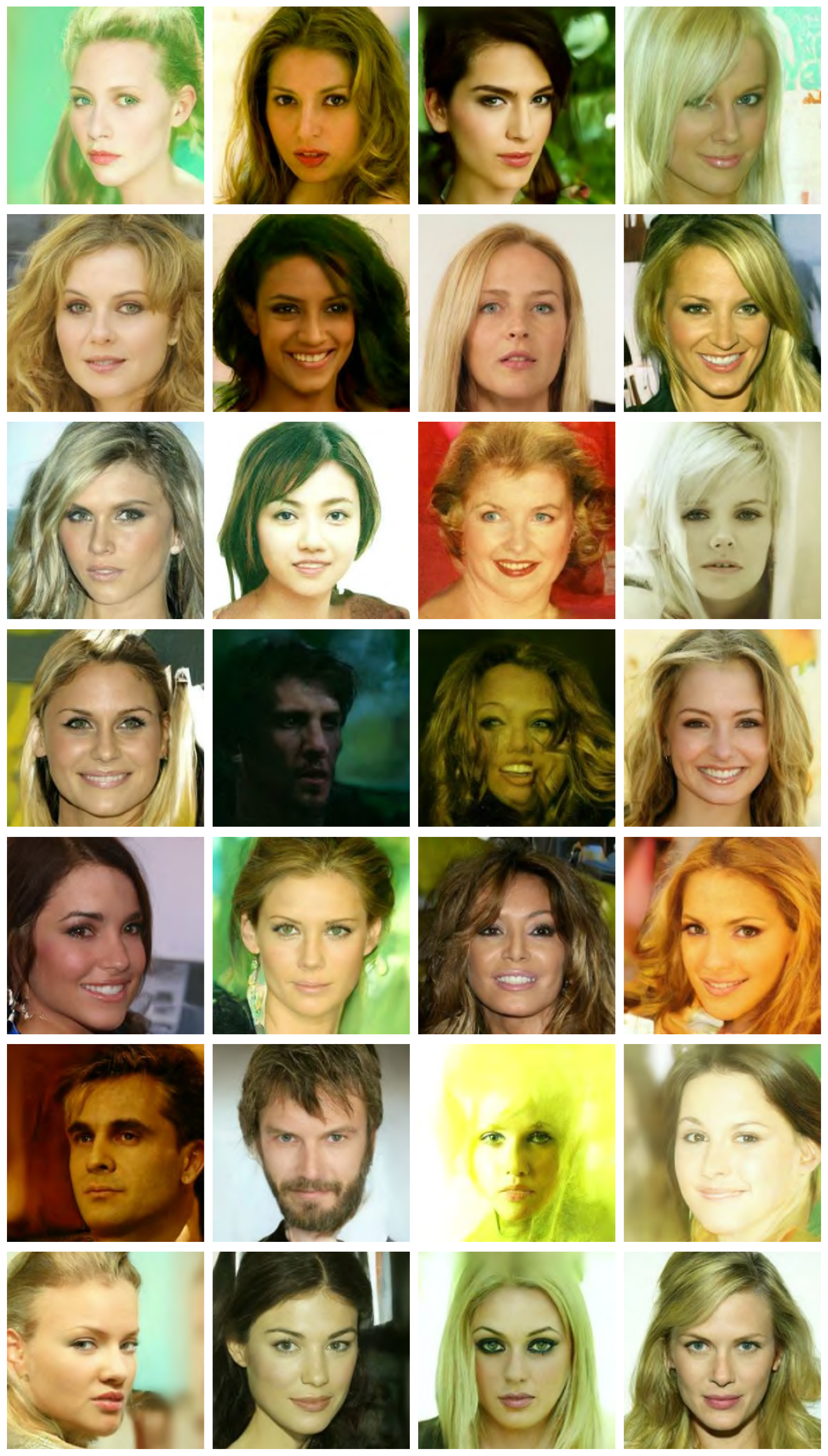}
    \caption{Laplace, $\sigma=0.5$}
  \end{subfigure}\hfill
  \begin{subfigure}[t]{0.3\linewidth}
    \centering
    \includegraphics[width=\linewidth, height=1.5\linewidth]{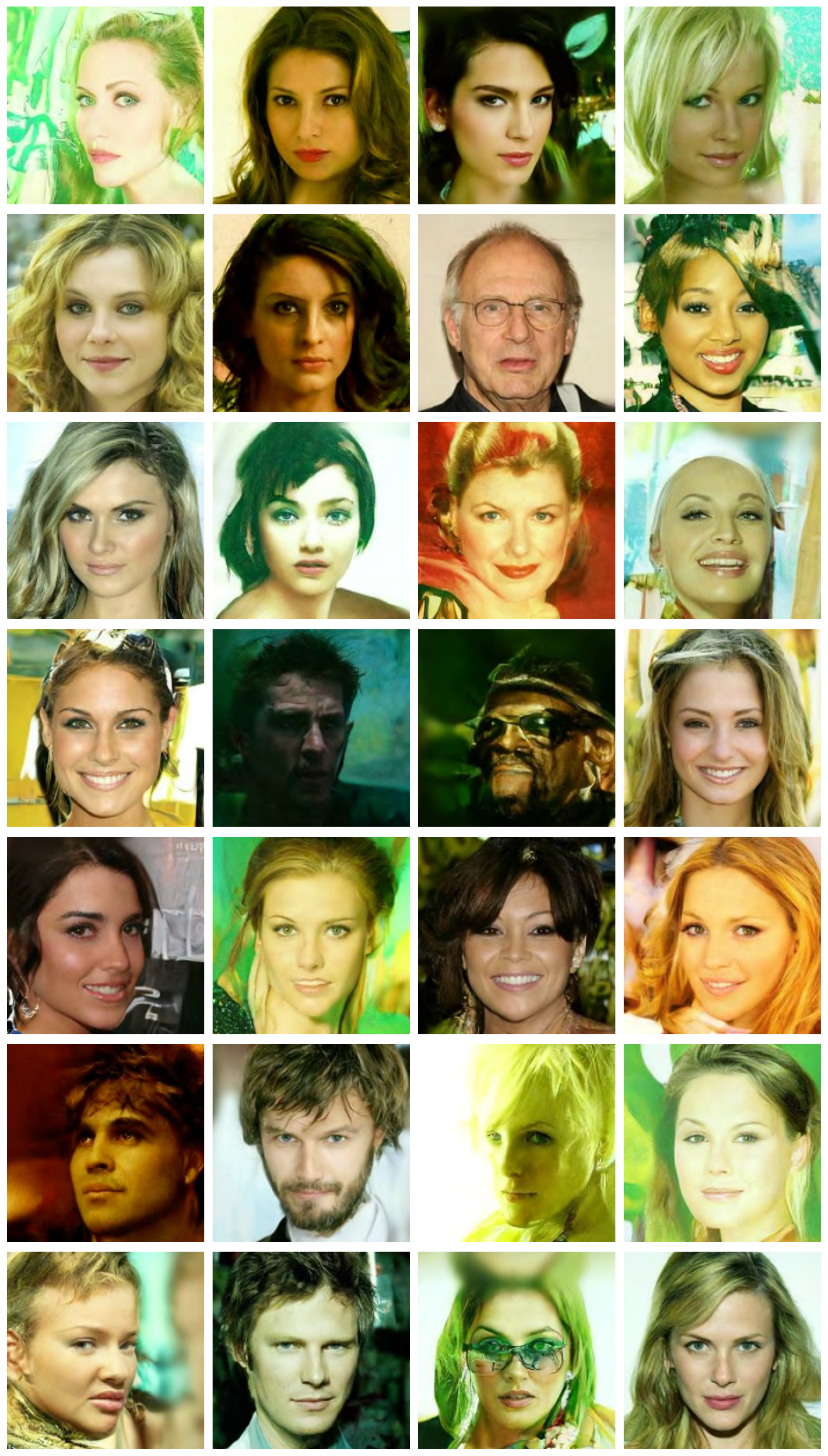}
    \caption{Laplace, $\sigma=1$}
  \end{subfigure}
  \vspace{0.5em} 
  \begin{subfigure}[t]{0.3\linewidth}
    \centering
    \includegraphics[width=\linewidth, height=1.5\linewidth]{figs/celeba_no_noise_small.pdf}
    \caption{No noise}
  \end{subfigure}\hfill
  \begin{subfigure}[t]{0.3\linewidth}
    \centering
    \includegraphics[width=\linewidth, height=1.5\linewidth]{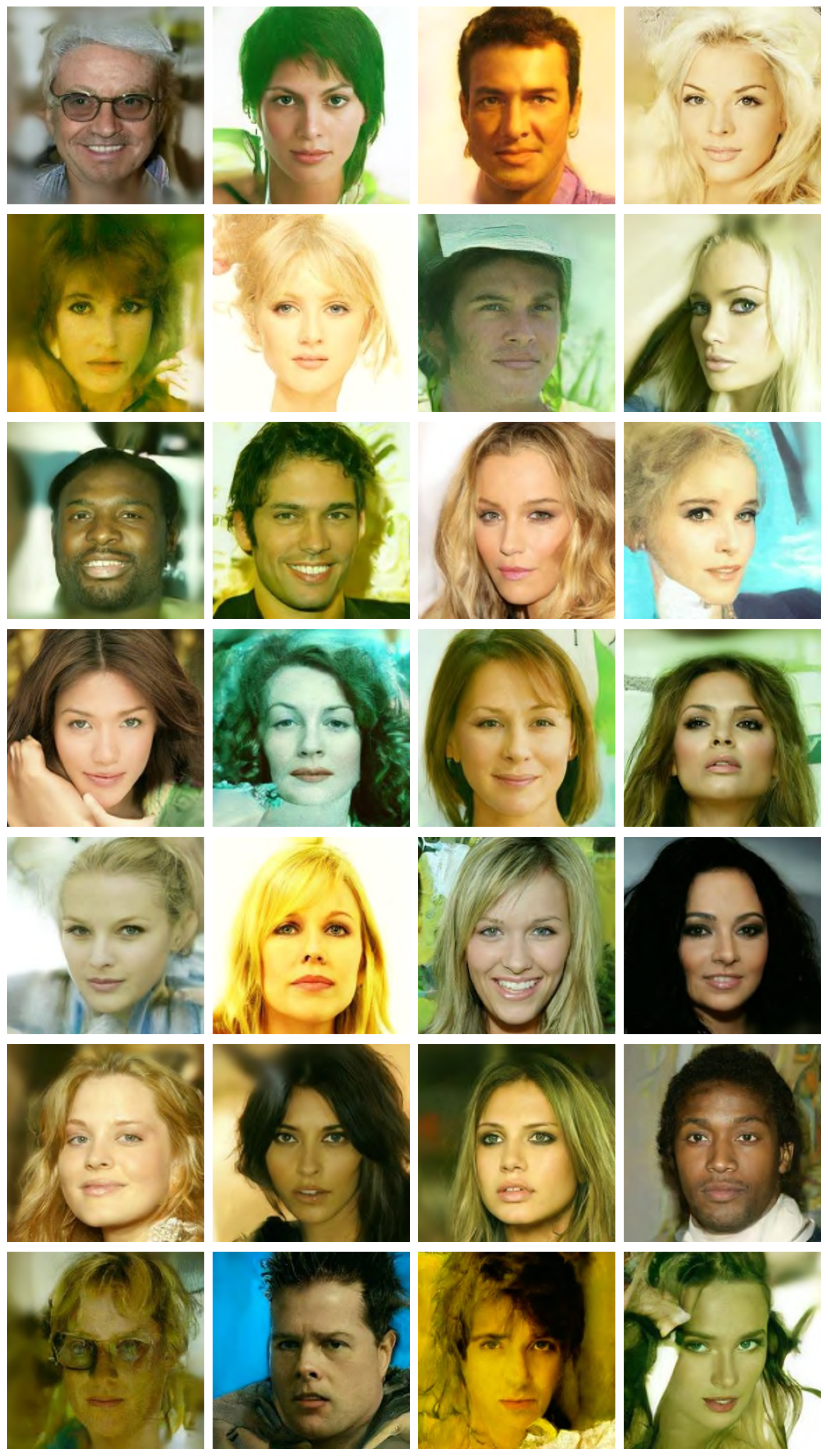}
    \caption{Student's $t$, $\sigma=0.5$}
  \end{subfigure}\hfill
  \begin{subfigure}[t]{0.3\linewidth}
    \centering
    \includegraphics[width=\linewidth, height=1.5\linewidth]{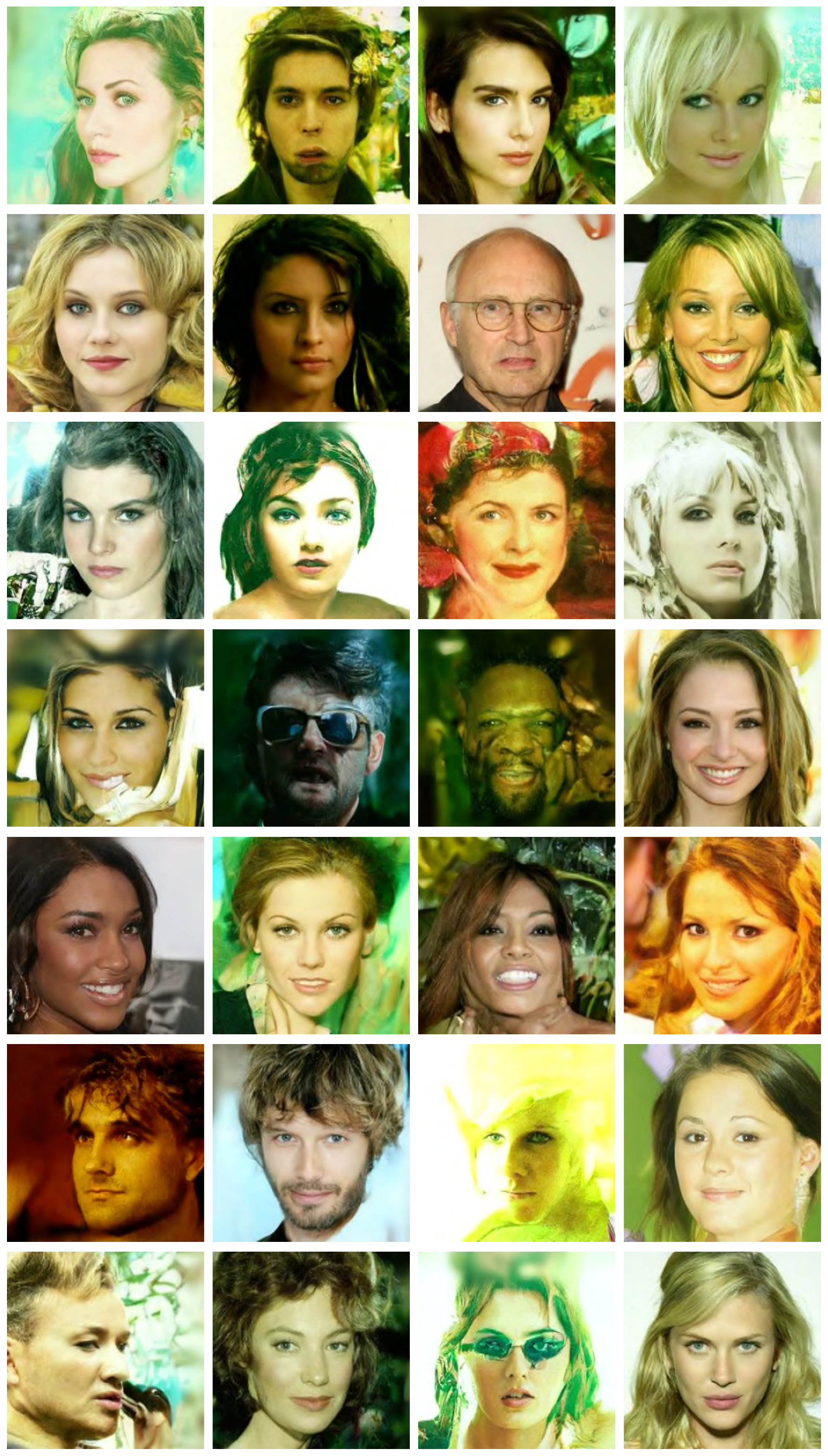}
    \caption{Student's $t$, $\sigma=1$}
  \end{subfigure}

  \caption{Additional CelebA-HQ generations for 3 noise families (rows) and 2 noise levels (columns).}
  \label{fig:qual_celeba}
\end{figure}

\end{document}